\pgfplotsset{compat=1.15}
\Crefname{algocf}{Algorithm}{Algorithms}
\def\0{{\mathbf 0}}
\def\1{{\mathbf 1}}
\def\ie{\textit{i.e.},\xspace}
\def\eg{\textit{e.g.},\xspace}
\def\etal{\textit{et al.}\xspace }
\newkmacro\sign[1][{\cdot}]{\mathrm{sign}\left({#1}\right)}
\newkmacro\interior[1][{\cdot}]{\mathrm{int}\left({#1}\right)}
\newkmacro\bd[1][{\cdot}]{\mathrm{bd}\left({#1}\right)}
\newkmacro\conv[1][{\cdot}]{\mathrm{conv}\left({#1}\right)}
\newkmacro\card[1][{\cdot}]{\mathrm{card}\left({#1}\right)}
\newkmacro\spa[1][{\cdot}]{\mathrm{span}\left({#1}\right)}
\newkmacro\nul[1][{\cdot}]{\mathrm{null}\left({#1}\right)}
\newkmacro\deter[1][{\cdot}]{\mathrm{det}\left({#1}\right)}
\newkmacro\tr[1][{\cdot}]{\mathrm{tr}\left({#1}\right)}
\xdef\csname bf\x \endcsname{\noexpand\ensuremath{\noexpand\mathbf{\x}}}
\xdef\csname bs\x \endcsname{\noexpand\ensuremath{\noexpand\boldsymbol{\x}}}
\xdef\csname bs\x \endcsname{\noexpand\ensuremath{\noexpand\boldsymbol{\x}}}
\xdef\csname bf\x \endcsname{\noexpand\ensuremath{\noexpand\mathbf{\x}}}
\xdef\csname bb\x \endcsname{\noexpand\ensuremath{\noexpand\mathbb{\x}}}
\xdef\csname ds\x \endcsname{\noexpand\ensuremath{\noexpand\mathds{\x}}}
\xdef\csname cal\x \endcsname{\noexpand\ensuremath{\noexpand\mathcal{\x}}}
\newcommand{\entry}[1]{(#1)}
\newcommand{\sortedentry}[1]{[#1]}
\newcommand{\column}[1]{#1}
\newcommand{\adjustedvbar}{\big\vert}
\newcommand{\obsletter}{y}
\newcommand{\obs}{\mathbf{\obsletter}}
\def\obsdim{{m}}
\def\tarvec{{\bfx}}
\def\tarvecdim{{n}}
\newcommand{\idxtarel}{j}
\newcommand{\idxtarelsorted}{k}
\newcommand{\idxps}{q}		
\newcommand{\idximp}{p}     
\newcommand{\dicomat}{\bfA}
\newcommand{\atom}{\bfa}
\def\primfun{{P}}								
\def\dualfun{{D}}								
\def\regfun{{r}}								
\def\regslope{{\regfun}_{\textnormal{\scshape slope}}}		
\def\regoscar{{\regfun}_{\textnormal{\scshape oscar}}}		
\newcommand{\dualregslope}{{\regfun}_{\textnormal{\scshape slope},*}}		
\def\slopeweight{{\gamma}}						
\def\pvletter{{x}} 								
\def\pv{{\mathbf{\pvletter}}}					
\def\pvred{{\mathbf{z}}}						
\def\pvdim{{n}}									
\def\pvopt{\pv^\star} 							
\def\dvletter{{u}}								
\def\dv{{\mathbf{\dvletter}}}					
\def\dvopt{{\dv^\star}}							
\def\dfset{{\mathcal{U}}}						
\newcommand{\subdiffletter}{g}					  
\newcommand{\subdiffvec}{\mathbf{\subdiffletter}} 
\newcommand{\saferegion}{\calR}
\newcommand{\idxscreen}{\ell}
\newcommand{\setscreen}{\calL}
\newcommand{\nscreen}{L}      
\newcommand{\sphereregion}{\calS}
\newcommand{\spherec}{\mathbf{c}}
\newcommand{\spherer}{R}
\newcommand{\screenthres}{\kappa}
\newcommand{\upperbound}{B}
\newcommand{\ninequ}{T}
\newcommand{\setinequ}{\mathcal{T}}
\newcommand{\thresinequ}{\tau}
\newcommand{\idxinequ}{t}
\newcommand{\screeningboundfunc}{g}
\newcommand{\atomicSet}{\mathcal{V}}
\newcommand{\atomicsignvec}{\bfs}
\newcommand{\idxit}{t}
\newcommand{\intervint}[2]{\llbracket{#1,#2\rrbracket}}
\newcommand{\reffunctext}[1]{\(#1\)}
\newcommand{\slopeoscar}{\slopeweight}
\newcommand{\oscarparam}{\beta}
\newcommand{\oscar}[1]{\texttt{OSCAR-#1}}
\newcommand{\testA}{{\texttt{test-\idximp=1}}}
\newcommand{\testB}{{\texttt{test-\idximp=\idxps}}}
\newcommand{\testC}{{\texttt{test-all}}}
\newcommand{\algoSlope}{\texttt{PG-no}}
\newcommand{\algoSlopeBao}{\texttt{PG-Bao}}
\newcommand{\algoSlopeScreening}{\texttt{PG-\idximp=\idxps}}
\newcommand{\algoSlopeScreeningp}{\texttt{PG-all}}
\newcommand{\xpnbrep}{50}
\newcommand{\xpm}{100}
\newcommand{\xpn}{300}
\newcommand{\nbRun}{I}
\newcommand{\idRun}{i}
\newcommand{\idSolver}{\texttt{solv}}
\newcommand{\xpgapvar}{\delta}
\newcommand{\xpscreeningitparam}{20}
\crefname{hypothesis}{Hypothesis}{Hypotheses}
\title{
	Safe rules for the identification of zeros \\ in the solutions of the SLOPE problem\thanks{
		Submitted to the editors DATE. \\
		The research presented in this paper is reproducible. Code and data are
		available at~\url{https://gitlab-research.centralesupelec.fr/2020elvirac/slope-screening}
	}
}
\author{Clément Elvira\thanks{
		IETR UMR CNRS 6164, CentraleSupelec Rennes Campus, 35576 Cesson Sévigné, France
		(\email{clement.elvira@centralesupelec.fr}, \url{https://c-elvira.github.io/}).}
	\and Cédric Herzet\thanks{Inria centre Rennes - Bretagne Atlantique, Rennes, France
		(\email{cedric.herzet@inria.fr}, \url{http://people.rennes.inria.fr/Cedric.Herzet/}).}
}
\providecommand*{\fullref}[1]{\hyperref[{#1}]{\cref*{#1}. \nameref*{#1}}}
\providecommand*{\Fullref}[1]{\hyperref[{#1}]{\Cref*{#1}. \nameref*{#1}}}
\begin{document}

\ifthenelse{\boolean{cameraready}}{
}{
	\printindex
}

\maketitle

\begin{abstract}
	In this paper we propose a methodology to accelerate the resolution of the so-called ``Sorted L-One Penalized Estimation'' (SLOPE) problem.
	Our method leverages the concept of ``safe screening'', well-studied in the literature for \textit{group-separable} sparsity-inducing norms, and aims at identifying the zeros in the solution of SLOPE.
	More specifically, we derive a set of \(\tfrac{\pvdim(\pvdim+1)}{2}\) inequalities for each element of the \(\pvdim\)-dimensional primal vector and prove that the latter can be safely screened if some subsets of these inequalities are verified.
	We propose moreover an efficient algorithm to jointly apply the proposed procedure to all the primal variables.
	Our procedure has a complexity \(\mathcal{O}(\tarvecdim\log\tarvecdim+\nscreen\ninequ)\) where \(\ninequ\leq \tarvecdim\) is a problem-dependent constant and \(\nscreen\) is the number of zeros identified by the test.
	Numerical experiments confirm that, for a prescribed computational budget, the proposed methodology leads to significant improvements of the solving precision.
\end{abstract}

\begin{keywords}
	SLOPE, safe screening, acceleration techniques, convex optimization
\end{keywords}

\begin{AMS}
	68Q25, 68U05
\end{AMS}


\section{Introduction}

During the last decades, sparse linear regression has attracted much attention in the field of statistics, machine learning and inverse problems.
It consists in finding an approximation of some input vector \(\obs\in\kR^\obsdim\) as the linear combination of a few columns of a matrix \(\dicomat\in\kR^{\obsdim\times\tarvecdim}\) (often called dictionary).
Unfortunately, the general form of this problem is NP-hard and convex relaxations have been proposed in the literature to circumvent this issue.
The most popular instance of convex relaxation for sparse linear regression is undoubtedly the so-called ``LASSO'' problem where the coefficients of the regression are penalized by an \(\ell_1\) norm, see~\cite{Chen_siam99}.
Generalized versions of LASSO have also been introduced to account for some possible structure in the pattern of the nonzero coefficients of the regression, see~\cite{Bach:2012re}.

In this paper,  we focus on the following generalization of LASSO:
\begin{equation}
	\label{eq:primal problem}
	\min_{\scriptstyle\tarvec\in\kR^\tarvecdim} \,
	\primfun(\tarvec)
	\triangleq
	\tfrac{1}{2} \kvvbar{\obs - \dicomat\tarvec}_2^2
	+ \lambda\, \regslope(\tarvec),
	\quad \lambda>0
\end{equation}
where
\begin{equation}
	\label{eq:def SLOPE reg}
	\regslope(\tarvec)
	\triangleq
	\sum_{\idxtarelsorted=1}^{\pvdim} \slopeweight_\idxtarelsorted |\pv|_{\sortedentry{\idxtarelsorted}}
\end{equation}
with
\begin{equation}
	\label{eq:hyp slopeweigths}
	\begin{array}{ccc}
		\slopeweight_1 > 0, &  & \slopeweight_1 \geq \dots \geq \slopeweight_\tarvecdim \geq 0,
	\end{array}
\end{equation}
and \(|\tarvec|_{\sortedentry{\idxtarelsorted}}\) is the \(\idxtarelsorted\)th largest element of \(\tarvec\) in absolute value, that is
\begin{equation}
	\forall \tarvec\in\kR^\tarvecdim:\ |\tarvec|_{\sortedentry{1}}\geq |\tarvec|_{\sortedentry{2}} \geq \ldots \geq |\tarvec|_{\sortedentry{\tarvecdim}}
	.
\end{equation}

Problem~\eqref{eq:primal problem} is commonly referred to as \textit{``Sorted L-One Penalized Estimation'' (SLOPE)}  or \textit{``Ordered Weighted L-One Linear Regression''} in the literature and has been introduced in two parallel works \cite{Bogdan2015,Figueiredo2016Ordered}.\footnote{We will stick to the former denomination in the following.}
The first instance of a problem of the form~\eqref{eq:primal problem} (for some nontrivial choice of the parameters \(\slopeweight_\idxtarelsorted\)'s) is due to Bondell and Reich in \cite{Bondell2007}.
The authors considered a problem similar to \eqref{eq:primal problem}, named ``Octagonal Shrinkage and Clustering Algorithm for Regression'' (OSCAR), where the regularization function is a linear combination of an \(\ell_1\) norm and a sum of pairwise \(\ell_\infty\) norms of the elements of \(\tarvec\), that is
\begin{equation}
	\label{eq:def regularzation oscar}
	\regoscar(\tarvec) = \oscarparam_1 \|\tarvec\|_1 + \oscarparam_2 \sum_{\idxtarel' > \idxtarel} \max(|\pv_{\entry{\idxtarel'}}|, |\pv_{\entry{\idxtarel}}|),
\end{equation}
for some \(\oscarparam_1\in\kR*+\), \(\oscarparam_2\in\kR+\).
It is not difficult to see that \reffunctext{\regoscar} can be expressed as a particular case of \reffunctext{\regslope} with the following choice \(\slopeoscar_\idxtarelsorted = \oscarparam_1 + \oscarparam_2 (\tarvecdim-\idxtarelsorted)\).
We note that some authors have recently considered ``group'' versions of the SLOPE problem where the ordered \(\ell_2\) norm of subsets of \(\tarvec\) is penalized by a decreasing sequence of parameters \(\slopeweight_\idxtarelsorted\), see \eg \cite{Grossmann:2015Iden,Gossmann2018Sparse,Brzyski2019dq}.

SLOPE enjoys several desirable properties which have attracted many researchers during the last decade.
First, it was shown in several works that, for some proper choices of parameters \(\slopeweight_\idxtarelsorted\)'s, SLOPE promotes \textit{sparse} solutions with some form of \textit{``clustering''}\footnote{More specifically, groups of nonzero coefficients tend to take on the same value.} of the nonzero coefficients, see \eg \cite{Bondell2007,Figueiredo2016Ordered,Kremer2019fi,schneider2020geometry}.
This feature has been exploited in many application domains:  portfolio optimization \cite{Xing2014lu,Kremer2020hi}, genetics \cite{Grossmann:2015Iden}, magnetic-resonance imaging \cite{elgueddari:hal-02292372}, subspace clustering \cite{Oswal2018Scalable},  deep neural networks \cite{Zhang2018learning}, etc.
Moreover, it has been pointed out in a series of works that SLOPE has very good statistical properties:
it leads to an improvement of the false detection rate (as compared to LASSO)  for moderately-correlated dictionaries \cite{Bogdan2013statistical,Gossmann2018Sparse} and is minimax optimal in some asymptotic regimes, see \cite{Su2016jh,Lecue2017reg}.

Another desirable feature of SLOPE is its convexity.
In particular, it was shown in \cite[Proposition 1.1]{Bogdan2013statistical} and \cite[Lemma 2]{Zeng:2014AtomicNorm} that \reffunctext{\regslope} is a norm as soon as \eqref{eq:hyp slopeweigths} holds.
As a consequence, several numerical procedures have been proposed in the literature to find the global minimizer(s) of problem \eqref{eq:primal problem}.
In \cite{Bogdan2013statistical} and \cite{Zhong2011icml}, the authors considered an accelerated  gradient proximal implementation for SLOPE and OSCAR, respectively.
In \cite{Kremer2020hi}, the authors tackled problem \eqref{eq:primal problem} via an alternating-direction method of multipliers \cite{Boyd2017}.
An approach based on an augmented Lagrangian method was considered in \cite{Luo2019kr}.
In \cite{Zeng:2014AtomicNorm}, the authors expressed \reffunctext{\regslope} as an atomic norm and particularized a Frank-Wolfe minimization procedure \cite{Frank_1956} to problem \eqref{eq:primal problem}.
An efficient algorithm to compute the Euclidean projection onto the unit ball of the SLOPE norm was provided in \cite{Davis2015onlogn}.
Finally, in \cite{Bu2019Algo} a heuristic ``message-passing'' method was proposed.

In this paper, we introduce a new ``safe screening'' procedure to accelerate the resolution of SLOPE.
The concept of ``safe screening'' is well known in the LASSO literature:
it consists in performing simple tests to identify the zero elements of the minimizers;
this knowledge can then be exploited to reduce the problem dimensionality by discarding the columns of the dictionary weighted by the zero coefficients.
Safe screening for LASSO
has been first introduced by El Ghaoui \etal in the seminal paper \cite{Ghaoui2010} and extended to \textit{group-separable} sparsity-inducing norm in \cite{Ndiaye2017}.
Safe screening has rapidly been recognized as a simple yet effective procedure to accelerate the resolution of LASSO, see \eg \cite{fercoq2015, Dai2012Ellipsoid, Xiang:2017ty, icml2014c2_liuc14,wang2015lasso, Herzet16Screening,Herzet:2019fj,guyard2021screen,Le2022HolderDomeTechreport}.
The term ``safe''  refers to the fact that all the elements identified by a safe screening procedure are theoretically guaranteed to correspond to zeros of the minimizers.
In contrast, \textit{unsafe} versions of screening for LASSO (often called ``strong screening rules'') also exist, see \cite{RSSB:RSSB1004}.
More recently, screening methodologies have been extended to detect saturated components in different convex optimization problems, see \cite{Elvira:2020rv,Elvira2020:Squeezing}.

In this paper, we derive \textit{safe} screening rules for SLOPE and emphasize that their implementation enables significant improvements of the solving precision when addressing SLOPE with a prescribed computational budget.
We note that the SLOPE norm is not group-separable and the methodology proposed in \cite{Ndiaye2017} does therefore not trivially apply here.
Prior to this work, we identified two contributions addressing screening for SLOPE.
In~\cite{Larsson2020strong}, the authors proposed an extension of the \textit{strong} screening rules derived in~\cite{RSSB:RSSB1004} to the SLOPE problem.
In~\cite{Bao:2020dq}, the authors suggested a simple test to identify some zeros of the SLOPE solutions.
Although the derivations made by these authors have been shown to contain several technical flaws~\cite{Elvira2021techreport}, their test can be cast as a particular case of our result in \Cref{th: safe screening for SLOPE} (and is therefore quite unexpectedly safe).

The paper is organized as follows. We introduce the notational conventions used throughout the paper in \Cref{sec:notations} and recall the main concepts of safe screening for LASSO in \Cref{sec:Screening: main concepts}.
\Cref{sec:screening-rule} contains our proposed safe screening rules for SLOPE.
\Cref{sec:simus} illustrates the effectiveness of the proposed approach through numerical simulations.
All technical details and mathematical derivations are postponed to \Cref{app:Miscellaneous results,sec:app:main-proofs}.\\


\section{Notations} \label{sec:notations}
Unless otherwise specified, we will use the following conventions throughout the paper.
Vectors are denoted by lowercase bold letters (\eg \(\tarvec\)) and matrices by uppercase bold letters (\eg \(\dicomat\)).
The ``all-zero'' vector of dimension \(\tarvecdim\) is written \({\bf0}_\tarvecdim\). We use symbol \(\ktranspose{}\) to denote the transpose of a vector or a matrix.
\(\tarvec_{\entry{\idxtarel}}\) refers to the \(\idxtarel\)th component of \(\tarvec\).
When referring to the sorted entries of a vector, we use bracket subscripts; more precisely, the notation \(\pv_{\sortedentry{\idxtarelsorted}}\) refers to the \(\idxtarelsorted\)th largest value of \(\pv\).
For matrices, we use \(\atom_{\column{\idxtarel}}\) to denote the \(\idxtarel\)th column of \(\dicomat\).
We use the notation \(|\tarvec|\) to denote the vector made up of the absolute value of the components of \(\tarvec\).
The sign function is defined for all scalars \(x\) as \(\sign[x]=x / \kvbar{x}\) with the convention \(\sign[x]=0\).
%
Calligraphic letters are used to denote sets (\eg \(\calJ\)) and \(\card\) refers to their cardinality.
If \(a<b\) are two integers, \(\intervint{a}{b}\) is used as a shorthand notation for the set \(\{a, a+1,\dotsc,b\}\).
%
Given a vector \(\tarvec\in\kR^{\tarvecdim}\) and a set of indices \(\calJ\subseteq\intervint{1}{\tarvecdim}\), we let \(\tarvec_\calJ\) be the vector of components of \(\tarvec\) with indices in \(\calJ\).
Similarly, \(\dicomat_\calJ\) denotes the submatrix of \(\dicomat\) whose columns have indices in \(\calJ\). \({\dicomat}_{\backslash \idxscreen}\) corresponds to matrix \(\dicomat\) deprived of its \(\ell\)th column.\\

\section{Screening: main concepts}\label{sec:Screening: main concepts}

``Safe screening'' has been introduced by El Ghaoui \etal in \cite{Ghaoui2010} for \(\ell_1\)-penalized problems:
\begin{equation}\label{eq:Ghaoui_GenericProblem}
	\min_{\scriptstyle\tarvec\in\kR^\tarvecdim} \,
	\primfun(\tarvec)
	\triangleq
	f(\dicomat\tarvec)
	+ \lambda\, \|\tarvec\|_1,
	\quad \lambda>0
\end{equation}
where \(\kfuncdef{f}{\kR^\obsdim}{\kR}\) is a closed convex function.
It is grounded on the following ideas.

First, it is well-known that \(\ell_1\)-regularization favors sparsity of the minimizers of \eqref{eq:Ghaoui_GenericProblem}.
For instance, if \(f=\tfrac{1}{2}\|\cdot\|_2^2\) and the solution of \eqref{eq:Ghaoui_GenericProblem} is unique, it can be shown that the minimizer contains at most \(\obsdim\) nonzero coefficients, see \eg \cite[Theorem 3.1]{Foucart2013Mathematical}.
Second, if some zeros of the minimizers are identified, \eqref{eq:Ghaoui_GenericProblem} can be shown to be equivalent to a problem of \textit{reduced} dimension.
More precisely, let \(\setscreen\subseteq\intervint{1}{\tarvecdim}\) be a set of indices such that we have for any minimizer \(\pvopt\) of \eqref{eq:Ghaoui_GenericProblem}:
\begin{equation} \label{eq:hyp zero set}
	\forall \idxscreen\in\setscreen:\ \pvopt_{\entry{\idxscreen}} = 0
\end{equation}
and let \(\bar{\setscreen}=\intervint{1}{\tarvecdim}\backslash \setscreen\).
Then the following problem
\begin{equation}\label{eq:Ghaoui_GenericProblem_reduced}
	\min_{\pvred\in\kR^{\card[\bar{\setscreen}]}} \,
	f(\dicomat_{\bar{\setscreen}}\pvred)
	+ \lambda\, \|\pvred\|_1,
	\quad \lambda>0
\end{equation}
admits the same optimal value as \eqref{eq:Ghaoui_GenericProblem}
and there exists a simple bijection between the minimizers of \eqref{eq:Ghaoui_GenericProblem} and \eqref{eq:Ghaoui_GenericProblem_reduced}.
We note that \(\pv\) belongs to an \(\tarvecdim\)-dimensional space whereas \(\pvred\) is a \(\mathrm{card}(\bar{\setscreen})\)-dimensional vector.
Hence, solving \eqref{eq:Ghaoui_GenericProblem_reduced} rather than \eqref{eq:Ghaoui_GenericProblem} may lead to dramatic memory and computational savings if \(\mathrm{card}(\bar{\setscreen})\ll\pvdim\).

The crux of screening consists therefore in identifying (some) zeros of the minimizers of \eqref{eq:Ghaoui_GenericProblem} with marginal cost.
El Ghaoui \etal emphasized that this is possible by relaxing some primal-dual optimality condition of problem \eqref{eq:Ghaoui_GenericProblem}.
More precisely, let
\begin{equation}\label{eq:dual lasso f*}
	\dvopt \in
	\kargmax_{\dv \in\kR^\obsdim}\;
	\dualfun(\dv)
	\triangleq
	-f^*(-\dv)
	\quad
	\mbox{\(\mathrm{s.t.} \quad \|\ktranspose{\dicomat}\dv\|_\infty\leq \lambda\)}
\end{equation}
be the dual problem of \eqref{eq:Ghaoui_GenericProblem}, where \reffunctext{f^*} denotes the Fenchel conjugate.
Then, by complementary slackness, we must have for any minimizer \(\pvopt\) of \eqref{eq:Ghaoui_GenericProblem}:
\begin{equation}\label{eq:ideal screening Lasso from KKT}
	\forall \idxscreen\in\intervint{1}{\tarvecdim}:\
	(|\ktranspose{\atom}_{\column{\idxscreen}}\dvopt|-\lambda)\,\pvopt_{\entry{\idxscreen}}=0.
\end{equation}
Since dual feasibility imposes that \(|\ktranspose{\atom}_{\column{\idxscreen}}\dvopt|\leq \lambda\), we obtain the following implication:
\begin{equation}\label{eq:ideal screening Lasso}
	|\ktranspose{\atom}_{\column{\idxscreen}}\dvopt|<\lambda
	\implies
	\pvopt_{\entry{\idxscreen}}=0
	.
\end{equation}
Hence, if \(\dvopt\) is available, the left-hand side of \eqref{eq:ideal screening Lasso} can be used  to detect if the \(\idxscreen\)th component of \(\pvopt\) is equal to zero.

Unfortunately, finding a maximizer of dual problem \eqref{eq:dual lasso f*} is generally as difficult as solving primal problem \eqref{eq:Ghaoui_GenericProblem}.
This issue can nevertheless be circumvented by identifying some region \(\saferegion\) of the dual space (commonly referred to as \textit{``safe region''}) such that \(\dvopt\in\saferegion\).
Indeed, since
\begin{equation}\label{eq:relaxed screening}
	\max_{\dv\in\saferegion} \ |\ktranspose{\atom}_{\column{\idxscreen}}\dv|
	<
	\lambda
	\implies
	|\ktranspose{\atom}_{\column{\idxscreen}}\dvopt|
	<
	\lambda
	,
\end{equation}
the left-hand side of  \eqref{eq:relaxed screening} constitutes an alternative (weaker) test to detect the zeros of \(\pvopt\).
For proper choices of \(\saferegion\), the maximization over \(\dv\) admits a simple analytical solution.
For example, if \(\saferegion\) is a ball, that is
\begin{equation}\label{eq:definition  sphere}
	\saferegion
	=
	\sphereregion(\spherec,\spherer)
	\triangleq
	\kset{\dv\in\kR^\obsdim}{\|\dv-\spherec\|_2\leq \spherer},
\end{equation}
then \(\max_{\dv\in\saferegion} |\ktranspose{\atom}_{\column{\idxscreen}}\dv| = |\ktranspose{\atom}_{\column{\idxscreen}}\spherec| +\spherer\kvvbar{\atom_{\column{\idxscreen}}}_2\) and the relaxation of \eqref{eq:relaxed screening} leads to
\begin{equation}\label{eq:sphere screening Lasso}
	|\ktranspose{\atom}_{\column{\idxscreen}}\spherec|
	<
	\lambda - \spherer \kvvbar{\atom_{\column{\idxscreen}}}_2 \implies \pvopt_{\entry{\idxscreen}}=0
	.
\end{equation}
In this case, the screening test is straightforward to implement since it only requires the  evaluation of one inner product between \(\atom_{\column{\idxscreen}}\) and \(\spherec\).\footnote{We note that the \(\ell_2\)-norm appearing in the expression of the test is usually considered as ``known'' since it can be evaluated offline.
}

Many procedures have been proposed in the literature to construct safe spheres \cite{NIPS2011_0578,fercoq2015,Ndiaye2017} or safe regions with refined geometries \cite{Xiang2012Fast,Xiang:2017ty,Dai2012Ellipsoid,Le2022HolderDomeTechreport}.
If \reffunctext{f^*} is a \(\zeta\)-strongly convex function, a popular approach to construct a safe region is the so-called ``GAP sphere'' \cite{Ndiaye2017} whose center and radius are defined as follows:
\begin{equation}
	\label{eq:gap sphere}
	\begin{array}{ll}
		\spherec & = \dv                                                  \\
		\spherer & = \sqrt{\tfrac{2}{\zeta}(\primfun(\pv)-\dualfun(\dv))}
	\end{array}
\end{equation}
where \((\pv,\dv)\) is any primal-dual feasible couple.
This approach has gained in popularity because of its good behavior when \((\pv,\dv)\) is close to optimality.
In particular, if \reffunctext{f} is proper lower semi-continuous, \(\pv=\pvopt\) and \(\dv=\dvopt\), then \(\primfun(\pv)-\dualfun(\dv)=0\) by strong duality~\cite[Proposition~15.22]{Bauschke2017}.
In this case, screening test~\eqref{eq:sphere screening Lasso} reduces to~\eqref{eq:ideal screening Lasso} and, except in some degenerated cases, all the zero components of \(\pvopt\) can be identified by the screening test.
Interestingly, this behavior also provably occurs for sufficiently small values of the dual gap~\cite[Propositions~8 and~9]{Ndiaye2021Converging} and has been observed in many numerical experiments, see \eg \cite{fercoq2015,Ndiaye2017,Herzet:2019fj,Elvira2020:Squeezing}.


As a final remark, let us mention that the framework presented in this section extends to optimization problems where the (sparsity-promoting) penalty function describes a group-separable norm, see \textit{e.g.},~\cite{Ndiaye2017,Dantas2021}.
In particular, the complementary slackness condition~\eqref{eq:ideal screening Lasso from KKT} still holds (up to a minor modification), thus allowing to design safe screening tests based on the same rationale.
We note that, since the SLOPE penalization does not feature such a separability property, the methodology presented in this section does unfortunately not apply.
%
\\


\section{Safe screening rules for SLOPE} \label{sec:screening-rule}
In this section, we propose a new procedure to extend the concept of safe screening to SLOPE.
Our exposition is organized as follows.
In \Cref{sec:working hyps} we describe our working assumptions and in \Cref{sec:SLOPE screening main} we present a family of screening tests for SLOPE
(see \Cref{th: safe screening for SLOPE}).
Each test is defined by a set of parameters \(\{\idximp_\idxps\}_{\idxps\in\intervint{1}{\tarvecdim}}\) and takes the form of a series of inequalities.
We show that a simple test of the form \eqref{eq:sphere screening Lasso} can be recovered for some particular values of the parameters \(\{\idximp_\idxps\}_{\idxps\in\intervint{1}{\tarvecdim}}\), although this choice does not correspond to the most effective test in the general case.
In \Cref{sec:implementation SLOPE screening}, we finally propose an efficient numerical procedure to verify simultaneously \textit{all} the proposed screening tests.\\

\subsection{Working hypotheses}\label{sec:working hyps}

In this section, we present two working assumptions which are assumed to hold in the rest of the paper even when not explicitly mentioned.

We first suppose that the regularization parameter \(\lambda\) satisfies
\begin{equation}
	\label{eq:lambda-such-that-0-is-sol}
	0<\lambda < \lambda_{\max} \triangleq \max_{\idxps\in\intervint{1}{\tarvecdim}}\kparen{\sum_{\idxtarelsorted=1}^\idxps \adjustedvbar\ktranspose{\dicomat}\obs\adjustedvbar_{\sortedentry{\idxtarelsorted}}/ \sum_{\idxtarelsorted=1}^\idxps \slopeweight_\idxtarelsorted}
	.
\end{equation}
In particular, the hypothesis \(\lambda_{\max}>0\) is tantamount to assuming that \(\obs\notin\ker(\ktranspose{\dicomat})\).
On the other hand, \(\lambda < \lambda_{\max}\) prevents the vector \(\0_\tarvecdim\) from being a minimizer of the SLOPE problem~\eqref{eq:primal problem}.
More precisely, it can be shown that under condition~\eqref{eq:hyp slopeweigths},
\begin{equation}
	\label{eq:cns solution slope is zero}
	\text{\(\lambda\) and \(\kfamily{\slopeweight_\idxtarelsorted}{\idxtarelsorted=1}^\tarvecdim\) verify~\eqref{eq:lambda-such-that-0-is-sol}}
	\Longleftrightarrow
	\text{\({\bf0}_\tarvecdim\) is not a minimizer of~\eqref{eq:primal problem}.}
\end{equation}
A proof of this result is provided in \Cref{subsec:app:proof cns solutions slope is zero}.

Second, we assume that the columns of the dictionary \(\dicomat\) are unit-norm, \ie
\begin{equation}
	\label{eq:assumption:atom unit norm}
	\forall\idxtarel\in\intervint{1}{\tarvecdim}:
	\quad
	\kvvbar{\atom_{\column{\idxtarel}}}_2 = 1.
\end{equation}
Assumption~\eqref{eq:assumption:atom unit norm} simplifies the statement of our results in the next subsection.
However, all our subsequent derivations can be easily extended to the general case where \eqref{eq:assumption:atom unit norm} does not hold.\\

\subsection{Safe screening rules} \label{sec:SLOPE screening main}

In this section, we derive a family of safe screening rules for SLOPE.

Let us first note that \eqref{eq:primal problem} admits at least one minimizer and our screening problem is therefore well-posed.
Indeed, the primal cost function in \eqref{eq:primal problem} is continuous and coercive since \reffunctext{\regslope} is a norm (see \eg \cite[Proposition 1.1]{Bogdan2013statistical} or \cite[Lemma 2]{Zeng:2014AtomicNorm}); the existence of a minimizer then follows from Weierstrass theorem~\cite[Theorem~1.29]{Bauschke2017}.
In the following, we will assume that the minimizer is unique to simplify our statements.
Nevertheless, all our results extend to the general case where there exist more than one minimizer by replacing ``\(\pvopt_{\entry{\idxscreen}}=0\)'' by ``\(\pvopt_{\entry{\idxscreen}}=0\ \mbox{for any minimizer of \eqref{eq:primal problem}}\)'' in all our subsequent statements.

Our starting point to derive our safe screening rules is the following primal-dual optimality condition:
\begin{theorem}
	\label{th:safe-screening}
	Let
	\begin{equation}
		\label{eq:dual problem}
		\dvopt
		=
		\kargmax_{\dv\in\dfset}\;
		\dualfun(\dv)
		\triangleq
		\tfrac{1}{2}\|\obs\|_2^2 - \tfrac{1}{2}\|\obs - \dv\|_2^2
		,
	\end{equation}
	where
	\begin{equation}
		\label{eq:dual set}
		\dfset = \kset{\dv}{\sum_{\idxtarelsorted=1}^\idxps \adjustedvbar\ktranspose{\dicomat}\dv\adjustedvbar_{\sortedentry{\idxtarelsorted}}\leq \lambda \sum_{\idxtarelsorted=1}^\idxps \slopeweight_\idxtarelsorted,\,\idxps\in\intervint{1}{\tarvecdim}}.
	\end{equation}
	Then, for all integers \(\idxscreen\in\intervint{1}{\tarvecdim}\):
	\begin{equation}
		\label{eq: ideal safe screening test}
		\forall \idxps\in\intervint{1}{\tarvecdim}:
		\  \adjustedvbar\ktranspose{\atom}_{\column{\idxscreen}}\dvopt\adjustedvbar + \sum_{\idxtarelsorted=1}^{\idxps-1} \adjustedvbar\ktranspose{\dicomat}_{\backslash \idxscreen}\dvopt\adjustedvbar_{\sortedentry{\idxtarelsorted}}
		<
		\lambda {\sum_{\idxtarelsorted=1}^{\idxps} \slopeweight_\idxtarelsorted}
		\implies \pv^{\star}_{\entry{\idxscreen}} = 0
		.
	\end{equation}
\end{theorem}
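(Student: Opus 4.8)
The plan is to derive \Cref{th:safe-screening} from convex (Fenchel--Rockafellar) duality for~\eqref{eq:primal problem}. I would first check that~\eqref{eq:dual problem}--\eqref{eq:dual set} is the dual of~\eqref{eq:primal problem}: the Fenchel conjugate of $\tarvec\mapsto\tfrac12\kvvbar{\obs-\dicomat\tarvec}_2^2$ produces the quadratic dual objective $\dualfun$, and the dual feasible set is $\{\dv:\regfundualnorm(\ktranspose{\dicomat}\dv)\le\lambda\}$; since $\regfundualnorm$, the polar of the SLOPE norm, satisfies $\regfundualnorm(\ktranspose{\dicomat}\dv)=\max_{\idxps\in\intervint{1}{\tarvecdim}}\kparen{\sum_{\idxtarelsorted=1}^{\idxps}\adjustedvbar\ktranspose{\dicomat}\dv\adjustedvbar_{\sortedentry{\idxtarelsorted}}/\sum_{\idxtarelsorted=1}^{\idxps}\slopeweight_{\idxtarelsorted}}$, this feasible set is exactly $\dfset$. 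Since the primal cost is coercive and lower semicontinuous, strong duality holds, the maximizer $\dvopt$ is unique (as $\dualfun$ is strongly concave), and the optimality relations $\dvopt=\obs-\dicomat\pvopt$ and $\ktranspose{\dicomat}\dvopt\in\lambda\,\partial\regslope(\pvopt)$ hold. If $\pvopt=\0_\tarvecdim$ the conclusion of~\eqref{eq: ideal safe screening test} is vacuously true (and this case is excluded anyway by~\eqref{eq:lambda-such-that-0-is-sol} via~\eqref{eq:cns solution slope is zero}), so I may assume $\pvopt\neq\0_\tarvecdim$; then, by the Fenchel--Young identity, the inclusion above amounts to dual feasibility $\dvopt\in\dfset$ together with $\langle\ktranspose{\dicomat}\dvopt,\pvopt\rangle=\lambda\,\regslope(\pvopt)$.

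Set $\bfg\triangleq\ktranspose{\dicomat}\dvopt$, so $\bfg_{\entry{i}}=\ktranspose{\atom}_{\column{i}}\dvopt$ and, by dual feasibility, $\sum_{\idxtarelsorted=1}^{\idxps}|\bfg|_{\sortedentry{\idxtarelsorted}}\le\lambda\sum_{\idxtarelsorted=1}^{\idxps}\slopeweight_{\idxtarelsorted}$ for every $\idxps\in\intervint{1}{\tarvecdim}$. The heart of the argument is the chain
\begin{align*}
\lambda\,\regslope(\pvopt)
=\langle\bfg,\pvopt\rangle
&\le\sum_{i=1}^{\tarvecdim}|\bfg_{\entry{i}}|\,|\pvopt_{\entry{i}}|
\le\sum_{\idxtarelsorted=1}^{\tarvecdim}|\bfg|_{\sortedentry{\idxtarelsorted}}\,|\pvopt|_{\sortedentry{\idxtarelsorted}}\\
&\le\lambda\sum_{\idxtarelsorted=1}^{\tarvecdim}\slopeweight_{\idxtarelsorted}\,|\pvopt|_{\sortedentry{\idxtarelsorted}}
=\lambda\,\regslope(\pvopt),
\end{align*}
in which the second inequality is the rearrangement inequality and the third follows, by an elementary Abel-summation computation, from dual feasibility, $|\pvopt|_{\sortedentry{1}}\ge\cdots\ge|\pvopt|_{\sortedentry{\tarvecdim}}\ge0$ and~\eqref{eq:hyp slopeweigths}. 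Consequently every inequality is an equality, and I would extract two facts: (i) equality in the rearrangement step forces $|\bfg|$ and $|\pvopt|$ to be similarly ordered, so $|\pvopt_{\entry{j}}|<|\pvopt_{\entry{\idxscreen}}|$ implies $|\bfg_{\entry{j}}|\le|\bfg_{\entry{\idxscreen}}|$; (ii) equality in the Abel-summation step forces $\sum_{\idxtarelsorted=1}^{\idxps}|\bfg|_{\sortedentry{\idxtarelsorted}}=\lambda\sum_{\idxtarelsorted=1}^{\idxps}\slopeweight_{\idxtarelsorted}$ for every $\idxps$ such that $|\pvopt|_{\sortedentry{\idxps}}>|\pvopt|_{\sortedentry{\idxps+1}}$ (with the convention $|\pvopt|_{\sortedentry{\tarvecdim+1}}\triangleq0$).

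To conclude, I would argue by contraposition. Suppose $\pvopt_{\entry{\idxscreen}}\neq0$; put $v\triangleq|\pvopt_{\entry{\idxscreen}}|>0$ and $\idxps^{\star}\triangleq\card[\{i:|\pvopt_{\entry{i}}|\ge v\}]\ge1$. Since $\idxscreen$ is one of these indices, $|\pvopt|_{\sortedentry{\idxps^{\star}}}=v>|\pvopt|_{\sortedentry{\idxps^{\star}+1}}$, so (ii) yields $\sum_{\idxtarelsorted=1}^{\idxps^{\star}}|\bfg|_{\sortedentry{\idxtarelsorted}}=\lambda\sum_{\idxtarelsorted=1}^{\idxps^{\star}}\slopeweight_{\idxtarelsorted}$. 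By (i), every $j$ with $|\bfg_{\entry{j}}|>|\bfg_{\entry{\idxscreen}}|$ satisfies $|\pvopt_{\entry{j}}|\ge v$ and differs from $\idxscreen$; there are therefore at most $\idxps^{\star}-1$ such indices, whence $|\bfg_{\entry{\idxscreen}}|\ge|\bfg|_{\sortedentry{\idxps^{\star}}}$. This lets me pick $\idxps^{\star}$ indices that realize $|\bfg|_{\sortedentry{1}},\dots,|\bfg|_{\sortedentry{\idxps^{\star}}}$ and include $\idxscreen$; deleting $\idxscreen$ from this set leaves the $\idxps^{\star}-1$ largest entries of $|\ktranspose{\dicomat}_{\backslash\idxscreen}\dvopt|$, so that
\[
|\ktranspose{\atom}_{\column{\idxscreen}}\dvopt|
+\sum_{\idxtarelsorted=1}^{\idxps^{\star}-1}|\ktranspose{\dicomat}_{\backslash\idxscreen}\dvopt|_{\sortedentry{\idxtarelsorted}}
=\sum_{\idxtarelsorted=1}^{\idxps^{\star}}|\bfg|_{\sortedentry{\idxtarelsorted}}
=\lambda\sum_{\idxtarelsorted=1}^{\idxps^{\star}}\slopeweight_{\idxtarelsorted}.
\]
This contradicts the premise of~\eqref{eq: ideal safe screening test} taken at $\idxps=\idxps^{\star}$, and hence $\pv^{\star}_{\entry{\idxscreen}}=0$, as claimed.

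I expect the main difficulty to be combinatorial rather than analytic: carefully handling ties in the sorted sequences $|\bfg|$ and $|\pvopt|$ throughout the rearrangement and Abel-summation arguments, and the bookkeeping behind the last display, which turns the active-constraint identity $\sum_{\idxtarelsorted\le\idxps^{\star}}|\bfg|_{\sortedentry{\idxtarelsorted}}=\lambda\sum_{\idxtarelsorted\le\idxps^{\star}}\slopeweight_{\idxtarelsorted}$ into the stated inequality featuring $\dicomat_{\backslash\idxscreen}$. The convex-analytic ingredients --- the form of the dual, strong duality, the Fenchel--Young identity and the polar of $\regslope$ --- are standard, and I would isolate them as auxiliary lemmas in an appendix.
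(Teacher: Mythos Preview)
Your proof is correct and reaches the same conclusion as the paper, but the route is genuinely different. The paper proves the contrapositive by a variational/perturbation argument: starting from the optimality condition \(\subdiffvec^\star=\kinv{\lambda}\ktranspose{\dicomat}\dvopt\in\partial\regslope(\pvopt)\), it invokes the inequality \(\ktranspose{(\pvopt)}(\subdiffvec^\star-\subdiffvec')\ge0\) for all \(\subdiffvec'\) with \(\dualregslope(\subdiffvec')\le1\) (their Lemma~A.2) at two carefully chosen perturbations of \(\subdiffvec^\star\) in coordinate \(\idxscreen\), first to pin down the sign relation between \(\subdiffvec^\star_{\entry{\idxscreen}}\) and \(\pvopt_{\entry{\idxscreen}}\), then to force the existence of an active cumulative constraint at some index \(\idxps_0\ge\idxps_0'\), where \(\idxps_0'\) is the rank of \(|\subdiffvec^\star_{\entry{\idxscreen}}|\) in the sorted vector \(|\subdiffvec^\star|\). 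You instead expand the Fenchel--Young equality \(\langle\ktranspose{\dicomat}\dvopt,\pvopt\rangle=\lambda\,\regslope(\pvopt)\) through the chain ``sign inequality \(\le\) rearrangement \(\le\) Abel/majorization'' and read off the needed structure directly from the equality cases: comonotonicity of \(|\bfg|\) and \(|\pvopt|\), and activity of the cumulative constraint at every strict drop of \(|\pvopt|_{\sortedentry{\cdot}}\). Your witness index \(\idxps^\star\) is defined through \(|\pvopt|\) rather than through \(|\bfg|\), which makes the final bookkeeping with \(\dicomat_{\backslash\idxscreen}\) slightly more transparent. The paper's approach has the advantage that its key lemma is a generic subdifferential fact, reusable for other gauges; your approach is more self-contained and exposes exactly which classical inequalities (rearrangement and Abel summation) drive the SLOPE optimality structure. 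Your caveat about ties is the right one: the only places needing care are the extraction of (i) from the rearrangement equality (fix a common sorting permutation) and the passage from \(|\bfg_{\entry{\idxscreen}}|\ge|\bfg|_{\sortedentry{\idxps^\star}}\) to the displayed identity involving \(\dicomat_{\backslash\idxscreen}\); both go through cleanly once that permutation is fixed.
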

A proof of this result is provided in \Cref{subsec:app:proof ideal screening Slope}.
We mention that, although it differs quite significantly in its formulation, \Cref{th:safe-screening} is closely related to \cite[Proposition~1]{Larsson2020strong}.\footnote{We refer the reader to Section~SM1 of the electronic supplementary material of this paper for a detailed description and a proof of the connection between these two results.}
We also note that \eqref{eq:dual problem} corresponds to the dual problem of~\eqref{eq:primal problem}, see \eg~\cite[Section~2.5]{Bogdan2013statistical}.
Moreover, ${\dvopt}$ exists and is unique because \reffunctext{\dualfun} is a continuous strongly-concave function and \(\dfset\) a closed set.
The equality in~\eqref{eq:dual problem} is therefore well-defined.

\Cref{th:safe-screening} provides a condition similar to \eqref{eq:ideal screening Lasso} relating the dual optimal solution \(\dvopt\) to the zero components of the primal minimizer \(\pvopt\).
Unfortunately, evaluating the dual solution \(\dvopt\) requires a computational load comparable to the one needed to solve the SLOPE problem~\eqref{eq:primal problem}.
Similarly to \(\ell_1\)-penalized problems, tractable screening rules can nevertheless be devised if ``easily-computable'' upper bounds on the left-hand side of \eqref{eq: ideal safe screening test} can be found.
In particular, for any set \(\{\upperbound_{\idxps,\idxscreen}\in\kR\}_{\idxps\in\intervint{1}{\tarvecdim}}\) verifying
\begin{equation}\label{eq:def upper bound}
	\forall \idxps\in\intervint{1}{\tarvecdim}:
	\  \adjustedvbar\ktranspose{\atom}_{\column{\idxscreen}}\dvopt\adjustedvbar + \sum_{\idxtarelsorted=1}^{\idxps-1} \adjustedvbar\ktranspose{\dicomat}_{\backslash \idxscreen}\dvopt\adjustedvbar_{\sortedentry{\idxtarelsorted}}
	\leq
	\upperbound_{\idxps,\idxscreen}
	,
\end{equation}
we readily have that
\begin{equation}
	\label{eq:relaxed SLOPE screening test}
	\forall \idxps\in\intervint{1}{\tarvecdim}:
	\upperbound_{\idxps,\idxscreen}
	<
	\lambda {\sum_{\idxtarelsorted=1}^{\idxps} \slopeweight_\idxtarelsorted}
	\implies \pv^{\star}_{\entry{\idxscreen}} = 0
	.
\end{equation}
The next lemma provides several instances of such upper bounds:\vspace{0.2cm}
\begin{lemma}\label{lemma:upper bound}
	Let \(\dvopt\in\sphereregion(\spherec,\spherer)\).
	Then \(\forall \idxscreen\in\intervint{1}{\tarvecdim}\) and \(\forall \idxps\in\intervint{1}{\tarvecdim}\), we have that
	\begin{equation}
		\nonumber
		\upperbound_{\idxps,\idxscreen}
		\triangleq
		\adjustedvbar
		\ktranspose{\atom}_{\column{\idxscreen}}\spherec
		\adjustedvbar
		+ \sum_{\idxtarelsorted=\idximp}^{\idxps-1} \adjustedvbar{
			\ktranspose{\dicomat}_{\backslash \idxscreen}\spherec
		}\adjustedvbar_{\sortedentry{\idxtarelsorted}}
		+
		(\idxps-\idximp+1)
		\spherer
		+ \lambda\sum_{\idxtarelsorted=1}^{\idximp-1} \slopeweight_{\idxtarelsorted}
	\end{equation}
	verifies \eqref{eq:def upper bound} for any \(\idximp\in\intervint{1}{\idxps}\).\vspace{0.2cm}
\end{lemma}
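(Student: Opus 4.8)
The plan is to control the left-hand side of~\eqref{eq:def upper bound} by splitting it into three pieces, each handled by a different mechanism: the screened term $\adjustedvbar\ktranspose{\atom}_{\column{\idxscreen}}\dvopt\adjustedvbar$ together with the ``tail'' ranks $\idximp,\dots,\idxps-1$ of $\ktranspose{\dicomat}_{\backslash\idxscreen}\dvopt$ are bounded using the safe sphere $\sphereregion(\spherec,\spherer)$, whereas the ``head'' ranks $1,\dots,\idximp-1$ are bounded using the membership of $\dvopt$ in the dual feasible set~\eqref{eq:dual set}. Fix $\idxscreen\in\intervint{1}{\tarvecdim}$, $\idxps\in\intervint{1}{\tarvecdim}$, and $\idximp\in\intervint{1}{\idxps}$.

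First I would record the elementwise estimate implied by $\dvopt\in\sphereregion(\spherec,\spherer)$ and the unit-norm hypothesis~\eqref{eq:assumption:atom unit norm}: for every index $\idxtarel$,
\[
\adjustedvbar\ktranspose{\atom}_{\column{\idxtarel}}\dvopt\adjustedvbar
\;\le\;
\adjustedvbar\ktranspose{\atom}_{\column{\idxtarel}}\spherec\adjustedvbar
+ \kvvbar{\atom_{\column{\idxtarel}}}_2\,\kvvbar{\dvopt-\spherec}_2
\;\le\;
\adjustedvbar\ktranspose{\atom}_{\column{\idxtarel}}\spherec\adjustedvbar + \spherer,
\]
by the triangle inequality, Cauchy--Schwarz, and $\kvvbar{\atom_{\column{\idxtarel}}}_2=1$. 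The choice $\idxtarel=\idxscreen$ handles the screened term (with one copy of $\spherer$). Applying the estimate to each column of $\dicomat_{\backslash\idxscreen}$, then using that $a_\idxtarel\le b_\idxtarel$ for all $\idxtarel$ implies that the $\idxtarelsorted$-th largest of the $a_\idxtarel$'s is at most the $\idxtarelsorted$-th largest of the $b_\idxtarel$'s, and finally that adding the constant $\spherer$ to every entry raises every order statistic by exactly $\spherer$, I obtain the rank-wise bound $\adjustedvbar\ktranspose{\dicomat}_{\backslash\idxscreen}\dvopt\adjustedvbar_{\sortedentry{\idxtarelsorted}}\le\adjustedvbar\ktranspose{\dicomat}_{\backslash\idxscreen}\spherec\adjustedvbar_{\sortedentry{\idxtarelsorted}}+\spherer$ for every rank $\idxtarelsorted$. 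Summing over $\idxtarelsorted\in\intervint{\idximp}{\idxps-1}$ bounds the tail by $\sum_{\idxtarelsorted=\idximp}^{\idxps-1}\adjustedvbar\ktranspose{\dicomat}_{\backslash\idxscreen}\spherec\adjustedvbar_{\sortedentry{\idxtarelsorted}}+(\idxps-\idximp)\spherer$.

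For the head $\sum_{\idxtarelsorted=1}^{\idximp-1}\adjustedvbar\ktranspose{\dicomat}_{\backslash\idxscreen}\dvopt\adjustedvbar_{\sortedentry{\idxtarelsorted}}$ --- empty, hence zero, when $\idximp=1$ --- I would argue in two moves. First, removing the $\idxscreen$-th column cannot increase the sum of the $\idximp-1$ largest magnitudes, because the $\idximp-1$ maximizing indices in $\intervint{1}{\tarvecdim}\setminus\{\idxscreen\}$ form an admissible family of $\idximp-1$ indices for $\ktranspose{\dicomat}\dvopt$; thus $\sum_{\idxtarelsorted=1}^{\idximp-1}\adjustedvbar\ktranspose{\dicomat}_{\backslash\idxscreen}\dvopt\adjustedvbar_{\sortedentry{\idxtarelsorted}}\le\sum_{\idxtarelsorted=1}^{\idximp-1}\adjustedvbar\ktranspose{\dicomat}\dvopt\adjustedvbar_{\sortedentry{\idxtarelsorted}}$. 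Second, since $\dvopt\in\dfset$, the constraint in~\eqref{eq:dual set} taken with parameter $\idximp-1$ (trivial when $\idximp=1$) bounds the right-hand side by $\lambda\sum_{\idxtarelsorted=1}^{\idximp-1}\slopeweight_\idxtarelsorted$. Adding the three bounds and collecting $\spherer+(\idxps-\idximp)\spherer=(\idxps-\idximp+1)\spherer$ gives exactly the announced expression for $\upperbound_{\idxps,\idxscreen}$, which is~\eqref{eq:def upper bound}.

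The argument is essentially a chain of elementary inequalities; the only point that needs care is the nature of the decomposition. It must be a split of the ordered sum $\sum_{\idxtarelsorted=1}^{\idxps-1}$ by \emph{rank} --- ranks $1,\dots,\idximp-1$ absorbed by the dual feasibility constraint, ranks $\idximp,\dots,\idxps-1$ by the safe sphere --- rather than a partition of the column index set; and one must keep in mind that, thanks to the monotonicity used in the second paragraph, the order statistics can be read off interchangeably from $\ktranspose{\dicomat}_{\backslash\idxscreen}\dvopt$ or $\ktranspose{\dicomat}_{\backslash\idxscreen}\spherec$. The boundary instances $\idximp=1$ and $\idximp=\idxps$, where one of the two partial sums is empty, should be checked directly.
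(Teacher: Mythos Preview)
Your proof is correct and follows essentially the same route as the paper: split the ordered sum by rank into a head (ranks $1,\dots,\idximp-1$) bounded via dual feasibility and a tail (ranks $\idximp,\dots,\idxps-1$) plus the screened term bounded via the safe sphere, using the elementwise-to-rankwise monotonicity of order statistics (which the paper isolates as a separate technical lemma). Your treatment of the head bound is in fact slightly more careful than the paper's, since you make explicit the intermediate inequality $\sum_{\idxtarelsorted=1}^{\idximp-1}\adjustedvbar\ktranspose{\dicomat}_{\backslash\idxscreen}\dvopt\adjustedvbar_{\sortedentry{\idxtarelsorted}}\le\sum_{\idxtarelsorted=1}^{\idximp-1}\adjustedvbar\ktranspose{\dicomat}\dvopt\adjustedvbar_{\sortedentry{\idxtarelsorted}}$ before invoking~\eqref{eq:dual set}, whereas the paper states the combined bound directly.
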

A proof of this result is available in \Cref{sec:Proof of lemma:upper bound}.
We note that Lemma 4.2 defines \textit{one} particular family of upper bounds on the left-hand side of \eqref{eq:def upper bound}.
The derivation of these upper bounds is based on the knowledge of a  safe spherical region and partially exploits the definition of the dual feasible set, see \Cref{sec:Proof of lemma:upper bound}.
We nevertheless emphasize that other choices of safe regions or majorization techniques can be envisioned and possibly lead to more favorable upper bounds.

Defining
\begin{equation}
	\screenthres_{\idxps,\idximp} \triangleq
	\lambda
	\Bigg(
	\sum_{\idxtarelsorted=\idximp}^{\idxps} \slopeweight_\idxtarelsorted
	\Bigg)
	-
	(\idxps-\idximp+1)\spherer,
\end{equation}
a straightforward particularization of \eqref{eq:relaxed SLOPE screening test} then leads to the following safe screening rules for SLOPE:\vspace{0.1cm}
\begin{theorem}\label{th: safe screening for SLOPE}
	Let \(\{\idximp_\idxps\}_{\idxps\in\intervint{1}{\pvdim}}\) be a sequence such that \(\idximp_\idxps\in\intervint{1}{\idxps}\) for all \(\idxps\in\intervint{1}{\pvdim}\).
	Then, the following statement holds:
	\begin{equation}\label{eq: general safe screening for SLOPE}
		\forall \idxps\in\intervint{1}{\tarvecdim}:
		\adjustedvbar{
		\ktranspose{\atom}_{\column{\idxscreen}}\spherec
		}\adjustedvbar
		+ \sum_{\idxtarelsorted=\idximp_\idxps}^{\idxps-1} \adjustedvbar{
			\ktranspose{\dicomat}_{\backslash \idxscreen}\spherec
		}\adjustedvbar_{\sortedentry{\idxtarelsorted}}
		< \screenthres_{\idxps,\idximp_\idxps}
		\implies \pv^{\star}_{\entry{\idxscreen}} = 0
		.
	\end{equation}
\end{theorem}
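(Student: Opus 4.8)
The plan is to derive \Cref{th: safe screening for SLOPE} directly from \Cref{lemma:upper bound} and the relaxed screening implication~\eqref{eq:relaxed SLOPE screening test}; no new analytic work is needed, only one algebraic rearrangement and a little care with quantifiers. I would fix $\idxscreen\in\intervint{1}{\tarvecdim}$ and the sequence $\{\idximp_\idxps\}_{\idxps\in\intervint{1}{\pvdim}}$ from the statement, which satisfies $\idximp_\idxps\in\intervint{1}{\idxps}$ for every $\idxps$, and work under the hypothesis of \Cref{lemma:upper bound}, namely that the safe sphere is valid, i.e. $\dvopt\in\sphereregion(\spherec,\spherer)$. For each $\idxps\in\intervint{1}{\tarvecdim}$ I would instantiate \Cref{lemma:upper bound} with the admissible choice $\idximp=\idximp_\idxps$: this yields the scalar $\upperbound_{\idxps,\idxscreen}$ given by the explicit formula of that lemma, which verifies the $\idxps$-th inequality of~\eqref{eq:def upper bound}. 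Collecting these scalars over all $\idxps$ produces a family $\{\upperbound_{\idxps,\idxscreen}\}_{\idxps\in\intervint{1}{\tarvecdim}}$ that verifies~\eqref{eq:def upper bound} in full, so implication~\eqref{eq:relaxed SLOPE screening test} applies to it verbatim.

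It then remains to check that, for a fixed $\idxps$, the premise $\upperbound_{\idxps,\idxscreen}<\lambda\sum_{\idxtarelsorted=1}^{\idxps}\slopeweight_\idxtarelsorted$ of~\eqref{eq:relaxed SLOPE screening test} is \emph{exactly} the premise appearing in~\eqref{eq: general safe screening for SLOPE}. Substituting the expression of $\upperbound_{\idxps,\idxscreen}$ and subtracting $\lambda\sum_{\idxtarelsorted=1}^{\idximp_\idxps-1}\slopeweight_\idxtarelsorted+(\idxps-\idximp_\idxps+1)\spherer$ from both sides, the premise becomes
\[
\adjustedvbar \ktranspose{\atom}_{\column{\idxscreen}}\spherec \adjustedvbar
+\sum_{\idxtarelsorted=\idximp_\idxps}^{\idxps-1}\adjustedvbar \ktranspose{\dicomat}_{\backslash\idxscreen}\spherec \adjustedvbar_{\sortedentry{\idxtarelsorted}}
<
\lambda\sum_{\idxtarelsorted=\idximp_\idxps}^{\idxps}\slopeweight_\idxtarelsorted-(\idxps-\idximp_\idxps+1)\spherer,
\]
whose right-hand side is precisely $\screenthres_{\idxps,\idximp_\idxps}$ by definition. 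Hence~\eqref{eq:relaxed SLOPE screening test} yields~\eqref{eq: general safe screening for SLOPE}, which is the claim.

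I expect the only place one can slip to be the quantifier bookkeeping, and I would spell it out. First, \Cref{lemma:upper bound} allows the auxiliary index $\idximp$ to be chosen independently for each $\idxps$, so taking an entire sequence $\{\idximp_\idxps\}_\idxps$ is legitimate and does not weaken~\eqref{eq:def upper bound} or~\eqref{eq:relaxed SLOPE screening test}. Second, since the conclusion $\pv^{\star}_{\entry{\idxscreen}}=0$ does not depend on $\idxps$, the ``$\forall\,\idxps$'' in front of the implication is equivalent to an existential statement over $\idxps$, so it suffices that a single value of $\idxps$ trigger the inequality. Apart from this, there is no real obstacle: all the analytic content is already carried by \Cref{th:safe-screening} (the primal--dual optimality condition) and \Cref{lemma:upper bound} (the spherical majorization), and the present theorem is a purely mechanical specialization of those two results.
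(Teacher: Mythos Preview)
Your main argument is correct and matches the paper exactly: the paper itself presents \Cref{th: safe screening for SLOPE} as ``a straightforward particularization of~\eqref{eq:relaxed SLOPE screening test}'' after defining $\screenthres_{\idxps,\idximp}$, and your derivation simply spells out this particularization by instantiating \Cref{lemma:upper bound} with $\idximp=\idximp_\idxps$ for each $\idxps$ and rearranging.

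However, your final quantifier remark is wrong and should be deleted. The statement is to be read as $\big(\forall\idxps:\text{inequality}(\idxps)\big)\implies \pv^{\star}_{\entry{\idxscreen}}=0$, \emph{not} as $\forall\idxps:\big(\text{inequality}(\idxps)\implies \pv^{\star}_{\entry{\idxscreen}}=0\big)$. This is clear from the proof of \Cref{th:safe-screening} in \Cref{subsec:app:proof ideal screening Slope}, whose contraposition establishes that $\pvopt_{\entry{\idxscreen}}\neq 0$ implies the \emph{existence} of some $\idxps_0$ for which equality holds; hence one genuinely needs \emph{all} $\tarvecdim$ inequalities to pass, not a single one. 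Your claim that ``it suffices that a single value of $\idxps$ trigger the inequality'' is therefore false. Fortunately your actual argument never uses this claim---you correctly feed the full family $\{\upperbound_{\idxps,\idxscreen}\}_\idxps$ into~\eqref{eq:relaxed SLOPE screening test}---so the proof stands once the erroneous aside is removed.
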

{We mention that the notation ``\(\idximp_\idxps\)'' is here introduced to stress the fact that a different value of \(\idximp\) can be used for each \(\idxps\) in~\eqref{eq: general safe screening for SLOPE}.
Since \(\idxps\in\intervint{1}{\tarvecdim}\) and each parameter \(\idximp_\idxps\) can take on \(\idxps\) different values in \Cref{th: safe screening for SLOPE},~\eqref{eq: general safe screening for SLOPE} thus defines \(\tarvecdim!\) different screening tests for SLOPE
where \(\tfrac{\pvdim(\pvdim+1)}{2}\) distinct inequalities are involved.
We discuss two particular choices of parameters \(\{\idximp_\idxps\}_{\idxps\in\intervint{1}{\tarvecdim}}\) below and  propose an efficient procedure to jointly evaluate all the tests defined by feasible sequences \(\{\idximp_\idxps\}_{\idxps\in\intervint{1}{\pvdim}}\) in the next section.

Let us first consider the case where
\begin{equation}\label{eq:rq=1}
	\forall\idxps\in\intervint{1}{\tarvecdim}:\ \idximp_\idxps=1.
\end{equation}
Screening test~\eqref{eq: general safe screening for SLOPE} then particularizes as
\begin{equation}\label{eq: safe screening for SLOPE p=q-1}
	\forall \idxps\in\intervint{1}{\tarvecdim}:\
	\adjustedvbar{\ktranspose{\atom}_{\column{\idxscreen}}\spherec}\adjustedvbar
	+
	\sum_{\idxtarelsorted=1}^{\idxps-1} \adjustedvbar{
		\ktranspose{\dicomat}_{\backslash \idxscreen}\spherec
	}\adjustedvbar_{\sortedentry{\idxtarelsorted}}
	<
	\lambda
	\Bigg(
	\sum_{\idxtarelsorted=1}^{\idxps} \slopeweight_\idxtarelsorted
	\Bigg)
	- \idxps\spherer
	\implies \pv^{\star}_{\entry{\idxscreen}} = 0
	.
\end{equation}
Interestingly, \eqref{eq: safe screening for SLOPE p=q-1} shares the same mathematical structure as optimality condition \eqref{eq: ideal safe screening test}.
In particular, \eqref{eq: safe screening for SLOPE p=q-1} reduces to \eqref{eq: ideal safe screening test} when \(\spherec=\dvopt\) and \(\spherer=0\).
In this case, it is easy to see that \eqref{eq: safe screening for SLOPE p=q-1} is the best\footnote{In the following sense: if test \eqref{eq: general safe screening for SLOPE} passes for some choice of the parameters \(\{\idximp_\idxps\}_{\idxps\in\intervint{1}{\tarvecdim}}\), then test \eqref{eq: safe screening for SLOPE p=q-1} also necessarily succeeds.
\label{footnote:definition of best screening test}} screening test within the family of tests defined in \Cref{th: safe screening for SLOPE} since an equality occurs in \eqref{eq:def upper bound}.

\begin{figure}
	\centering
	\includegraphics[width=.7\columnwidth]{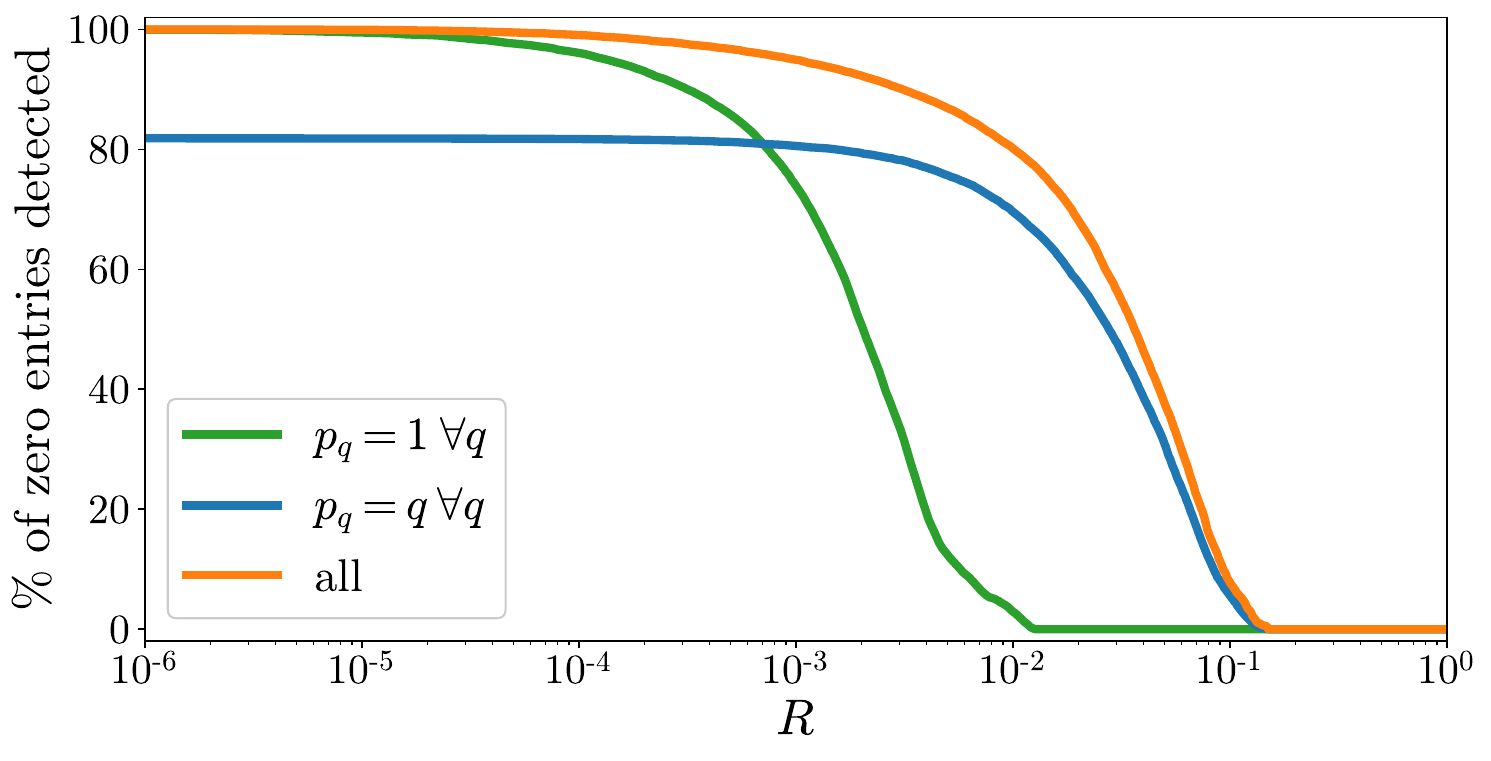}
	\caption{
	\label{fig:comparing choice of q_r}
	Percentage of zero entries in \(\pvopt\) detected by the safe screening tests as a function of \(\spherer\), the radius of the safe sphere.
	Each curve corresponds to a different implementation of the safe screening test~\eqref{eq: general safe screening for SLOPE}: \(\idximp_\idxps=1\) \(\forall \idxps\), see~\eqref{eq: safe screening for SLOPE p=q-1} (green curve), \(\idximp_\idxps=\idxps\) \(\forall \idxps\), see \eqref{eq: safe screening for SLOPE p=0} (blue curve), and all possible choices for \(\{\idximp_\idxps\}_{\idxps\in\intervint{1}{\tarvecdim}}\) (orange curve).
	The results are generated by using the \oscar{1} sequence for \(\{\slopeweight_\idxtarelsorted\}_{\idxtarelsorted=1}^\tarvecdim\), the Toeplitz dictionary and the ratio \(\lambda / \lambda_{\max}=0.5\), see \Cref{subsec:simu:setup}.
	}
\end{figure}

In practice, we may expect this conclusion to remain valid when \(\spherer\) is ``sufficiently'' close to zero. This behavior is illustrated in \Cref{fig:comparing choice of q_r}.
The figure {represents the proportion of zeros entries of \(\pvopt\) detected by screening test \eqref{eq: general safe screening for SLOPE} for different ``qualities'' of the safe region and different choices of parameters \(\{\idximp_\idxps\}_{\idxps\in\intervint{1}{\tarvecdim}}\).
We refer the reader to \Cref{subsec:simu:setup} for a detailed description of the simulation setup.
The center of the safe sphere used to apply \eqref{eq: general safe screening for SLOPE} is assumed to be equal (up to machine precision) to \(\dvopt\) and the \(x\)-axis of the figure represents the radius \(\spherer\) of the sphere region.
The green curve corresponds to test~\eqref{eq: safe screening for SLOPE p=q-1}; the orange curve represents the screening performance achieved when test \eqref{eq: general safe screening for SLOPE} is implemented for all possible choices for \(\{\idximp_\idxps\}_{\idxps\in\intervint{1}{\tarvecdim}}\). We note that, as expected, the green curve attains the best screening performance as soon as \(\spherer\) becomes close to zero.

At the other extreme of the spectrum, another case of interest reads as:
\begin{equation}
	\forall\idxps\in\intervint{1}{\tarvecdim}:\ \idximp_\idxps=\idxps.
\end{equation}
Using our initial hypothesis~\eqref{eq:hyp slopeweigths}, the screening test~\eqref{eq: general safe screening for SLOPE} rewrites\footnote{
\label{footnote:proof case pq=q}
More precisely, \eqref{eq: general safe screening for SLOPE} reduces to ``\(\forall \idxps\in\intervint{1}{\pvdim}:\ |{
\ktranspose{\atom}_{\column{\idxscreen}}\spherec
}|
<
\lambda \slopeweight_\idxps - \spherer
\implies \pv^{\star}_{\entry{\idxscreen}} = 0\)'' which, in view of \eqref{eq:hyp slopeweigths}, is equivalent to \eqref{eq: safe screening for SLOPE p=0}.
}
\begin{equation} \label{eq: safe screening for SLOPE p=0}
	|{
	\ktranspose{\atom}_{\column{\idxscreen}}\spherec
	}|
	<
	\lambda \slopeweight_\tarvecdim - \spherer
	\implies \pv^{\star}_{\entry{\idxscreen}} = 0
	.
\end{equation}
Interestingly, this test has the same mathematical structure as \eqref{eq:sphere screening Lasso} with the exception that \(\lambda\) is multiplied by the value of the smallest weighting coefficient \(\slopeweight_\tarvecdim\).
In particular, if \(\slopeweight_\idxtarelsorted=1\) \(\forall\idxtarelsorted\in\intervint{1}{\pvdim}\) SLOPE reduces to LASSO and test \eqref{eq: safe screening for SLOPE p=0} is equivalent to \eqref{eq:sphere screening Lasso}; \Cref{th: safe screening for SLOPE} thus encompasses standard screening rule \eqref{eq:sphere screening Lasso} for LASSO as a particular case.
The following result emphasizes that \eqref{eq: safe screening for SLOPE p=0} is in fact the best screening rule within the family of tests defined by \Cref{th: safe screening for SLOPE} when \(\slopeweight_\idxtarelsorted=1\) \(\forall\idxtarelsorted\in\intervint{1}{\pvdim}\):
\begin{lemma}\label{lemma:optimality p=0 for LASSO}
	If \(\slopeweight_\idxtarelsorted=1\) \(\forall\idxtarelsorted\in\intervint{1}{\pvdim}\) and test \eqref{eq: general safe screening for SLOPE} passes for some choice of parameters \(\{\idximp_\idxps\}_{\idxps\in\intervint{1}{\tarvecdim}}\), then test \eqref{eq: safe screening for SLOPE p=0} also succeeds.
\end{lemma}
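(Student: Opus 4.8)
The plan is to exploit the observation that the premise of test~\eqref{eq: general safe screening for SLOPE}, namely that the displayed inequality holds \emph{for every} $\idxps\in\intervint{1}{\tarvecdim}$, forces in particular the inequality attached to the single index $\idxps=1$. I would therefore isolate this $\idxps=1$ instance and show that, under the standing assumption $\slopeweight_\idxtarelsorted=1$ for all $\idxtarelsorted$, it is \emph{verbatim} the premise of test~\eqref{eq: safe screening for SLOPE p=0}; this is all that is needed, since ``test~\eqref{eq: safe screening for SLOPE p=0} succeeds'' means precisely that its premise holds.

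Carrying this out, the first step is to note that the constraint $\idximp_\idxps\in\intervint{1}{\idxps}$ leaves no freedom when $\idxps=1$: necessarily $\idximp_1=1$, so the inner sum $\sum_{\idxtarelsorted=1}^{0}$ is empty and the $\idxps=1$ inequality in the premise of~\eqref{eq: general safe screening for SLOPE} reduces to $\adjustedvbar\ktranspose{\atom}_{\column{\idxscreen}}\spherec\adjustedvbar < \screenthres_{1,1}$. The second step is to evaluate the threshold from its definition: $\screenthres_{1,1}=\lambda\sum_{\idxtarelsorted=1}^{1}\slopeweight_\idxtarelsorted-(1-1+1)\spherer=\lambda\slopeweight_1-\spherer$, and since $\slopeweight_\idxtarelsorted=1$ for all $\idxtarelsorted$ --- in particular $\slopeweight_1=\slopeweight_\tarvecdim=1$ --- this equals $\lambda\slopeweight_\tarvecdim-\spherer$, which is exactly the right-hand side of the premise of~\eqref{eq: safe screening for SLOPE p=0}. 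Hence that premise is satisfied, so test~\eqref{eq: safe screening for SLOPE p=0} succeeds and, in particular, $\pv^{\star}_{\entry{\idxscreen}}=0$.

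There is essentially no technical obstacle here; the only points worth stating carefully are the bookkeeping of the empty sum at $\idxps=1$ and the fact that $\screenthres_{\idxps,\idximp}$ is defined with the generic weights $\slopeweight_\idxtarelsorted$, so that the coincidence with~\eqref{eq: safe screening for SLOPE p=0} emerges only after specializing to $\slopeweight_\idxtarelsorted\equiv 1$ at $(\idxps,\idximp)=(1,1)$. It is instructive to add a sentence on \emph{why} the argument does not survive outside the LASSO case: for general weights obeying~\eqref{eq:hyp slopeweigths} one has $\screenthres_{1,1}=\lambda\slopeweight_1-\spherer\ge\lambda\slopeweight_\tarvecdim-\spherer$, possibly with strict inequality, so the $\idxps=1$ instance of~\eqref{eq: general safe screening for SLOPE} is then a strictly \emph{weaker} requirement than the premise of~\eqref{eq: safe screening for SLOPE p=0}, and no such implication can hold in general.
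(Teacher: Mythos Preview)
Your proof is correct and follows essentially the same approach as the paper: isolate the $\idxps=1$ instance of~\eqref{eq: general safe screening for SLOPE}, observe that $\idximp_1=1$ is forced, compute $\screenthres_{1,1}=\lambda\slopeweight_1-\spherer$, and use $\slopeweight_1=\slopeweight_\tarvecdim$ to match the premise of~\eqref{eq: safe screening for SLOPE p=0}. Your additional remark on why the implication fails for general weights is a nice touch not present in the paper's proof.
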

A proof of this result is available in \Cref{sec:Proof of lemma:optimality p=0 for LASSO}.

As a final remark, let us mention that,
although we just emphasized that some choices of parameters $\{\idximp_\idxps\}_{\idxps\in\intervint{1}{\tarvecdim}}$ can be optimal (in terms of screening performance) in some situations, no conclusion can be drawn in the general case.
In particular, we found in our numerical experiments that the best choice for $\{\idximp_\idxps\}_{\idxps\in\intervint{1}{\tarvecdim}}$ depends on many factors: the weights $\{\slopeweight_\idxtarelsorted\}_{\idxtarelsorted=1}^\tarvecdim$, the radius of the safe sphere \(\spherer\), the nature of the dictionary, the atom to screen, etc.
This is illustrated in Fig.~\ref{fig:comparing choice of q_r}:
we see that the blue and green curves deviate from the orange curve for certain values of \(\spherer\), that is the best screening performance is not necessarily achieved for \(\idximp_\idxps=1\) or \(\idximp_\idxps=\idxps \ \forall \idxps\in\intervint{1}{\pvdim}\).\\

\subsection{Efficient implementation}
\label{sec:implementation SLOPE screening}

Since the best values for \(\{\idximp_\idxps\}_{\idxps\in\intervint{1}{\tarvecdim}}\) cannot be foreseen, it is desirable to evaluate the screening rule~\eqref{eq: general safe screening for SLOPE} for \textit{any} choice of these parameters.
Formally, this ideal test reads: 
\begin{equation}
	\label{eq: screening SLOPE all p}
	\forall \idxps\in\intervint{1}{\tarvecdim},\exists \idximp_{{\idxps}}\in\intervint{1}{\idxps}:
	\adjustedvbar
	\ktranspose{\atom}_{\column{\idxscreen}}\spherec
	\adjustedvbar
	+ \sum_{\idxtarelsorted=\idximp_{{\idxps}}}^{\idxps-1} \adjustedvbar
	\ktranspose{\dicomat}_{\backslash \idxscreen}\spherec
	\adjustedvbar_{\sortedentry{\idxtarelsorted}}
	<
	\screenthres_{\idxps,\idximp_{{\idxps}}}
	\implies \pv^{\star}_{\entry{\idxscreen}} = 0
	.
\end{equation}
Since verifying this test for a \textit{given} index $\idxscreen$ involves the evaluation of $\mathcal{O}(\tarvecdim^2)$ inequalities, a brute-force evaluation of \eqref{eq: screening SLOPE all p} for all atoms of the dictionary requires $\mathcal{O}(\tarvecdim^3)$ operations.
In this section, we present a procedure to perform this task with a complexity scaling as $\mathcal{O}(\tarvecdim \log \tarvecdim + \ninequ \nscreen)$ where $\ninequ\leq \tarvecdim$ is some problem-dependent constant (to be defined later on) and $\nscreen$ is the number of atoms of the dictionary passing test \eqref{eq: screening SLOPE all p}.
Our procedure is summarized in \Cref{alg:Implementation generalized SLOPE screening test 2,alg:Implementation generalized SLOPE screening test}, and is grounded on the following nesting properties.\\

\begin{algorithm}[t!]
	\caption{
		\label{alg:Implementation generalized SLOPE screening test 2}
		Fast implementation of SLOPE screening test \eqref{eq: screening SLOPE all p}
	}
	\begin{algorithmic}[1]
		\REQUIRE{radius \(\spherer\geq0\), sorted elements $\{|\ktranspose{\dicomat}\spherec|_{\sortedentry{\idxtarelsorted}}\}_{\idxtarelsorted=1}^\tarvecdim$}
		\STATE $\setscreen = \emptyset$ \COMMENT{Set of screened atoms: init}
		\STATE $\idxscreen = \tarvecdim$ \COMMENT{Index of atom under testing: init}
		\STATE Evaluate $\{\screeningboundfunc({\idximp})\}_{{\idximp}=1}^{\tarvecdim}$,
		$\{\idximp^\star({\idxps})\}_{{\idxps}=1}^{\tarvecdim}$,
		$\{\idxps^\star(\idxtarelsorted)\}_{\idxtarelsorted=1}^{\tarvecdim}$
		\STATE $\mathrm{run} = 1$

		\vspace*{.2em}
		\WHILE{$\mathrm{run} == 1$ and $\ell>0$}
		\STATE $\mathrm{test}$  = Algorithm \ref{alg:Implementation generalized SLOPE screening test}($\spherer$,$\idxscreen$,$\{\screeningboundfunc({\idximp})\}_{{\idximp}=1}^{\tarvecdim}$,$\{\idximp^\star({\idxps})\}_{{\idxps}=1}^{\tarvecdim}$,$\{\idxps^\star(\idxtarelsorted)\}_{\idxtarelsorted=1}^{\tarvecdim}$)

		\IF{$\mathrm{test}==1$} 
		\STATE $\setscreen = \setscreen \cup\{\idxscreen\}$
		\STATE $\ell = \ell-1$
		\ELSE
		\STATE $\mathrm{run} = 0$ \COMMENT{Stop testing as soon as one atom does not pass the test}
		\ENDIF

		\ENDWHILE

		\RETURN{$\setscreen$ (Set of indices passing test \eqref{eq: screening SLOPE all p})}
	\end{algorithmic}
\end{algorithm}

\noindent
\paragraph{Nesting of the tests for different atoms} We first emphasize that there exists an implication between the failures of test \eqref{eq: screening SLOPE all p} for some group of indices.
In particular, the following result holds:\vspace{0.1cm}
\begin{lemma}\label{lemma:nesting test}
	Let $\upperbound_{\idxps,\idxscreen}$ be defined as in \Cref{lemma:upper bound} and assume that
	\begin{equation} \label{eq:WH}
		\kvbar{\ktranspose{\atom}_{\column{1}}\spherec}\geq \ldots \geq \kvbar{\ktranspose{\atom}_{\column{\tarvecdim}}\spherec}.
	\end{equation}
	Then $\forall\idxps\in\intervint{1}{\tarvecdim}$:
	\begin{equation}
		\idxscreen<\idxscreen' \implies \upperbound_{\idxps,\idxscreen}\geq \upperbound_{\idxps,\idxscreen'}
		.
	\end{equation}
\end{lemma}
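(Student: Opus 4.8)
The plan is to reduce the statement, by transitivity, to the comparison of two \emph{consecutive} indices, and then to an elementary observation about how the sorted vector \(|\ktranspose{\dicomat}_{\backslash \idxscreen}\spherec|\) changes when the deleted coordinate is shifted by one position. Fix \(\idxps\in\intervint{1}{\tarvecdim}\) and \(\idximp\in\intervint{1}{\idxps}\) (the argument below is uniform in \(\idximp\)), and write \(v_i\triangleq|\ktranspose{\atom}_{\column{i}}\spherec|\), so that by~\eqref{eq:WH} the sequence \(v_1\geq v_2\geq\dots\geq v_\tarvecdim\) is already sorted in non-increasing order. In the expression of \(\upperbound_{\idxps,\idxscreen}\) given by \Cref{lemma:upper bound}, the terms \((\idxps-\idximp+1)\spherer\) and \(\lambda\sum_{\idxtarelsorted=1}^{\idximp-1}\slopeweight_\idxtarelsorted\) do not depend on \(\idxscreen\); hence it suffices to prove that \(S(\idxscreen)\triangleq v_\idxscreen+\sum_{\idxtarelsorted=\idximp}^{\idxps-1}|\ktranspose{\dicomat}_{\backslash \idxscreen}\spherec|_{\sortedentry{\idxtarelsorted}}\) is non-increasing on \(\intervint{1}{\tarvecdim}\), and for this it is enough to establish \(S(\idxscreen)\geq S(\idxscreen+1)\) for every \(\idxscreen\in\intervint{1}{\tarvecdim-1}\).

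The crux is the following description of the two sorted vectors involved. Since \(v\) is already sorted, deleting its \(\idxscreen\)-th entry produces a vector whose \(\idxtarelsorted\)-th largest component equals \(v_\idxtarelsorted\) for \(\idxtarelsorted<\idxscreen\) and \(v_{\idxtarelsorted+1}\) for \(\idxtarelsorted\geq\idxscreen\); similarly, deleting the \((\idxscreen+1)\)-th entry gives \(v_\idxtarelsorted\) for \(\idxtarelsorted\leq\idxscreen\) and \(v_{\idxtarelsorted+1}\) for \(\idxtarelsorted>\idxscreen\). Comparing the two, I would observe that \(|\ktranspose{\dicomat}_{\backslash \idxscreen}\spherec|_{\sortedentry{\idxtarelsorted}}=|\ktranspose{\dicomat}_{\backslash(\idxscreen+1)}\spherec|_{\sortedentry{\idxtarelsorted}}\) for every \(\idxtarelsorted\neq\idxscreen\), while at \(\idxtarelsorted=\idxscreen\) the values are \(v_{\idxscreen+1}\) and \(v_\idxscreen\) respectively. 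Plugging this into \(S(\idxscreen)-S(\idxscreen+1)=(v_\idxscreen-v_{\idxscreen+1})+\sum_{\idxtarelsorted=\idximp}^{\idxps-1}\bigl(|\ktranspose{\dicomat}_{\backslash \idxscreen}\spherec|_{\sortedentry{\idxtarelsorted}}-|\ktranspose{\dicomat}_{\backslash(\idxscreen+1)}\spherec|_{\sortedentry{\idxtarelsorted}}\bigr)\), all summands vanish except possibly the one with \(\idxtarelsorted=\idxscreen\), which is present precisely when \(\idximp\leq\idxscreen\leq\idxps-1\). In that case the surviving summand equals \(v_{\idxscreen+1}-v_\idxscreen\) and cancels the leading difference, so \(S(\idxscreen)-S(\idxscreen+1)=0\); otherwise the sum is empty or zero and \(S(\idxscreen)-S(\idxscreen+1)=v_\idxscreen-v_{\idxscreen+1}\geq 0\) by~\eqref{eq:WH}. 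Either way \(S(\idxscreen)\geq S(\idxscreen+1)\), which is the claim.

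I do not expect a genuine obstacle here: the whole content is the observation that shifting the deleted coordinate by one in an already-sorted vector alters exactly one of the sorted entries, and that this single change is exactly offset by the change in the leading term \(v_\idxscreen\). The only points requiring a little care are purely bookkeeping, namely handling the empty sum when \(\idxps=1\) or \(\idximp=\idxps\), the case where \(\idxscreen\) lies at an endpoint of \(\intervint{\idximp}{\idxps-1}\), and checking that every sorted entry \(|\ktranspose{\dicomat}_{\backslash \idxscreen}\spherec|_{\sortedentry{\idxtarelsorted}}\) appearing above is well-defined, which holds because \(\idxps-1\leq\tarvecdim-1\) and \(\idxscreen\leq\tarvecdim-1\) throughout the consecutive-index comparison.
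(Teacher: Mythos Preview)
Your proof is correct and follows essentially the same approach as the paper: both reduce to comparing consecutive indices \(\idxscreen\) and \(\idxscreen+1\), strip off the \(\idxscreen\)-independent constant, and exploit that under~\eqref{eq:WH} the sorted entries of \(|\ktranspose{\dicomat}_{\backslash\idxscreen}\spherec|\) are simply the ordered \(v_i\)'s with one entry removed. The only cosmetic difference is that the paper packages this via cumulative sums \(\sigma_q=\sum_{k=1}^q v_k\) and a three-case closed form for \(S(\idxscreen)\) (according to whether \(\idxscreen\leq\idximp-1\), \(\idximp\leq\idxscreen\leq\idxps\), or \(\idxscreen>\idxps\)), whereas you compute \(S(\idxscreen)-S(\idxscreen+1)\) directly by tracking the single sorted position that changes; the resulting case split and conclusion are identical.
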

A proof of this result is provided in \Cref{subsec:proof lemma nesting test}.
\Cref{lemma:nesting test} has the following consequence: if \eqref{eq:WH} holds, the failure of test \eqref{eq: screening SLOPE all p} for some $\idxscreen'\in\intervint{2}{\tarvecdim}$ implies the failure of the test for any index $\idxscreen\in \intervint{1}{\idxscreen'-1}$.
This immediately suggests a backward strategy for the evaluation of \eqref{eq: screening SLOPE all p}, starting from $\idxscreen=\tarvecdim$ and going backward to smaller indices.
This is the sense of the main recursion in \Cref{alg:Implementation generalized SLOPE screening test 2}.

We note that hypothesis~\eqref{eq:WH} can always be verified by a proper reordering of the elements of $|\ktranspose{\dicomat}\spherec$|.
This can be achieved by state-of-the-art sorting procedures with a complexity of $\mathcal{O}(\tarvecdim \log \tarvecdim)$.
Therefore, in the sequel we will assume that~\eqref{eq:WH} holds even if not explicitly mentioned.\\

\paragraph{Nesting of some inequalities}
We next show that the number of inequalities to be verified may possibly be substantially smaller than $\mathcal{O}(\tarvecdim^2)$.
We first focus on the case ``$\idxscreen=\tarvecdim$'' and then extend our result to the general case ``$\idxscreen<\tarvecdim$''.

Let us first note that under hypothesis~\eqref{eq:WH}:
\begin{equation}
	\forall \idxtarelsorted\in \intervint{1}{\tarvecdim-1}:\ |\ktranspose{\dicomat}_{\backslash\tarvecdim}\spherec |_{\sortedentry{\idxtarelsorted}}=| \ktranspose{\dicomat}_{\backslash\tarvecdim}\spherec |_{\entry{\idxtarelsorted}},
\end{equation}
that is the $\idxtarelsorted$th largest element of $|\ktranspose{\dicomat}_{\backslash\tarvecdim}\spherec |$ is simply equal to its $\idxtarelsorted$th component.
The particularization of \eqref{eq: screening SLOPE all p} to $\idxscreen=\tarvecdim$ can then be rewritten as:
\begin{equation}\label{eq: screening test compressed form}
	\forall \idxps\in\intervint{1}{\tarvecdim}, \exists \idximp_\idxps\in \intervint{1}{\idxps}:\ \kvbar{
	\ktranspose{\atom}_{\column{\tarvecdim}}\spherec
	}
	< \thresinequ_{\idxps,\idximp_\idxps}
\end{equation}
where \(\thresinequ_{\idxps,\idximp}\) is defined \(\forall\idxps\in\intervint{1}{\pvdim}\) and \(\idximp\in\intervint{1}{\idxps}\) as
\begin{equation}
	\thresinequ_{\idxps,\idximp}
	\triangleq
	\screenthres_{\idxps,\idximp}
	-
	\sum_{\idxtarelsorted=\idximp}^{\idxps-1}
	\kvbar{
		\ktranspose{\dicomat}\spherec
	}_{\entry{\idxtarelsorted}}
	=
	\sum_{\idxtarelsorted=\idximp}^{\idxps-1} (\lambda\slopeweight_{\idxtarelsorted}
	-  \kvbar{
		\ktranspose{\dicomat}\spherec
	}_{\entry{\idxtarelsorted}}
	- \spherer)
	+ (\lambda \slopeweight_\idxps -\spherer)
	.
\end{equation}
We show hereafter that \eqref{eq: screening test compressed form} can be verified by only considering a ``well-chosen'' subset of thresholds
$\setinequ\subseteq\kset{\thresinequ_{\idxps,\idximp}}{\idxps\in\intervint{1}{\tarvecdim}, \idximp\in\intervint{1}{\idxps}}$, see \Cref{lemma: subset inequality} below.

If
\begin{equation}
	\idximp^\star(\idxps) 
	\triangleq
	\kargmax_{\idximp\in\intervint{1}{\idxps}} \thresinequ_{\idxps,\idximp}
	,
\end{equation}
we obviously have
\begin{equation}\label{eq: meaning r*}
	\kvbar{
	\ktranspose{\atom}_{\column{\tarvecdim}}\spherec
	}<\thresinequ_{\idxps,\idximp^\star(\idxps)}
	\iff
	\exists\idximp_{{\idxps}}\in\intervint{1}{\idxps}:\
	\kvbar{
	\ktranspose{\atom}_{\column{\tarvecdim}}\spherec
	}<\thresinequ_{\idxps,\idximp_{{\idxps}}}
	.
\end{equation}
In other words, for each $\idxps\in\intervint{1}{\tarvecdim}$, satisfying the inequality ``$\kvbar{\ktranspose{\atom}_{\column{\tarvecdim}}\spherec}<\thresinequ_{\idxps,\idximp}$'' for $\idximp=\idximp^\star(\idxps)$ is necessary and sufficient to ensure that it is verified for some $\idximp_{{\idxps}}\in\intervint{1}{\idxps}$.
Motivated by this observation, we show the following items below:
\textit{i)} $\idximp^\star(\idxps)$ can be evaluated $\forall \idxps\in\intervint{1}{\tarvecdim}$ with a complexity $\mathcal{O}(\tarvecdim)$; \textit{ii)} similarly to $\idximp$, only a subset of values of $\idxps\in\intervint{1}{\tarvecdim}$ are of interest to implement \eqref{eq: screening test compressed form}.

Let us define the function:
\begin{equation}\label{eq:def f}
	\kfuncdef{\screeningboundfunc}{\intervint{1}{\tarvecdim}}{\kR}[\idximp]
	[
		\sum_{\idxtarelsorted=\idximp}^{\tarvecdim} (\lambda\slopeweight_{\idxtarelsorted}
		- \kvbar{
			\ktranspose{\dicomat}\spherec}_{\entry{\idxtarelsorted}}
		- \spherer)
	]
	.
\end{equation}
We then have \(\forall \idxps\in\intervint{1}{\tarvecdim}\) and \(\idximp\in\intervint{1}{\idxps}\):
\begin{equation}\label{eq:def tau 2}
	\thresinequ_{\idxps,\idximp}
	= \screeningboundfunc(\idximp) - (\screeningboundfunc(\idxps) - \lambda \slopeweight_\idxps) -\spherer
	.
\end{equation}
In view of \eqref{eq:def tau 2}, the optimal value $\idximp^\star(\idxps)$ can be computed as
\begin{equation}
	\idximp^\star(\idxps) \label{eq:def idximpstar}
	=
	\kargmax_{\idximp\in\intervint{1}{\idxps}} \screeningboundfunc(\idximp).
\end{equation}
Considering \eqref{eq:def f}, we see that the evaluation of $\screeningboundfunc(\idximp)$ $\forall\idximp\in\intervint{1}{\tarvecdim}$ (and therefore $\idximp^\star(\idxps)$  $\forall \idxps\in\intervint{1}{\tarvecdim}$) can be done with a complexity scaling as $\mathcal{O}(\tarvecdim)$.
This proves item \textit{i)}.

Let us now show that only some specific indices $\idxps\in\intervint{1}{\tarvecdim}$ are of interest to implement \eqref{eq: screening test compressed form}.
Let
\begin{equation}
	\label{eq:def q^star(k)}
	\idxps^\star(\idxtarelsorted)
	\triangleq
	\kargmax_{\idxps\in\intervint{1}{\idxtarelsorted}} \screeningboundfunc(\idxps) - \lambda \slopeweight_\idxps
	,
\end{equation}
and define the sequence \(\{\idxps^{(\idxinequ)}\}_\idxinequ\) as
\begin{equation}\label{eq:def q set}
	\begin{cases}
		\idxps^{(1)}         & = \idxps^\star(\tarvecdim)                              \\
		\idxps^{(\idxinequ)} & = \idxps^\star(\idximp^\star(\idxps^{(\idxinequ-1)})-1)
	\end{cases}
\end{equation}
where the recursion is applied as long as $\idximp^\star(\idxps^{(\idxinequ-1)})>1$.\footnote{We note that the sequence $\{\idxps^{(\idxinequ)}\}_\idxinequ$ is strictly decreasing and thus contains at most $\tarvecdim$ elements.}
We then have the following result whose proof is available in \Cref{proof:lemma: subset inequality}:
\begin{lemma}\label{lemma: subset inequality}
	Let $\setinequ \triangleq \kset{\thresinequ_{\idxps,\idximp^{\star}(\idxps)}}{\idxps\in\{\idxps^{(\idxinequ)}\}_{\idxinequ}}$
	where $\{\idxps^{(\idxinequ)}\}_{\idxinequ}$ is defined in \eqref{eq:def q set}.
	Test \eqref{eq: screening test compressed form} is passed if and only if
	\begin{equation}\label{eq: threshold-based screening test}
		\forall \thresinequ\in\setinequ:\ |\ktranspose{\atom}_{\column{\tarvecdim}}\spherec|<\thresinequ
		.
	\end{equation}
\end{lemma}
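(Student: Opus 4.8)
The plan is to establish \Cref{lemma: subset inequality} by showing that the subset $\setinequ$ retains exactly the ``binding'' thresholds of test \eqref{eq: screening test compressed form}, \ie that passing all inequalities indexed by $\setinequ$ is equivalent to passing all of them. The easy direction is immediate: if \eqref{eq: screening test compressed form} passes then in particular $|\ktranspose{\atom}_{\column{\tarvecdim}}\spherec|<\thresinequ_{\idxps,\idximp^\star(\idxps)}$ for every $\idxps$, hence for the subset of $\idxps$ appearing in $\{\idxps^{(\idxinequ)}\}_\idxinequ$; this yields \eqref{eq: threshold-based screening test}. The substantive direction is the converse, for which the key reduction is \eqref{eq: meaning r*}: test \eqref{eq: screening test compressed form} holds iff $|\ktranspose{\atom}_{\column{\tarvecdim}}\spherec| < \min_{\idxps\in\intervint{1}{\tarvecdim}} \thresinequ_{\idxps,\idximp^\star(\idxps)}$. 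So it suffices to prove that
\begin{equation}\nonumber
	\min_{\idxps\in\intervint{1}{\tarvecdim}} \thresinequ_{\idxps,\idximp^\star(\idxps)}
	=
	\min_{\idxps\in\{\idxps^{(\idxinequ)}\}_\idxinequ} \thresinequ_{\idxps,\idximp^\star(\idxps)},
\end{equation}
\ie that the recursively-defined set $\{\idxps^{(\idxinequ)}\}_\idxinequ$ always contains an index attaining the global minimum over $\idxps$.

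First I would rewrite, using \eqref{eq:def tau 2} and \eqref{eq:def idximpstar}, $\thresinequ_{\idxps,\idximp^\star(\idxps)} = \max_{\idximp\in\intervint{1}{\idxps}}\screeningboundfunc(\idximp) - \screeningboundfunc(\idxps) + \lambda\slopeweight_\idxps - \spherer$. The next step is to understand the minimizing index $\idxps$. I would argue that if $\idxps$ minimizes $\thresinequ_{\idxps,\idximp^\star(\idxps)}$, then $\idxps$ can be taken to minimize $\screeningboundfunc(\idxps) - \lambda\slopeweight_\idxps$ over $\intervint{1}{\idximp^\star(\idxps_0)}$ for the appropriate range — this is exactly what the definition \eqref{eq:def q^star(k)} of $\idxps^\star(\idxtarelsorted)$ as $\kargmax_{\idxps\in\intervint{1}{\idxtarelsorted}}(\screeningboundfunc(\idxps)-\lambda\slopeweight_\idxps)$ is designed to capture (note $\kargmax$ there of $\screeningboundfunc(\idxps)-\lambda\slopeweight_\idxps$ corresponds to $\kargmin$ of its negative). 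The crucial structural observation to nail down is that $\idximp^\star(\idxps)$ is \emph{monotone} in a suitable sense and that $\idximp^\star(\idxps)\leq\idxps$ always, so the recursion \eqref{eq:def q set} — which jumps from $\idxps^{(\idxinequ-1)}$ to an optimal index within $\intervint{1}{\idximp^\star(\idxps^{(\idxinequ-1)})-1}$ — sweeps through precisely the candidate windows, and terminates once $\idximp^\star=1$, at which point $\thresinequ_{\idxps,1}$ for that $\idxps$ bounds all remaining thresholds from below or the window is exhausted.

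I would then show by (downward) induction on the recursion depth that $\min_{\idxps\in\intervint{1}{\idxps^{(\idxinequ-1)}}}\thresinequ_{\idxps,\idximp^\star(\idxps)}$ equals $\min\bigl(\thresinequ_{\idxps^{(\idxinequ-1)},\idximp^\star(\idxps^{(\idxinequ-1)})},\ \min_{\idxps\in\intervint{1}{\idximp^\star(\idxps^{(\idxinequ-1)})-1}}\thresinequ_{\idxps,\idximp^\star(\idxps)}\bigr)$: intuitively, for $\idxps$ in the intermediate range $\intervint{\idximp^\star(\idxps^{(\idxinequ-1)})}{\idxps^{(\idxinequ-1)}}$ the maximizer $\idximp^\star(\idxps)$ does not change (it stays at the argmax of $\screeningboundfunc$ over a common prefix), so $\thresinequ_{\idxps,\idximp^\star(\idxps)}$ is minimized over that range at $\idxps^\star$ of the prefix, which is $\idxps^{(\idxinequ-1)}$ by construction. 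Unrolling this identity down the chain gives that the global minimum is attained on $\{\idxps^{(\idxinequ)}\}_\idxinequ$, completing the converse direction and hence the lemma.

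The main obstacle I anticipate is making the monotonicity/interval-decomposition argument in the third paragraph fully rigorous: one must carefully verify that on the window $\intervint{\idximp^\star(\idxps^{(\idxinequ-1)})}{\idxps^{(\idxinequ-1)}}$ the prefix-argmax $\idximp^\star(\cdot)$ is genuinely constant (which requires that extending the feasible range from $\idximp^\star(\idxps^{(\idxinequ-1)})$ up to $\idxps^{(\idxinequ-1)}$ adds no new maximizer of $\screeningboundfunc$ — true by definition of $\idximp^\star(\idxps^{(\idxinequ-1)})$ as $\kargmax_{\idximp\in\intervint{1}{\idxps^{(\idxinequ-1)}}}\screeningboundfunc(\idximp)$, provided ties are broken consistently, \eg by taking the \emph{largest} argmax), and that no index of the window other than $\idxps^{(\idxinequ-1)}$ can beat it as a minimizer of $\thresinequ_{\idxps,\idximp^\star(\idxps)}$, which reduces to $\idxps^{(\idxinequ-1)}=\idxps^\star(\idximp^\star(\idxps^{(\idxinequ-2)})-1)$ minimizing $\screeningboundfunc(\idxps)-\lambda\slopeweight_\idxps$ over the relevant range. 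All of this is bookkeeping around \eqref{eq:def tau 2}, \eqref{eq:def idximpstar}, \eqref{eq:def q^star(k)} and \eqref{eq:def q set}, but the tie-breaking conventions need to be pinned down precisely for the equivalences to be exact.
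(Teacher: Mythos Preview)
Your overall strategy is sound, and the reduction via \eqref{eq: meaning r*} to showing that $\min_{\idxps\in\intervint{1}{\tarvecdim}}\thresinequ_{\idxps,\idximp^\star(\idxps)}$ is attained on $\{\idxps^{(\idxinequ)}\}_\idxinequ$ is a clean way to organize the argument. However, your interval decomposition has a gap: the windows $\intervint{\idximp^\star(\idxps^{(\idxinequ-1)})}{\idxps^{(\idxinequ-1)}}$ on which you argue that $\idximp^\star(\cdot)$ is constant do \emph{not} cover $\intervint{1}{\tarvecdim}$. At the first step the indices $\idxps\in(\idxps^{(1)},\tarvecdim]$ are left out (there is no $\idxps^{(0)}$ in \eqref{eq:def q set}, so your induction on $\min_{\idxps\in\intervint{1}{\idxps^{(\idxinequ-1)}}}$ never addresses the full range), and at later steps the ranges $(\idxps^{(\idxinequ)},\idximp^\star(\idxps^{(\idxinequ-1)})-1]$ sit strictly between consecutive windows. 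On those ranges $\idximp^\star(\idxps)$ need \emph{not} equal $\idximp^\star(\idxps^{(\idxinequ)})$, so the ``constant prefix-argmax'' justification does not apply. The fix is easy --- for any such $\idxps$, both $\max_{\idximp\leq\idxps}\screeningboundfunc(\idximp)$ and $-(\screeningboundfunc(\idxps)-\lambda\slopeweight_\idxps)$ dominate their values at $\idxps^{(\idxinequ)}$, so $\thresinequ_{\idxps,\idximp^\star(\idxps)}\geq\thresinequ_{\idxps^{(\idxinequ)},\idximp^\star(\idxps^{(\idxinequ)})}$ directly --- but as written the induction does not close.

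The paper sidesteps this bookkeeping with a different device: rather than tracking $\idximp^\star(\idxps)$ across $\idxps$, it fixes $\idximp=\idximp^{(\idxinequ)}\triangleq\idximp^\star(\idxps^{(\idxinequ)})$ and observes that for \emph{every} $\idxps\in\intervint{\idximp^{(\idxinequ)}}{\idximp^{(\idxinequ-1)}-1}$ (with the convention $\idximp^{(0)}-1=\tarvecdim$) this $\idximp^{(\idxinequ)}$ is an admissible choice in \eqref{eq: screening test compressed form} (since $\idximp^{(\idxinequ)}\leq\idxps$) and that $\thresinequ_{\idxps^{(\idxinequ)},\idximp^{(\idxinequ)}}\leq\thresinequ_{\idxps,\idximp^{(\idxinequ)}}$ (since $\idxps^{(\idxinequ)}$ maximizes $\screeningboundfunc(\idxps)-\lambda\slopeweight_\idxps$ over the relevant range by \eqref{eq:def q^star(k)}--\eqref{eq:def q set}). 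Because the ranges $\intervint{\idximp^{(\idxinequ)}}{\idximp^{(\idxinequ-1)}-1}$ partition $\intervint{1}{\tarvecdim}$ exactly, there are no ``gap indices'' to treat separately. Your route can certainly be made to work with the patch above, but comparing against $\thresinequ_{\idxps,\idximp^{(\idxinequ)}}$ instead of $\thresinequ_{\idxps,\idximp^\star(\idxps)}$ is precisely what makes the paper's decomposition close without extra cases and without any tie-breaking convention on $\idximp^\star$.
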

\Cref{lemma: subset inequality} suggests the procedure described in Algorithm~\ref{alg:Implementation generalized SLOPE screening test} (with $\idxscreen=\tarvecdim$) to verify if \eqref{eq: screening test compressed form} is passed.
In a nutshell, the lemma states that only $\mathrm{card}(\setinequ)$ inequalities need to be taken into account to implement \eqref{eq: screening test compressed form}.
We note that $\mathrm{card}(\setinequ)\leq \tarvecdim$ since only one value of $\idximp$ (that is $\idximp^\star(\idxps)$) has to be considered for any $\idxps\in\intervint{1}{\tarvecdim}$.
This is in contrast with a brute-force evaluation of \eqref{eq: screening test compressed form} which requires the verification of $\mathcal{O}(\tarvecdim^2)$ inequalities.

\begin{algorithm}[t]
	\caption{
		\label{alg:Implementation generalized SLOPE screening test}
		Check if test \eqref{eq: screening SLOPE all p} is passed for $\idxscreen$ if it is passed for $\idxscreen'>\idxscreen$ 
	}
	\begin{algorithmic}[1]
		\REQUIRE{radius \(\spherer\geq0\)}, index $\idxscreen\in\intervint{1}{\tarvecdim}$,
		$\{\screeningboundfunc({\idximp})\}_{{\idximp}=1}^{\tarvecdim}$, $\{\idximp^\star({\idxps})\}_{{\idxps}=1}^{\tarvecdim}$,$\{\idxps^\star(\idxtarelsorted)\}_{\idxtarelsorted=1}^{\tarvecdim}$

		\STATE $\idxps = \idxps^\star(\idxscreen)$

		\STATE $\mathrm{test} = 1$
		\STATE $\mathrm{run} = 1$

		\vspace*{.2em}
		\WHILE{$\mathrm{run} == 1$}
		\STATE $\thresinequ = \screeningboundfunc(\idximp^\star(\idxps))-\screeningboundfunc(\idxps)+(\lambda\slopeweight_\idxps-\spherer)$ \COMMENT{Evaluation of current threshold, see \eqref{eq:def tau 2}}

		\IF{$|\ktranspose{\atom}_{\column{\idxscreen}} \spherec|\geq \thresinequ$}
		\STATE $\mathrm{test} = 0$ \COMMENT{Test failed}
		\STATE $\mathrm{run} = 0$        \COMMENT{Stops the recursion}
		\ENDIF

		\IF{$\idximp^\star(\idxps)>1$} 
		\STATE $\idxps = \idxps^\star(\idximp^\star(\idxps)-1)$ \COMMENT{Next value of $\idxps$ to test, see \eqref{eq:def q set}}
		\ELSE
		\STATE $\mathrm{run} = 0$        \COMMENT{Stops the recursion}
		\ENDIF

		\ENDWHILE

		\RETURN{$\mathrm{test}$ ($=1$ if test passed and $0$ otherwise)}
	\end{algorithmic}
\end{algorithm}

We finally emphasize that the procedure described in 
Algorithm~\ref{alg:Implementation generalized SLOPE screening test}
also applies to $\idxscreen<\tarvecdim$ as long as the screening test is passed for all $\idxscreen'>\idxscreen$.
More specifically, if test \eqref{eq: screening SLOPE all p} is passed for all $\idxscreen'\in\intervint{\idxscreen+1}{\tarvecdim}$, then its particularization to atom $\atom_{\column{\idxscreen}}$ reads
\begin{equation}\label{eq:concatenated test 3}
	\forall \thresinequ\in \setinequ':\ \kvbar{
	\ktranspose{\atom}_{\column{\idxscreen}}\spherec
	}
	< \thresinequ
\end{equation}
for some $\setinequ'\subseteq \setinequ$.

Indeed, if screening test \eqref{eq: screening SLOPE all p} is passed for all $\idxscreen'\in\intervint{\idxscreen+1}{\tarvecdim}$, the corresponding elements can be discarded from the dictionary and we obtain a reduced problem only involving atoms $\{\atom_{\column{\idxscreen'}}\}_{\idxscreen'\in\intervint{1}{\idxscreen}}$.
Since \eqref{eq:WH} is assumed to hold, $\atom_{\column{\idxscreen}}$ attains the smallest absolute inner product with $\spherec$ and we end up with the same setup as in the case ``$\idxscreen=\tarvecdim$''. 
In particular, if screening test \eqref{eq: screening SLOPE all p} is passed for all $\idxscreen'\in\intervint{\idxscreen+1}{\tarvecdim}$, \Cref{lemma: subset inequality} still holds for $\atom_{\column{\idxscreen}}$ by letting $\idxps^{(1)}=\idxps^\star(\idxscreen)$ in the definition of the sequence $\{\idxps^{(\idxinequ)}\}_\idxinequ$ in \eqref{eq:def q set}.

To conclude this section, let us summarize the complexity needed to implement \Cref{alg:Implementation generalized SLOPE screening test 2,alg:Implementation generalized SLOPE screening test}.
First, \Cref{alg:Implementation generalized SLOPE screening test 2} requires the entries $|\ktranspose{\dicomat}\spherec|$ to be sorted to satisfy hypothesis \eqref{lemma:nesting test}. This involves a complexity $\mathcal{O}(\tarvecdim\log\tarvecdim)$.
Moreover, the sequences $\{\screeningboundfunc({\idximp})\}_{{\idximp}=1}^{\tarvecdim}$, $\{\idximp^\star({\idxps})\}_{{\idxps}=1}^{\tarvecdim}$, $\{\idxps^\star(\idxtarelsorted)\}_{\idxtarelsorted=1}^{\tarvecdim}$ can be evaluated with a complexity $\mathcal{O}(\tarvecdim)$.
Finally, the main recursion in \Cref{alg:Implementation generalized SLOPE screening test 2} implies to run \Cref{alg:Implementation generalized SLOPE screening test} $\nscreen$ times, where $\nscreen$ is the number of atoms passing test \eqref{eq: screening SLOPE all p}.
Since \Cref{alg:Implementation generalized SLOPE screening test} requires to verify at most $\ninequ=\card[\setinequ]$ inequalities, the overall complexity of the main recursion scales as $\mathcal{O}(\nscreen\ninequ)$.
Overall, the complexity of \Cref{alg:Implementation generalized SLOPE screening test 2} is therefore $\mathcal{O}(\tarvecdim\log\tarvecdim+\nscreen\ninequ)$. \\


\section{Numerical simulations}
\label{sec:simus}
We present hereafter several simulation results demonstrating the effectiveness of the proposed screening procedure to accelerate the resolution of SLOPE.
This section is organized as follows.
In \Cref{subsec:simu:setup}, we present the experimental setups considered in our simulations.
In \Cref{subsec:simu:effectiveness} we compare the effectiveness of different screening strategies.
In \Cref{subsec:simu:bench}, we show that our methodology enables to reach better convergence properties for a given computational budget.\\


\subsection{Experimental setup} \label{subsec:simu:setup}
We detail below the experimental setups used in all our numerical experiments.

\vspace*{0.2cm}
\noindent
\textit{Dictionaries and observation vectors}:
New realizations of \(\dicomat\) and \(\obs\) are drawn for each trial as follows.
The observation vector is generated according to a uniform distribution on the \(\obsdim\)-dimensional sphere.
The elements of \(\dicomat\) obey one of the following models:\vspace*{0.1cm}
\begin{enumerate}
	\item the entries are i.i.d. realizations of a centered Gaussian,
	\item the entries are i.i.d. realizations of a uniform distribution on \([0,1]\),
	\item the columns are shifted versions of a Gaussian curve.\vspace*{0.1cm}
\end{enumerate}
For all distributions, the columns of \(\dicomat\) are normalized to have unit $\ell_2$-norm.
In the following, these three options will be respectively referred to as ``Gaussian'', ``Uniform'' and ``Toeplitz''.\vspace*{0.2cm}

\noindent
\textit{Regularization parameters}:
We consider three different choices for the sequence \(\{\slopeweight_\idxtarelsorted\}_{\idxtarelsorted=1}^\tarvecdim\), each of them corresponding to a different instance of the well-known OSCAR problem~\cite[Eq.~(3)]{Bondell2007}. More specifically, we let
\begin{equation}
	\label{eq:def-seq-oscar}
	\forall \idxtarelsorted\in\intervint{1}{\tarvecdim}:\
	\slopeoscar_\idxtarelsorted
	\triangleq
	\oscarparam_1 + \oscarparam_2 (\tarvecdim-\idxtarelsorted)
\end{equation}
where \(\oscarparam_1\), \(\oscarparam_2\) are nonnegative parameters chosen so that \(\slopeoscar_1=1\) and \(\slopeoscar_\tarvecdim\in\{.9, .1, 10^{-3}\}\).
In the sequel, these parametrizations will respectively be referred to as ``\oscar{1}'', ``\oscar{2}'' and ``\oscar{3}''.\\


\subsection{Performance of screening strategies}
\label{subsec:simu:effectiveness}

We first compare the effectiveness of different screening strategies described in \Cref{sec:screening-rule}.
More specifically, we evaluate the proportion of zero entries in \(\pvopt\) -- the solution of SLOPE problem~\eqref{eq:primal problem} -- that can be identified by tests \eqref{eq: safe screening for SLOPE p=q-1}, \eqref{eq: safe screening for SLOPE p=0} and \eqref{eq: screening SLOPE all p} as a function of the ``quality'' of the safe sphere.
These tests will respectively be referred to as ``\testA{}'', ``\testB{}'' and ``\testC{}'' in the following.
Figures \ref{fig:comparing choice of q_r} (see \Cref{sec:SLOPE screening main}) and \ref{fig:effectiveness} represent this criterion of performance as a function of some parameter \(\spherer_0\) (described below) and different values of the ratio \(\lambda / \lambda_{\max}\).
The results are averaged over \(50\) realizations.
For each simulation trial, we draw a new realization of \(\obs\in\kR^{\xpm}\) and \(\dicomat\in\kR^{\xpm\times\xpn}\) according to the distributions described in \Cref{subsec:simu:setup}. We consider Toeplitz dictionaries in \Cref{fig:comparing choice of q_r} and Gaussian dictionaries in \Cref{fig:effectiveness}.

The safe sphere used in the screening tests is constructed as follows. A primal-dual solution \((\pv_a,\dv_a)\) of problems~\eqref{eq:primal problem} and~\eqref{eq:dual problem} is evaluated with ``high-accuracy'', \textit{i.e.}, with a duality GAP of \(10^{-14}\) as stopping criterion.
More precisely, \(\pv_a\) is first evaluated by solving the SLOPE problem~\eqref{eq:primal problem} with the algorithm proposed in~\cite{Bogdan2015}.
{To evaluate \(\dv_a\), we extend the so-called ``dual scaling'' operator \cite[Section~3.3]{Ghaoui2010} to the SLOPE problem:
we let \(\dv_a = (\obs- \dicomat\pv_a)/\beta(\obs- \dicomat\pv_a)\)  where
\begin{equation}
	\label{eq:def dual scaling}
	\forall \bfz\in\kR^\obsdim:\ \beta(\bfz)
	\triangleq
	\max\kparen{
		1,
		\max_{\idxps\in\intervint{1}{\tarvecdim}}
		\frac{
			\sum_{\idxtarelsorted=1}^\idxps \kvbar{\ktranspose{\dicomat}\bfz}_{\sortedentry{\idxtarelsorted}}
		}{
			\lambda\sum_{\idxtarelsorted=1}^\idxps \slopeweight_\idxtarelsorted
		}
	}
	.
\end{equation}
The couple \((\pv_a,\dv_a)\) is then used to construct a sphere $\sphereregion(\spherec_a,\spherer_a)$ in \(\kR^\obsdim\) whose parameters are given by
\begin{subequations}
	\begin{align}
		\label{eq:xp-effectiveness:spherec}
		\spherec \;=\; & \dv_a               \\
		\label{eq:xp-effectiveness:spherer}
		\spherer 	\;=\; & \spherer_0 + \sqrt{
			2\kparen{
				\primfun(\pv_a) - \dualfun(\dv_a)
			}
		}
	\end{align}
\end{subequations}
where \(\spherer_0\) is a nonnegative scalar.}
We note that for \(\spherer_0=0\), the latter sphere corresponds to the GAP safe sphere described in~\eqref{eq:gap sphere}.\footnote{
	We note that the GAP safe sphere derived in \cite{Ndiaye2017} for problem~\eqref{eq:Ghaoui_GenericProblem} extends to SLOPE since 1) the dual problem has the same mathematical form and 2) its derivation does not leverage the definition of the dual feasible set.
}
Hence,~\eqref{eq:xp-effectiveness:spherec} and~\eqref{eq:xp-effectiveness:spherer} define a safe sphere for any choice of the nonnegative scalar \(\spherer_0 \geq 0\).

\begin{figure}
	\includegraphics[width=\columnwidth]{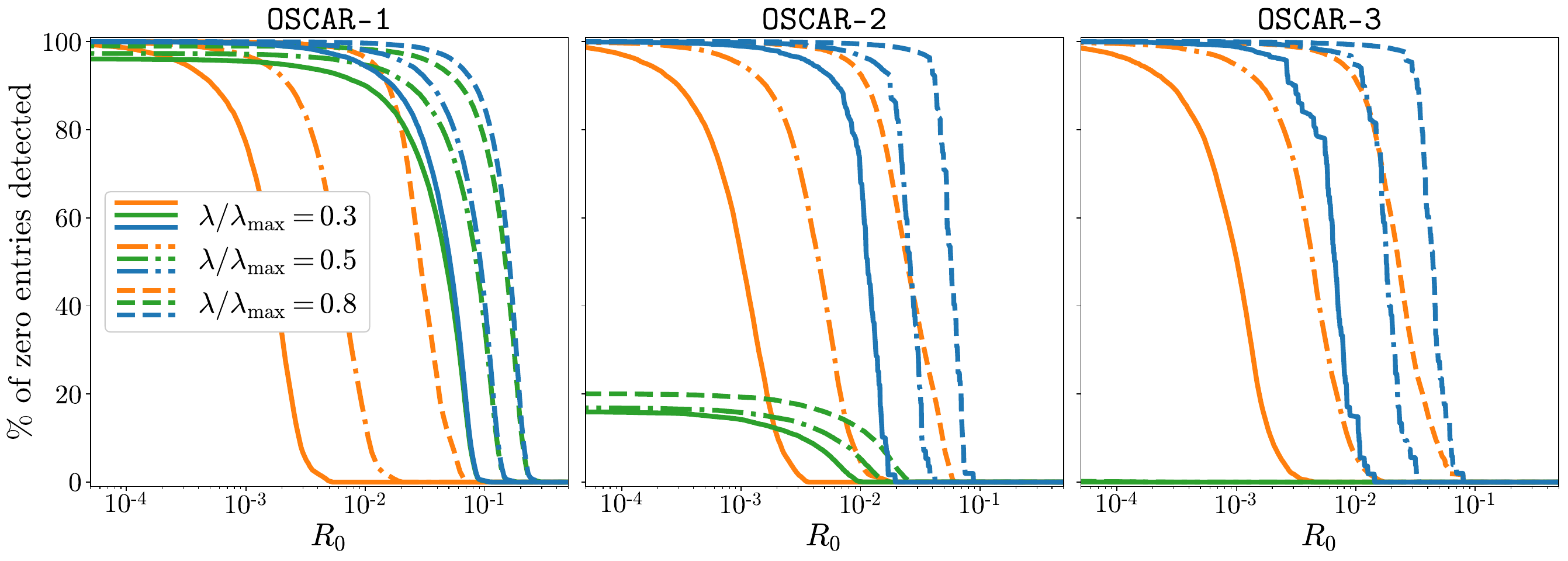}
	\caption{
	\label{fig:effectiveness}
	Percentage of zero entries in the solution of the SLOPE problem identified by \testA{} (orange lines), \testB{} (green lines) and \testC{} (blue lines) as a function of \(\spherer_0\) for the Gaussian dictionary, three values of \(\lambda/\lambda_{\max}\) and three parameter sequences \(\{\slopeweight_\idxtarelsorted\}_{\idxtarelsorted=1}^\tarvecdim\).
	}
\end{figure}

\Cref{fig:comparing choice of q_r} concentrates on the sequence \oscar{1} whereas  each subfigure corresponds to a different choice for \(\{\slopeweight_\idxtarelsorted\}_{\idxtarelsorted=1}^\tarvecdim\) in \Cref{fig:effectiveness}.
For the three considered screening strategies, we observe that the detection performance  decreases as \(\spherer_0\) increases.
Interestingly, different behaviors can be noticed.
For all simulation setups, \testA{} reaches a detection rate of \(100\%\) whenever \(\spherer_0\) is sufficiently small.
The performance of \testB{} varies from one sequence to another:
it outperforms \testA{} for \oscar{1}, is able to detect at most \(20\%\) of the zeros for \oscar{2} and fail for all values of \(\spherer_0\) for \oscar{3}.
Finally, \testC{} outperforms quite logically the two other strategies.
The gap in performance depends on both the considered setup and the radius \(\spherer_0\) but  can be quite significant in some cases.
For example, when \(\lambda/\lambda_{\max}=0.5\) and \(\spherer_0=10^{-2}\), there is \(80\%\) more entries passing \testC{} than \testA{} for all parameter sequences.


These results may be explained as follows.
First, we already mentioned in \Cref{sec:screening-rule} that when the radius of the safe sphere is sufficiently small (that is, when \(\spherer_0\) is close to zero), \testA{} is expected to be the best\footnote{in the sense defined in \cref{footnote:definition of best screening test} page~\pageref{footnote:definition of best screening test}.} screening test within the family of tests defined in \Cref{th: safe screening for SLOPE}.
Similarly, if the SLOPE weights satisfy \(\slopeweight_1=\slopeweight_\tarvecdim\), we showed in \Cref{lemma:optimality p=0 for LASSO} that no test in \Cref{th: safe screening for SLOPE} can outperform \testB{}.
Hence, one may reasonably expect that this conclusion remains valid whenever \(\slopeweight_1\simeq\slopeweight_\tarvecdim\), as observed for the sequence \oscar{1} in our simulations.
On the other hand, passing \testB{} becomes more difficult as parameter \(\slopeweight_\tarvecdim\) is small.
As a matter of fact, the test will \emph{never} pass when \(\slopeweight_\tarvecdim=0\).
In our experiments, the sequences \(\{\slopeweight_\idxtarelsorted\}_{\idxtarelsorted=1}^\tarvecdim\) are such that \(\slopeoscar_\tarvecdim\) is close to zero for \oscar{2} and \oscar{3}.
Finally, since \testC{} encompasses the two other tests, it is expected to always perform at least as well as the latter.\\


\subsection{Benchmarks}
\label{subsec:simu:bench}
As far as our simulation setup is concerned, the results presented in the previous section show a significant advantage in implementing \testC{} in terms of detection performance.
However, this conclusion does not include any consideration about the numerical complexity of the tests.
We note that, although the proposed screening rules can lead to a significant reduction of the problem dimensions, our tests also induce some additional computational burden.
In particular, we emphasized in \Cref{sec:implementation SLOPE screening} that \testC{} can be verified for all atoms of the dictionary with a complexity \(\calO(\tarvecdim\log\tarvecdim + \ninequ\nscreen)\) where \(\ninequ\leq \tarvecdim\) is a problem-dependent parameter and \(\nscreen\) is the number of atoms passing the test.
Moreover, we also note that, as far as a GAP safe sphere is considered in the implementation of the tests, its construction requires the identification of a dual feasible point $\dv$ and this operation typically induces a computational overhead of \(\mathcal{O}(\tarvecdim\log \tarvecdim)\) (see below for more details).

In this section, we therefore investigate the benefits (from a ``complexity-accuracy trade-off'' point of view) of interleaving the proposed safe screening methodology with the iterations of an accelerated proximal gradient algorithm~\cite{Bogdan2015}.
In all our tests, we consider the GAP safe sphere defined in \eqref{eq:gap sphere}.
The primal point used in the construction of the GAP sphere corresponds to the current iterate of the solving procedure, say $\pv^{(\idxit)}$. A dual feasible point $\dv^{(\idxit)}$ is constructed as
\begin{align}
	\dv^{(\idxit)} = \frac{\obs- \dicomat\pv^{(\idxit)}}{\beta(\obs- \dicomat\pv^{(\idxit)})}
\end{align}
where $\kfuncdef{\beta}{\kR^\obsdim}{\kR^\obsdim}$ is either defined as in \eqref{eq:def dual scaling} or as follows:
\begin{equation}
	\label{eq:def dual scaling 2}
	\forall \bfz\in\kR^\obsdim:\ \beta(\bfz)
	\triangleq
	\max
	\kparen{
		1,
		\displaystyle
		\max_{\idxtarelsorted\in\intervint{1}{\tarvecdim}}
		\frac{
			\adjustedvbar\ktranspose{\dicomat}\bfz\adjustedvbar_{\sortedentry{\idxtarelsorted}}
		}{
			\lambda \slopeweight_\idxtarelsorted
		}
	}.
\end{equation}
\eqref{eq:def dual scaling} matches the standard definition of the ``dual scaling'' operator proposed in \cite[Section~3.3]{Ghaoui2010} whereas \eqref{eq:def dual scaling 2} corresponds to the option considered in \cite{Bao:2020dq}.\footnote{
	See companion code of \cite{Bao:2020dq} available at\\ \url{https://github.com/brx18/Fast-OSCAR-and-OWL-Regression-via-Safe-Screening-Rules/tree/1e08d14c56bf4b6293899ae2092a5e0238d27bf6}.}
We notice that the two options require to sort the elements of $\kvbar{\ktranspose{\dicomat}\bfz}$ and thus lead to a complexity overhead scaling as $\mathcal{O}(\tarvecdim\log \tarvecdim)$.

In our simulations, we consider the four following solving strategies:\vspace{0.2cm}
\begin{enumerate}
	\item Run the proximal gradient procedure~\cite{Bogdan2015} with \textit{no} screening.
	\item Interleave some iterations of the proximal gradient algorithm with \testB{} and construct the dual feasible point with \eqref{eq:def dual scaling}.
	\item Interleave some iterations of the proximal gradient algorithm with \testB{} and construct the dual feasible point with \eqref{eq:def dual scaling 2}.
	\item Interleave some iterations of the proximal gradient algorithm with \testC{} and construct the dual feasible point with \eqref{eq:def dual scaling}.

	      \vspace{0.2cm}
\end{enumerate}
These strategies will respectively be denoted ``\algoSlope{}'', ``\algoSlopeScreening{}'', ``\algoSlopeBao{}'' and ``\algoSlopeScreeningp{}'' in the sequel.
We note that \algoSlopeBao{} closely matches the solving procedure considered in \cite{Bao:2020dq}.

We compare the performance of these solving strategies by resorting to Dolan-Moré profiles~\cite{Dolan2002}.
More precisely, we run each procedure for a given budget of time (that is
the algorithm is stopped after a predefined amount of time) on \(\nbRun=\xpnbrep\) different instances of the SLOPE problems.
In \algoSlopeScreening{}, \algoSlopeBao{} and \algoSlopeScreeningp{}, the screening procedure is applied once every \xpscreeningitparam{} iterations.
Each problem instance is generated by drawing a new dictionary \(\dicomat\in\kR^{\xpm\times\xpn}\) and observation vector \(\obs\in\kR^{\xpm}\) according to the distributions described in \Cref{subsec:simu:setup}.
We then compute the following performance profile for each solver \(\idSolver\in\kbrace{\text{\algoSlope{}, \algoSlopeScreening{}, \algoSlopeBao{}, \algoSlopeScreeningp{}}}\):
\begin{equation}
	\label{eq:exp:def_performance_profile}
	\rho_{\idSolver} (\xpgapvar) \triangleq 100\,\frac{\card[\kset{\idRun\in\intervint{1}{\nbRun}}{d_{\idRun,\idSolver}\leq \xpgapvar}]}{\nbRun}
	\quad
	\forall \xpgapvar\in\kR+
\end{equation}
where
\(d_{\idRun, \idSolver}\)  denotes the dual gap attained by solver \(\idSolver\) for problem instance \(\idRun\).
\(\rho_{\idSolver}(\xpgapvar)\) thus represents the (empirical) probability that solver \(\idSolver\) reaches a dual gap no greater than \(\xpgapvar\) for the considered budget of time.

\Cref{fig:bench} presents the performance profiles obtained for three types of dictionaries (Gaussian, Uniform and Toeplitz) and three different weighting sequences \(\{\slopeweight_\idxtarelsorted\}_{\idxtarelsorted=1}^\tarvecdim\) (\oscar{1}, \oscar{2} and \oscar{3}). The results are displayed for \(\lambda/\lambda_{\max} = 0.5\) but similar performance profiles have been obtained for other values of the ratio \(\lambda/\lambda_{\max}\).
All algorithms are implemented in Python with Cython bindings and experiments are run on a Dell laptop, 1.80  GHz, Intel Core i7.
For each setup, we adjusted the time budget so that \(\rho_{\text{\algoSlopeScreeningp{}}} (10^{-8})\simeq50\%\) for the sake of comparison.

As far as our simulation setup is concerned, these results show that the proposed screening methodologies improve the solving accuracy as compared to a standard proximal gradient.
\algoSlopeScreeningp{} improves the average accuracy over \algoSlope{} in all the considered settings. The gap in performance depends on the setup but is generally quite significant.
\algoSlopeScreening{} also enhances the average accuracy in most cases and performs at least comparably to \algoSlopeBao{} in all setups.
As expected, the behavior of \algoSlopeScreening{} and \algoSlopeBao{} is more sensitive to the choice of the weighting sequence \(\{\slopeweight_\idxtarelsorted\}_{\idxtarelsorted=1}^\tarvecdim\).
In particular, the screening performance of these strategies decreases when \(\slopeweight_\tarvecdim\simeq 0\) as emphasized in \Cref{subsec:simu:effectiveness}.
This results in no accuracy gain over \algoSlope{} for the sequence \oscar{3} as illustrated in \Cref{fig:bench}.
Nevertheless, we note that, even in absence of gain, \algoSlopeScreening{} and \algoSlopeBao{} do not seem to significantly degrade the performance as compared to \algoSlope{}.
\\

\begin{figure}
	\includegraphics[width=\columnwidth]{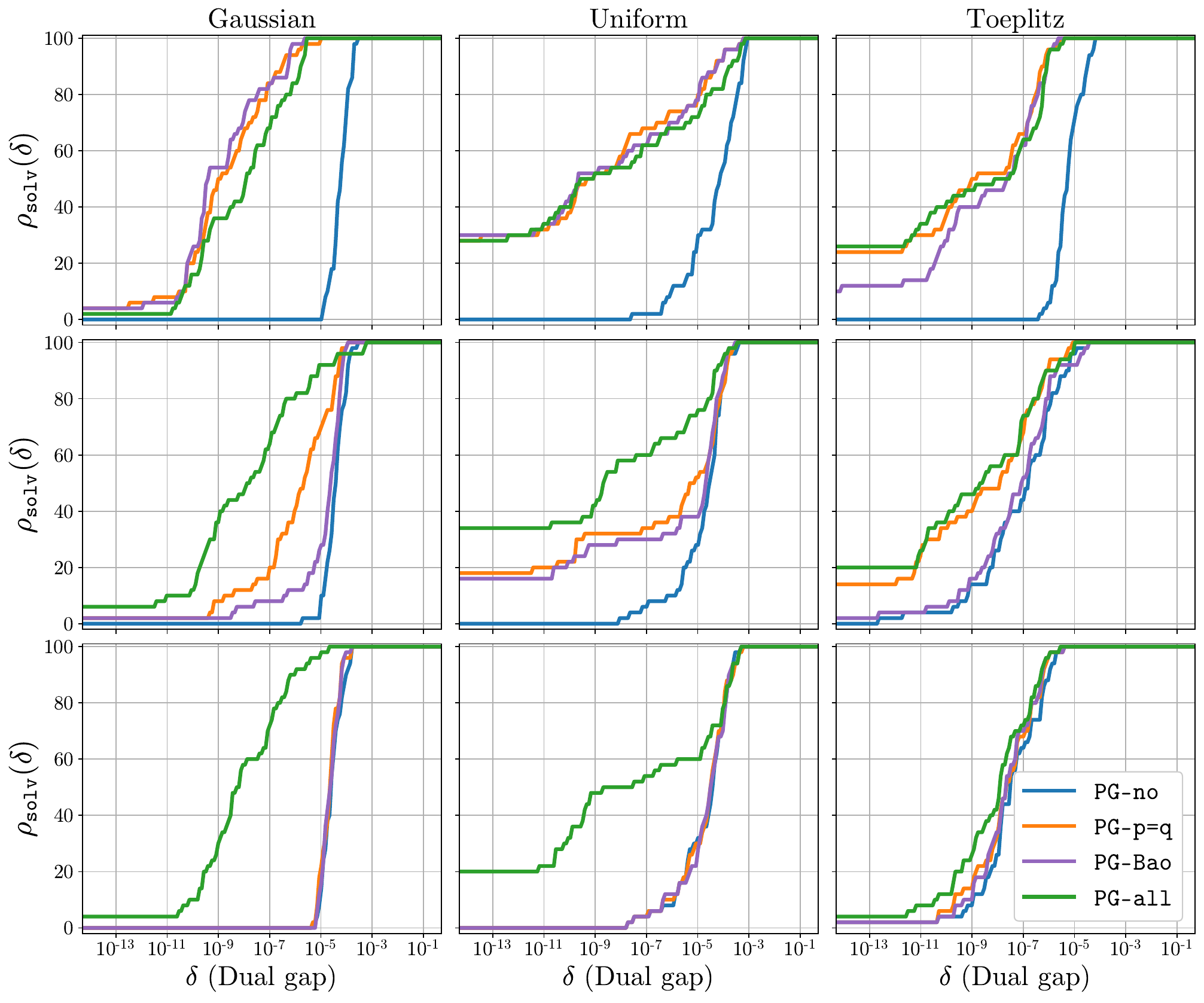}
	\caption{
		\label{fig:bench}
		Performance profiles of \algoSlope{}, \algoSlopeScreening{}, \algoSlopeBao{} and \algoSlopeScreeningp{} obtained for the ``Gaussian'' (column 1), ``Uniform'' (column 2) and ``Toeplitz'' (column 3) dictionaries and \(\lambda / \lambda_{\max}=0.5\) with a budget of time.
		First row: \oscar{1}, second row: \oscar{2} and third row: \oscar{3}.
	}
\end{figure}


\section{Conclusions}
\label{sec:conclusions}
In this paper we proposed a new methodology to safely identify the zeros of the solutions of the SLOPE problem.
In particular, we introduced a family of screening rules indexed by some parameters $\{\idximp_\idxps\}_{\idxps=1}^\tarvecdim$ where $\tarvecdim$ is the dimension of the primal variable.
Each test of this family takes the form of a series of \(\pvdim\) inequalities which, when verified, imply the nullity of some coefficient of the minimizers.
Interestingly, the proposed tests encompass standard ``sphere'' screening rule for LASSO as a particular case for some $\{\idximp_\idxps\}_{\idxps=1}^\tarvecdim$, although this choice does not correspond to the most effective test in the general case.
We then introduced an efficient numerical procedure to jointly  evaluate all the tests in the proposed family.
Our algorithm has a complexity $\mathcal{O}(\tarvecdim \log \tarvecdim + \ninequ\nscreen)$ where $\ninequ\leq \tarvecdim$ is some problem-dependent constant and $\nscreen$ is the number of elements passing at least one test of the family.
We finally assessed the performance of our screening strategy through numerical simulations and showed that the proposed methodology leads to significant improvements of the solving accuracy for a prescribed computational budget.  \\

\section*{Acknowledgments}
The authors would like to thank the anonymous reviewers for their thoughtful comments and for pointing out one technical flaw in the first version of the manuscript.\\

\appendix


\section{Miscellaneous results}\label{app:Miscellaneous results}
\Cref{subsec:app:sub-diff} reminds some useful results from convex analysis applied to the SLOPE problem~\eqref{eq:primal problem}.
\Cref{subsec:app:proof cns solutions slope is zero} provides a proof of~\eqref{eq:cns solution slope is zero}. In all the statements below, \(\partial\regslope(\pv)\) denotes the subdifferential of \reffunctext{\regslope} evaluated at \(\pv\).\\

\subsection{Some results of convex analysis}  \label{subsec:app:sub-diff}
We remind below several results of convex analysis that will be used in our subsequent derivations.
The first lemma provides a necessary and sufficient condition for \(\pvopt\in\kR^\pvdim\) to be a minimizer of the SLOPE problem~\eqref{eq:primal problem}:

\noindent
\begin{lemma}
	\label{lemma:fermat s rule}
	\(\text{\(\pvopt\) is a minimizer of~\eqref{eq:primal problem}}
	\ \Longleftrightarrow\
	\kinv{\lambda}\ktranspose{\dicomat}(\obs - \dicomat\pvopt)
	\in
	\partial\regslope(\pvopt)\).\\
\end{lemma}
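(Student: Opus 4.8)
The plan is to combine Fermat's rule with the subdifferential sum rule. Writing \(\datafun(\tarvec) \triangleq \tfrac12\kvvbar{\obs - \dicomat\tarvec}_2^2\) for the data-fitting term, the primal objective reads \(\primfun = \datafun + \lambda\regslope\). First I would note that \(\primfun\) is a proper, convex, finite-valued function on \(\kR^\tarvecdim\): indeed \(\datafun\) is convex and (Fréchet) differentiable everywhere, being the composition of the positive semidefinite quadratic \(\tfrac12\kvvbar{\obs-\cdot}_2^2\) with the linear map \(\tarvec\mapsto\dicomat\tarvec\), while \(\regslope\) is a norm --- hence convex --- by~\cite[Proposition~1.1]{Bogdan2013statistical}, and \(\lambda>0\). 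By Fermat's rule~\cite[Theorem~16.3]{Bauschke2017}, \(\pvopt\) is a minimizer of~\eqref{eq:primal problem} if and only if \(\0 \in \partial\primfun(\pvopt)\).

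Next I would evaluate \(\partial\primfun(\pvopt)\) through the sum rule. Since \(\datafun\) is finite and continuous on all of \(\kR^\tarvecdim\), the intersection of its domain with \(\mathrm{dom}\,\regslope = \kR^\tarvecdim\) contains a point of continuity of \(\datafun\), so the Moreau--Rockafellar sum rule applies without any further constraint qualification~\cite[Corollary~16.48]{Bauschke2017} and yields \(\partial\primfun(\pvopt) = \nabla\datafun(\pvopt) + \lambda\,\partial\regslope(\pvopt)\). A direct computation gives \(\nabla\datafun(\pvopt) = -\ktranspose{\dicomat}(\obs - \dicomat\pvopt)\).

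Putting the two steps together, \(\pvopt\) is a minimizer of~\eqref{eq:primal problem} if and only if \(\0 \in -\ktranspose{\dicomat}(\obs - \dicomat\pvopt) + \lambda\,\partial\regslope(\pvopt)\); dividing by \(\lambda>0\) and rearranging, this is equivalent to \(\kinv{\lambda}\ktranspose{\dicomat}(\obs - \dicomat\pvopt) \in \partial\regslope(\pvopt)\), which is the claimed equivalence.

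I do not expect any genuine obstacle here: the only point deserving attention is the applicability of the sum rule for subdifferentials, and this is immediate because one of the two summands (\(\datafun\)) is finite and continuous everywhere, which rules out any constraint-qualification issue; the remaining steps are the elementary gradient computation for a quadratic and a rescaling by the positive constant \(\lambda\).
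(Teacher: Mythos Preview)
Your proposal is correct and follows the same route as the paper, which simply states that the lemma is a direct application of Fermat's rule~\cite[Proposition~16.4]{Bauschke2017} to problem~\eqref{eq:primal problem}. You have merely made explicit the two ingredients implicit in that one-line justification---the sum rule for subdifferentials (trivially applicable since \(\datafun\) is everywhere continuous) and the gradient computation for the quadratic term---so there is nothing to add.
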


\noindent
\Cref{lemma:fermat s rule} follows from a direct application of Fermat's rule~\cite[Proposition~16.4]{Bauschke2017} to problem~\eqref{eq:primal problem}.
We note that under condition~\eqref{eq:hyp slopeweigths}, \reffunctext{\regslope}
defines a norm on \(\kR^\tarvecdim\), see \eg \cite[Proposition~1.1]{Bogdan2013statistical} or~\cite[Lemma~2]{Zeng:2014AtomicNorm}.
The subdifferential \(\partial \regslope(\tarvec)\) is therefore well defined for all \(\tarvec\in\kR^\tarvecdim\) and writes as
\begin{equation}
	\label{eq:app:proof:subdifferential norm}
	\partial\regslope(\tarvec)
	=
	\kset{
		\subdiffvec\in\kR^\tarvecdim
	}{
		\ktranspose{\subdiffvec}\tarvec = \regslope(\tarvec)
		\text{ and }
		\dualregslope(\subdiffvec) \leq 1
	},
\end{equation}
where
\begin{equation}
	\dualregslope(\subdiffvec) \triangleq \sup_{\tarvec\in\kR^\pvdim}
	\ktranspose{\subdiffvec}\pv
	\ \text{ s.t.}\
	\regslope(\tarvec)\leq 1
\end{equation}
is the dual norm of
\reffunctext{\regslope}, see \eg \cite[Eq.~(1.4)]{bach2011book}.

The next lemma states a technical result which will be useful in the proof of \Cref{th:safe-screening} in \Cref{sec:app:main-proofs}:\\

\begin{lemma}
	\label{lemma:sudiff property}
	If \(\subdiffvec\in\partial\regslope(\pv)\), then \(\ktranspose{\pv}(\subdiffvec-\subdiffvec')
	\geq 0
	\,
	\forall\subdiffvec'\in\kR^\pvdim \text{ s.t. } \dualregslope(\subdiffvec')\leq 1.\)
\end{lemma}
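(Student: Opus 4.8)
The plan is to unwind the characterization of the subdifferential of a norm provided in~\eqref{eq:app:proof:subdifferential norm} and then invoke the standard duality inequality between a norm and its dual norm. The whole argument is a two-line computation once these ingredients are in place; there is no real obstacle here, the only thing to keep track of is the degenerate case \(\pv=\0_\pvdim\).

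First I would record what membership \(\subdiffvec\in\partial\regslope(\pv)\) means. By~\eqref{eq:app:proof:subdifferential norm}, it is equivalent to the two conditions \(\ktranspose{\subdiffvec}\pv=\regslope(\pv)\) and \(\dualregslope(\subdiffvec)\leq 1\). In particular the first of these gives the exact value \(\ktranspose{\pv}\subdiffvec=\regslope(\pv)\), which will be the ``positive'' part of the desired difference.

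Next I would bound \(\ktranspose{\pv}\subdiffvec'\) from above for any \(\subdiffvec'\) with \(\dualregslope(\subdiffvec')\leq 1\). If \(\pv\neq\0_\pvdim\), then \(\tarvec\triangleq\pv/\regslope(\pv)\) satisfies \(\regslope(\tarvec)=1\) (using that \reffunctext{\regslope} is a norm under~\eqref{eq:hyp slopeweigths}), so by the definition of \(\dualregslope\) we get \(\ktranspose{\subdiffvec'}\tarvec\leq\dualregslope(\subdiffvec')\leq 1\), i.e.\ \(\ktranspose{\pv}\subdiffvec'\leq\regslope(\pv)\). If \(\pv=\0_\pvdim\) the inequality \(\ktranspose{\pv}\subdiffvec'\leq\regslope(\pv)\) holds trivially since both sides vanish. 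Hence in all cases \(\ktranspose{\pv}\subdiffvec'\leq\regslope(\pv)\).

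Finally I would combine the two: \(\ktranspose{\pv}(\subdiffvec-\subdiffvec')=\ktranspose{\pv}\subdiffvec-\ktranspose{\pv}\subdiffvec'=\regslope(\pv)-\ktranspose{\pv}\subdiffvec'\geq\regslope(\pv)-\regslope(\pv)=0\), which is exactly the claim. \(\square\)
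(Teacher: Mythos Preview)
Your proof is correct and arguably more direct than the paper's. The paper argues via Fenchel duality: it invokes the equivalence \(\subdiffvec\in\partial\regslope(\pv)\Leftrightarrow\pv\in\partial\regslope^*(\subdiffvec)\), writes out the subgradient inequality \(\regslope^*(\subdiffvec')\geq\regslope^*(\subdiffvec)+\ktranspose{\pv}(\subdiffvec'-\subdiffvec)\), and then uses that \(\regslope^*\) is the indicator of the dual unit ball (so both conjugate values vanish when \(\dualregslope(\subdiffvec),\dualregslope(\subdiffvec')\leq 1\)). You instead stay entirely on the primal side: you read off \(\ktranspose{\pv}\subdiffvec=\regslope(\pv)\) from the explicit description~\eqref{eq:app:proof:subdifferential norm} and bound \(\ktranspose{\pv}\subdiffvec'\leq\regslope(\pv)\) by the defining inequality of the dual norm. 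Your route avoids the conjugate-subdifferential inversion theorem and is self-contained given~\eqref{eq:app:proof:subdifferential norm}; the paper's route is perhaps more conceptual in that it identifies the result as a special case of the subgradient inequality for \(\regslope^*\). Both are equally valid and equally short.
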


\noindent
%
\begin{proof}
	Let \(\subdiffvec\in\partial\regslope(\pv)\).
	One has
	\begin{align}
		\subdiffvec\in\partial\regslope(\pv)
		\;\Longleftrightarrow\; &
		\pv\in\partial \regslope^*(\subdiffvec)
		\nonumber                 \\
		\;\Longleftrightarrow\; &
		\forall \subdiffvec'\in\kR^\pvdim, \;
		\regslope^*(\subdiffvec') \geq \regslope^*(\subdiffvec) + \kangle{\pv, \subdiffvec' - \subdiffvec}
	\end{align}
	where \reffunctext{\regslope^*} refers to the Fenchel conjugate of \reffunctext{\regslope}.
	The first equivalence is a consequence of \cite[Theorem~16.29]{Bauschke2017} and
	the second of the definition of the subdifferential set.
	\Cref{lemma:sudiff property} follows by noticing that \(\regslope^*(\subdiffvec')=0\) \(\forall\subdiffvec'\in\kR^\pvdim\) such that \(\dualregslope(\subdiffvec')\leq1\) by property of \reffunctext{\regslope^*} \cite[Item~(v) of Example 13.3]{Bauschke2017}. 
	%
	%
\end{proof}

\vspace{2em}
In the last lemma of this section, we provide a closed-form expression of the subdifferential and the dual norm of \reffunctext{\regslope}:\footnote{We note that an expression of the subdifferential of
	\reffunctext{\regslope}
	has already been derived in~\cite[Fact A.2 in supplementary material]{Bu2019Algo}. However, the expression of the subdifferential proposed in \Cref{lemma:sub-diff} has a more compact form and is better suited to our subsequent derivations.}

\begin{lemma}
	\label{lemma:sub-diff}
	The dual norm and the subdifferential of \(\regslope(\pv)\) respectively write:
	\begin{equation*}
		\begin{split}
			\nonumber
			\dualregslope(\subdiffvec)
			&=
			\max_{\idxps\in\intervint{1}{\pvdim}}
			\
			\frac{1}{\sum_{\idxtarelsorted=1}^{\idxps} \slopeweight_\idxtarelsorted}
			\sum_{\idxtarelsorted=1}^{\idxps} |\subdiffvec|_{\sortedentry{\idxtarelsorted}}
			,\\
			\nonumber
			\partial \regslope(\tarvec)
			&=
			\kset{
			\subdiffvec\in\kR^\tarvecdim
			}{
			\ktranspose{\subdiffvec}\tarvec = \regslope(\tarvec)
			\text{ and }
			\forall\idxps\in\intervint{1}{\tarvecdim}:\
			\sum_{\idxtarelsorted=1}^{\idxps} |\subdiffvec|_{\sortedentry{\idxtarelsorted}} \leq \sum_{\idxtarelsorted=1}^{\idxps} \slopeweight_\idxtarelsorted
			}.
		\end{split}
	\end{equation*}
\end{lemma}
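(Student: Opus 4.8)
The plan is to first derive the closed form of the dual norm $\dualregslope$ and then read off the description of the subdifferential from \eqref{eq:app:proof:subdifferential norm}. Throughout, I would abbreviate $A_\idxps \triangleq \sum_{\idxtarelsorted=1}^{\idxps} |\subdiffvec|_{\sortedentry{\idxtarelsorted}}$ and $W_\idxps \triangleq \sum_{\idxtarelsorted=1}^{\idxps} \slopeweight_\idxtarelsorted$ for $\idxps\in\intervint{1}{\pvdim}$; since $W_\idxps \geq \slopeweight_1 > 0$ by \eqref{eq:hyp slopeweigths}, each ratio $A_\idxps/W_\idxps$ is well defined, and I set $c \triangleq \max_{\idxps\in\intervint{1}{\pvdim}} A_\idxps / W_\idxps$. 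The goal is to show $\dualregslope(\subdiffvec) = c$.

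For the \emph{upper bound} $\dualregslope(\subdiffvec)\leq c$, I would fix $\pv$ with $\regslope(\pv)\leq 1$ and bound $\ktranspose{\subdiffvec}\pv$ in two steps. First, $\ktranspose{\subdiffvec}\pv \leq \sum_{\idxtarelsorted=1}^{\pvdim} |\subdiffvec|_{\sortedentry{\idxtarelsorted}}\,|\pv|_{\sortedentry{\idxtarelsorted}}$ by the rearrangement inequality (among all pairings of the two families of moduli, sorting both in the same order maximizes the sum of products). Second, a summation by parts against the non-increasing, nonnegative sequence $|\pv|_{\sortedentry{1}}\geq\cdots\geq|\pv|_{\sortedentry{\pvdim}}$, writing $|\subdiffvec|_{\sortedentry{\idxtarelsorted}} = A_\idxtarelsorted - A_{\idxtarelsorted-1}$ and using $A_\idxtarelsorted \leq c\,W_\idxtarelsorted$ for \emph{every} $\idxtarelsorted$, gives $\sum_{\idxtarelsorted} |\subdiffvec|_{\sortedentry{\idxtarelsorted}}|\pv|_{\sortedentry{\idxtarelsorted}} \leq c \sum_{\idxtarelsorted} \slopeweight_\idxtarelsorted |\pv|_{\sortedentry{\idxtarelsorted}} = c\,\regslope(\pv) \leq c$. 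Taking the supremum over feasible $\pv$ yields the bound. This is the Hardy--Littlewood--Pólya majorization argument, and it is the only nontrivial estimate in the proof: the care required is to exploit simultaneously the non-increasing ordering of $\{|\pv|_{\sortedentry{\idxtarelsorted}}\}$ and the inequality $A_\idxtarelsorted \leq c\,W_\idxtarelsorted$ at \emph{all} indices, not merely at the maximizing one.

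For the \emph{lower bound} $\dualregslope(\subdiffvec)\geq c$, I would exhibit an explicit maximizer: let $\idxps^\star$ attain $c$, let $\calI$ be a set of $\idxps^\star$ indices realizing the $\idxps^\star$ largest values of $|\subdiffvec|$, and set $\pv_{\entry{\idxtarel}} = \sign[\subdiffvec_{\entry{\idxtarel}}]/W_{\idxps^\star}$ for $\idxtarel\in\calI$ and $\pv_{\entry{\idxtarel}} = 0$ otherwise. Then all nonzero entries of $\pv$ have modulus $1/W_{\idxps^\star}$, so $\regslope(\pv) = W_{\idxps^\star}/W_{\idxps^\star} = 1$ while $\ktranspose{\subdiffvec}\pv = A_{\idxps^\star}/W_{\idxps^\star} = c$; in particular the supremum defining $\dualregslope(\subdiffvec)$ is attained and equals $c$, which together with the previous step proves the announced formula for $\dualregslope$.

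For the \emph{subdifferential}, since \reffunctext{\regslope} is a norm, \eqref{eq:app:proof:subdifferential norm} reads $\partial\regslope(\pv) = \{\,\subdiffvec : \ktranspose{\subdiffvec}\pv = \regslope(\pv),\ \dualregslope(\subdiffvec)\leq 1\,\}$, and by the formula just proved $\dualregslope(\subdiffvec)\leq 1$ is equivalent to $A_\idxps \leq W_\idxps$ for every $\idxps\in\intervint{1}{\pvdim}$, i.e. $\sum_{\idxtarelsorted=1}^{\idxps} |\subdiffvec|_{\sortedentry{\idxtarelsorted}} \leq \sum_{\idxtarelsorted=1}^{\idxps} \slopeweight_\idxtarelsorted$ for all $\idxps$, which is exactly the claimed description. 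The main obstacle is thus concentrated in the summation-by-parts estimate of the upper bound; everything else is bookkeeping.
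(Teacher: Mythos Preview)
Your argument is correct. The route, however, differs from the paper's. The paper does not bound $\ktranspose{\subdiffvec}\pv$ directly over the SLOPE unit ball; instead it invokes \cite[Lemma~4]{Zeng:2014AtomicNorm}, which identifies $\regslope$ as an atomic norm and hence expresses $\dualregslope(\subdiffvec)$ as a maximum of $\ktranspose{\subdiffvec}\bfv$ over the finite union of atomic sets $\atomicSet_\idxps=\{(\sum_{\idxtarelsorted\leq\idxps}\slopeweight_\idxtarelsorted)^{-1}\atomicsignvec:\atomicsignvec\in\{0,\pm1\}^\pvdim,\ \|\atomicsignvec\|_0=\idxps\}$. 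The claimed formula then follows by computing $\max_{\bfv\in\atomicSet_\idxps}\ktranspose{\subdiffvec}\bfv$ for each fixed $\idxps$, which is immediate. Your lower-bound witness is essentially the same atom the paper exhibits; the substantive difference lies in the upper bound, where you replace the external atomic-norm characterization by a self-contained Hardy--Littlewood--P\'olya (Abel summation) estimate exploiting $A_\idxtarelsorted\leq cW_\idxtarelsorted$ against the non-increasing weights $|\pv|_{\sortedentry{\idxtarelsorted}}$. The trade-off is clear: the paper's proof is shorter but outsources the nontrivial step to \cite{Zeng:2014AtomicNorm}, whereas yours is elementary and standalone. One cosmetic point: with the paper's convention $\sign[0]=0$, your constructed $\pv$ may have fewer than $\idxps^\star$ nonzero entries when $|\subdiffvec|_{\sortedentry{\idxps^\star}}=0$, so $\regslope(\pv)\leq1$ rather than $=1$; this is harmless since only feasibility is needed for the lower bound.
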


\begin{proof}
	The expression of the dual norm is a direct consequence of~\cite[Lemma~4]{Zeng:2014AtomicNorm}.
	More precisely, the authors showed that
	\begin{equation}
		\label{eq:app:proof:dual norm atomic norm}
		\dualregslope(\subdiffvec) = \max_{\bfv\in\bigcup_{\idxps=1}^\tarvecdim\atomicSet_\idxps} \; \ktranspose{\subdiffvec}\bfv
	\end{equation}
	where 
	\(\atomicSet_\idxps \triangleq
	\kset{
		\tfrac{1}{
			\sum_{\idxtarelsorted=1}^{\idxps}
			\slopeweight_{\idxtarelsorted}
		}
		\atomicsignvec
	}{
		\atomicsignvec\in\{0,-1,+1\}^\tarvecdim,
		\card[\{\idxtarel : \atomicsignvec_{\entry{\idxtarel}} \neq 0\}] = \idxps
	}\) for all \(\idxps\in\intervint{1}{\pvdim}\).
	The expression of \reffunctext{\dualregslope}
	given in \Cref{lemma:sub-diff} is a compact rewriting of 
	\eqref{eq:app:proof:dual norm atomic norm}
	that can be obtained as follows.
	See first that for all \(\idxps\in\intervint{1}{\pvdim}\),
	\begin{equation} \label{eq:upper bound proof sub diff}
		\max_{\bfv\in\atomicSet_\idxps} \; \ktranspose{\subdiffvec}\bfv
		\leq
		\frac{1}{
			\sum_{\idxtarelsorted=1}^{\idxps}
			\slopeweight_{\idxtarelsorted}
		}
		\sum_{\idxtarelsorted=1}^\idxps
		\kvbar{\subdiffvec}_{\sortedentry{\idxtarelsorted}}
		.
	\end{equation}
	Second, for \(\idxps\in\intervint{1}{\pvdim}\), let \(\mathcal{J}_\idxps\subset\intervint{1}{\pvdim}\) be a set \(\idxps\) distinct indices such that \(|\subdiffvec_{\entry{\idxtarel}}| \geq \kvbar{\subdiffvec}_{\sortedentry{\idxps}}\) for all \(\idxtarel\in\mathcal{J}_\idxps\).
	Then, the upper bound in~\eqref{eq:upper bound proof sub diff} is attained by evaluating the left-hand side at \(\bfv\in\atomicSet_\idxps\) defined as
	\begin{equation}
		\forall \idxtarel\in\intervint{1}{\pvdim}:\quad
		\bfv_{\entry{\idxtarel}}=
		\begin{cases}
			\tfrac{1}{
				\sum_{\idxtarelsorted=1}^{\idxps}
				\slopeweight_{\idxtarelsorted}
			}\,\sign[\subdiffvec_{\entry{\idxtarel}}] & \text{ if } \idxtarel\in\mathcal{J}_\idxps \\
			0                                         & \text{otherwise.}
		\end{cases}
	\end{equation}
	The expression of the subdifferential follows from \eqref{eq:app:proof:subdifferential norm} by plugging the expression of the dual norm in the inequality ``\(\dualregslope(\subdiffvec)\leq1\)''.
\end{proof}

\vspace{0.2cm}

\subsection{Proof of~\texorpdfstring{\eqref{eq:cns solution slope is zero}}{(\ref*{eq:cns solution slope is zero})}} \label{subsec:app:proof cns solutions slope is zero}

We first observe that
\begin{equation}
	\label{eq:applying-fermat-rule}
	\text{\(\0_\tarvecdim\) is not a minimizer of~\eqref{eq:primal problem}}
	\Longleftrightarrow
	\kinv{\lambda} \ktranspose{\dicomat} \obs
	\notin
	\partial \regslope(\0_\tarvecdim)
	,
\end{equation}
as a direct consequence of \Cref{lemma:fermat s rule}.
Particularizing the expression of \(\partial\regslope(\tarvec)\) in \Cref{lemma:sub-diff} to \(\tarvec={\bf0}_\tarvecdim\), the right-hand side of~\eqref{eq:applying-fermat-rule} can equivalently be rewritten as
\begin{equation}
	\label{eq:proof-0issol-buf}
	\exists\idxps\in\intervint{1}{\tarvecdim}:\;
	\kinv{\lambda} \sum_{\idxtarelsorted=1}^{\idxps} \adjustedvbar{\ktranspose{\dicomat}\obs}\adjustedvbar_{\sortedentry{\idxtarelsorted}} >
	\sum_{\idxtarelsorted=1}^{\idxps}\slopeweight_\idxtarelsorted
	.
\end{equation}
Since \(\slopeweight_1>0\) and the sequence \(\{\slopeweight_\idxtarelsorted\}_{\idxtarelsorted=1}^{\tarvecdim}\) is nonnegative by hypothesis~\eqref{eq:hyp slopeweigths},~\eqref{eq:proof-0issol-buf} can also be rewritten as
\begin{equation}
	\label{eq:proof-0issol-buf 2}
	\exists\idxps\in\intervint{1}{\tarvecdim}:\;
	\lambda
	<
	\frac{
		\sum_{\idxtarelsorted=1}^{\idxps} \adjustedvbar{\ktranspose{\dicomat}\obs}\adjustedvbar_{\sortedentry{\idxtarelsorted}}
	}{
		\sum_{\idxtarelsorted=1}^{\idxps}\slopeweight_\idxtarelsorted
	}
	.
\end{equation}
The statement in~\eqref{eq:cns solution slope is zero} then follows by noticing that the right-hand side of \eqref{eq:lambda-such-that-0-is-sol} is a compact reformulation of~\eqref{eq:proof-0issol-buf 2}.\\


\section{Proofs related to screening tests}
\label{sec:app:main-proofs}

\subsection{Proof of \texorpdfstring{\Cref{th:safe-screening}}{Theorem~\ref*{th:safe-screening}}} \label{subsec:app:proof ideal screening Slope}
In this section, we provide the technical details leading to \eqref{eq: ideal safe screening test}.
Our derivation leverages the Fermat's rule and the expression of the subdifferential derived in \Cref{lemma:sub-diff}.

We prove~\eqref{eq: ideal safe screening test} by contraposition.
More precisely, we show that if \(\pvopt_{\entry{\idxscreen}}\neq0\) for some \(\idxscreen\in\intervint{1}{\tarvecdim}\), then 
\begin{equation}
	\label{proof:eq:target statement}
	\exists \idxps_0 \in\intervint{1}{\pvdim},\; \adjustedvbar\ktranspose{\atom}_{\column{\idxscreen}}\dvopt\adjustedvbar + \sum_{\idxtarelsorted=1}^{\idxps_0-1} \adjustedvbar\ktranspose{\dicomat}_{\backslash \idxscreen}\dvopt\adjustedvbar_{\sortedentry{\idxtarelsorted}}
	=
	\lambda \sum_{\idxtarelsorted=1}^{\idxps_0} \slopeweight_\idxtarelsorted
	.
\end{equation}
Using \Cref{lemma:fermat s rule} and the following connection between primal-dual solutions (see~\cite[Section~2.5]{Bogdan2013statistical})
\begin{equation}
	\label{eq:proof:optimality condition dual and primal sol}
	\dvopt = \obs - \dicomat \pvopt
	,
\end{equation}
we have that \(\pvopt\) is a minimizer of~\eqref{eq:primal problem} if and only if
\begin{equation}
	\label{eq:fermatsrule}
	\subdiffvec^\star \triangleq
	\kinv{\lambda}\ktranspose{\dicomat}\dvopt
	\in
	\partial\regslope(\pvopt)
	.
\end{equation}
In the rest of the proof, we will use \Cref{lemma:sudiff property} with \(\tarvec=\pvopt\), \(\subdiffvec=\subdiffvec^\star\) and different instances of vector \(\subdiffvec'\) to prove our statement.
First, let us define \(\subdiffvec'\in\kR^\tarvecdim\) as
\begin{equation*}
	\begin{split}
		\subdiffvec'_{\entry{\idxtarel}}
		\;=\;&
		\subdiffvec^\star_{\entry{\idxtarel}}
		\quad
		\forall\idxtarel\in\intervint{1}{\pvdim}\setminus\{\idxscreen\}, \\
		%
		\subdiffvec'_{\entry{\idxscreen}}
		\;=\;&
		0.
	\end{split}
\end{equation*}
It is easy to verify that \(\dualregslope(\subdiffvec')\leq 1\).
Applying \Cref{lemma:sudiff property} then leads to
\begin{equation}
	\subdiffvec^\star_{\entry{\idxscreen}}\pvopt_{\entry{\idxscreen}}\geq 0.
\end{equation}
Since \(\pvopt_{\entry{\idxscreen}}\) is assumed to be nonzero, we then have
\begin{equation}
	\label{eq:complementary slackness}
	\mathrm{sign}\big(\subdiffvec^\star_{\entry{\idxscreen}}\big)\,\mathrm{sign}\big(\pvopt_{\entry{\idxscreen}}\big)\geq 0,
\end{equation}
where the equality holds if and only if \(\subdiffvec^\star_{\entry{\idxscreen}}=0\).

Second, let us consider the following choice for \(\subdiffvec'\in\kR^\tarvecdim\):
\begin{equation}\label{eq:def g' 2}
	\begin{split}
		\subdiffvec'_{\entry{\idxtarel}}
		\;=\;&
		\subdiffvec^\star_{\entry{\idxtarel}}
		\quad
		\forall\idxtarel\in\intervint{1}{\pvdim}\setminus\{\idxscreen\}, \\
		%
		\subdiffvec'_{\entry{\idxscreen}}
		\;=\;&
		\subdiffvec^{\star}_{\entry{\idxscreen}}+s\delta,
	\end{split}
\end{equation}
where
\begin{equation}
	\label{eq:def s}
	s \triangleq
	\begin{cases}
		\mathrm{sign}\big(\subdiffvec^\star_{\entry{\idxscreen}}\big) & \text{ if } \subdiffvec^\star_{\entry{\idxscreen}} \neq 0 \\
		\mathrm{sign}\big(\pv^\star_{\entry{\idxscreen}}\big)
		                                                              & \text{ otherwise,}
	\end{cases}
\end{equation}
and \(\delta\) is any nonnegative scalar such that
\begin{equation} \label{eq:g' constraint 1}
	\dualregslope(\subdiffvec') \leq  1
	.
	%
\end{equation}
On the one hand, we note that \eqref{eq:g' constraint 1} is verified for $\delta=0$.
On the other hand, it can be seen that \eqref{eq:g' constraint 1} is violated as soon as $\delta>0$ by using the following arguments.
%
First, applying \Cref{lemma:sudiff property} with $\subdiffvec'$ defined as in \eqref{eq:def g' 2} leads to
\begin{equation}
	\label{eq:proof:second application lemma 2}
	-
	s
	\pvopt_{\entry{\idxscreen}}
	\delta
	\geq
	0.
\end{equation}
Second, using \eqref{eq:complementary slackness} and the definition of \(s\) in \eqref{eq:def s}, we must have \(s\pvopt_{\entry{\idxscreen}}>0\).
Hence, satisfying inequality \eqref{eq:g' constraint 1} necessarily implies that \(\delta= 0\).
The contraposition of this result implies:
\begin{equation}\label{eq: strict inequality exists}
	\forall \delta> 0,
	\exists\idxps_0\in\intervint{1}{\pvdim}:\;
	\sum_{\idxtarelsorted=1}^{\idxps_0}|\subdiffvec^\star|_{\sortedentry{\idxtarelsorted}}
	+ \delta
	>
	\sum_{\idxtarelsorted=1}^{\idxps_0} \slopeweight_\idxtarelsorted
\end{equation}
or equivalently
\begin{equation}
	\exists\idxps_0\in\intervint{1}{\pvdim}:\;
	\sum_{\idxtarelsorted=1}^{\idxps_0}|\subdiffvec^\star|_{\sortedentry{\idxtarelsorted}}
	=
	\sum_{\idxtarelsorted=1}^{\idxps_0} \slopeweight_\idxtarelsorted
	.
\end{equation}

Let us next emphasize that the range of values for $\idxps_0$ can be restricted
by choosing some suitable value for $\delta$.
	In particular, define \(\idxps_0'\in\intervint{1}{\pvdim}\) as
	\begin{align} \label{eq:def q0'}
		\idxps_0'
		\triangleq
		\min
		\kset{
			\idxps\in\intervint{1}{\pvdim}
		}{
			|\subdiffvec^\star_{\entry{\idxscreen}}| = |\subdiffvec^\star|_{\sortedentry{\idxps}}
		}
	\end{align}
and let
\begin{equation} \label{eq:condition on delta 2}
	0
	< \delta
	< |\subdiffvec^\star|_{\sortedentry{\idxps_0'-1}}-|\subdiffvec^\star|_{\sortedentry{\idxps_0'}}
\end{equation}
with the convention \(\subdiffvec^\star_{\sortedentry{0}}=+\infty\).
Considering $\subdiffvec'$ as defined in \eqref{eq:def g' 2} with $\delta$ satisfying~\eqref{eq:condition on delta 2}, we have that the first $\idxps_0'-1$ largest absolute elements of $\subdiffvec'$ and $\subdiffvec^\star$ are the same. Since $\dualregslope(\subdiffvec^\star)\leq1$, the inequality in the right-hand side of \eqref{eq: strict inequality exists} can therefore not be verified for $\idxps_0\in\intervint{1}{\idxps_0'-1}$. Hence, considering $\delta$ as in \eqref{eq:condition on delta 2}, we have
\begin{equation}
	\exists\idxps_0\in\intervint{\idxps_0'}{\pvdim}:\;
	\sum_{\idxtarelsorted=1}^{\idxps_0}|\subdiffvec^\star|_{\sortedentry{\idxtarelsorted}}
	=
	\sum_{\idxtarelsorted=1}^{\idxps_0} \slopeweight_\idxtarelsorted
	.
\end{equation}

We finally obtain our original assertion \eqref{proof:eq:target statement} by using the definition of \(\subdiffvec^\star\) in \eqref{eq:fermatsrule} and the fact that
\begin{equation}
	\sum_{\idxtarelsorted=1}^{\idxps_0}\adjustedvbar\ktranspose{\dicomat}\dvopt\adjustedvbar_{\sortedentry{\idxtarelsorted}}
	=
	\adjustedvbar\ktranspose{\atom}_{\column{\idxscreen}} \dvopt\adjustedvbar + \sum_{\idxtarelsorted=1}^{\idxps_0-1}\adjustedvbar\ktranspose{\dicomat}_{\backslash\idxscreen}\dvopt\adjustedvbar_{\sortedentry{\idxtarelsorted}}
\end{equation}
since $|\ktranspose{\atom}_{\column{\idxscreen}} \dvopt|=|\ktranspose{\dicomat} \dvopt|_{\sortedentry{\idxps_0'}}$ by definition of $\idxps_0'$ in \eqref{eq:def q0'} and $|\ktranspose{\dicomat} \dvopt|_{\sortedentry{\idxps_0'}}\geq |\ktranspose{\dicomat}\dvopt|_{\sortedentry{\idxps_0}}$ by definition of $\idxps_0\geq\idxps_0'$.\\

\subsection[Proof of Lemma~\ref{lemma:upper bound}]{Proof of \Cref{lemma:upper bound}}\label{sec:Proof of lemma:upper bound}

We first state and prove the following technical lemma:

\begin{lemma}
	\label{lemma:ordered_inequality}
	Let \(\bfg\in\kR^\tarvecdim\) and \(\bfh\in\kR^\tarvecdim\) be such that \(\bfg_{\entry{\idxtarel}}\leq \bfh_{\entry{\idxtarel}}\)\(\,\forall \idxtarel\in\intervint{1}{\pvdim}\).
	Then
	\begin{equation}
		\mbox{\(\bfg_{\sortedentry{\idxtarelsorted}}\leq \bfh_{\sortedentry{\idxtarelsorted}}\) \(\,\forall \idxtarelsorted\in\intervint{1}{\pvdim} \).}
	\end{equation}
\end{lemma}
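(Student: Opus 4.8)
The plan is to prove the conclusion $\bfg_{\sortedentry{\idxtarelsorted}}\leq \bfh_{\sortedentry{\idxtarelsorted}}$ one value of $\idxtarelsorted$ at a time, by a short counting argument phrased so as to remain valid in the presence of repeated entries. The only ingredients I would use are two elementary reformulations of the definition of the sorted entries: for any $\bfv\in\kR^\pvdim$ and any $\idxtarelsorted\in\intervint{1}{\pvdim}$, (i) there exists a set of $\idxtarelsorted$ distinct indices $\idxtarel$ with $\bfv_{\entry{\idxtarel}}\geq \bfv_{\sortedentry{\idxtarelsorted}}$ (namely, indices realizing the values $\bfv_{\sortedentry{1}},\dots,\bfv_{\sortedentry{\idxtarelsorted}}$), and (ii) there is no set of $\idxtarelsorted$ distinct indices $\idxtarel$ with $\bfv_{\entry{\idxtarel}}> \bfv_{\sortedentry{\idxtarelsorted}}$, since otherwise the \(\idxtarelsorted\)th largest component of $\bfv$ would itself be strictly larger than $\bfv_{\sortedentry{\idxtarelsorted}}$.

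First I would fix $\idxtarelsorted\in\intervint{1}{\pvdim}$ and assume, towards a contradiction, that $\bfg_{\sortedentry{\idxtarelsorted}}> \bfh_{\sortedentry{\idxtarelsorted}}$. Using (i) for $\bfg$, pick a set $\calJ\subseteq\intervint{1}{\pvdim}$ with $\card[\calJ]=\idxtarelsorted$ and $\bfg_{\entry{\idxtarel}}\geq \bfg_{\sortedentry{\idxtarelsorted}}$ for all $\idxtarel\in\calJ$. The componentwise hypothesis $\bfg_{\entry{\idxtarel}}\leq\bfh_{\entry{\idxtarel}}$ then gives, for every $\idxtarel\in\calJ$, $\bfh_{\entry{\idxtarel}}\geq \bfg_{\entry{\idxtarel}}\geq \bfg_{\sortedentry{\idxtarelsorted}}> \bfh_{\sortedentry{\idxtarelsorted}}$, so $\calJ$ is a set of $\idxtarelsorted$ distinct indices at which the components of $\bfh$ strictly exceed $\bfh_{\sortedentry{\idxtarelsorted}}$ --- contradicting (ii) for $\bfh$. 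Hence $\bfg_{\sortedentry{\idxtarelsorted}}\leq \bfh_{\sortedentry{\idxtarelsorted}}$, and since $\idxtarelsorted$ was arbitrary the lemma follows.

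I do not expect a genuine obstacle here: the argument is essentially a pigeonhole count, and the only point that needs a little care is keeping it correct when $\bfg$ or $\bfh$ has ties, which is exactly why I would state (i)--(ii) in the ``cardinality of a level set'' form rather than through a strictly sorting permutation. As an alternative route, I would note that both sorted values admit the variational description $\bfv_{\sortedentry{\idxtarelsorted}}=\max\{\,\min_{\idxtarel\in\calJ}\bfv_{\entry{\idxtarel}} \,:\, \calJ\subseteq\intervint{1}{\pvdim},\ \card[\calJ]=\idxtarelsorted\,\}$, after which the claim is immediate from the monotonicity of $\min$ and $\max$ under $\bfg_{\entry{\idxtarel}}\leq\bfh_{\entry{\idxtarel}}$; I would keep the counting proof as the main one and mention this only as a remark.
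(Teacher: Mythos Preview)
Your proof is correct. Interestingly, the route you relegate to a closing remark --- the variational description $\bfv_{\sortedentry{\idxtarelsorted}}=\max_{\calJ:\,\card[\calJ]=\idxtarelsorted}\min_{\idxtarel\in\calJ}\bfv_{\entry{\idxtarel}}$ followed by monotonicity of $\min$ and $\max$ --- is precisely the paper's proof. Your main argument instead proceeds by contradiction via a pigeonhole count on level sets. Both are elementary and short; the paper's max--min route is a bit more compact (three lines, no case split for ties needed once the formula is accepted), while your counting argument has the merit of being self-contained in that it does not rely on first establishing or recalling the variational identity, and it makes the handling of ties explicit. Either would be perfectly acceptable here.
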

\begin{proof}
	Let \(\idxtarelsorted\in\intervint{1}{\pvdim}\).
	We have by definition
	\begin{equation*}
		\begin{split}
			\bfh_{\sortedentry{\idxtarelsorted}}
			&=
			\max_{
				\substack{
					\calJ \subseteq \intervint{1}{\tarvecdim} \\
					\mathrm{card}(\calJ)=\idxtarelsorted}
			}
			\min_{\idxtarel\in\calJ} \bfh_{\entry{\idxtarel}},
			\nonumber \\
			&\geq \max_{
				\substack{
					\calJ \subseteq \intervint{1}{\tarvecdim} \\
					\mathrm{card}(\calJ)=\idxtarelsorted}
			}
			\min_{\idxtarel\in\calJ} \bfg_{\entry{\idxtarel}},\nonumber \\
			&= \bfg_{\sortedentry{\idxtarelsorted}}
			,
		\end{split}
	\end{equation*}
	where the inequality follows from our assumption \(\bfg_{\entry{\idxtarel}}\leq \bfh_{\entry{\idxtarel}}\) \(\forall \idxtarel\in\intervint{1}{\pvdim}\).
\end{proof}

\vspace{1em}
We are now ready to prove \Cref{lemma:upper bound}.
For any \(\idximp\in\intervint{1}{\idxps}\), we can write:
\begin{equation}
	\adjustedvbar\ktranspose{\atom}_{\column{\idxscreen}}\dvopt\adjustedvbar + \sum_{\idxtarelsorted=1}^{\idxps-1} \adjustedvbar\ktranspose{\dicomat}_{\backslash \idxscreen}\dvopt\adjustedvbar_{\sortedentry{\idxtarelsorted}}
	= \adjustedvbar\ktranspose{\atom}_{\column{\idxscreen}}\dvopt\adjustedvbar
	+ \sum_{\idxtarelsorted=1}^{\idximp-1} \adjustedvbar\ktranspose{\dicomat}_{\backslash \idxscreen}\dvopt\adjustedvbar_{\sortedentry{\idxtarelsorted}}
	+ \sum_{\idxtarelsorted=\idximp}^{\idxps-1} \adjustedvbar\ktranspose{\dicomat}_{\backslash \idxscreen}\dvopt\adjustedvbar_{\sortedentry{\idxtarelsorted}}
	.
\end{equation}
First, since \(\dvopt\) is dual feasible, we have:
\begin{equation}
	\label{eq:proof:upper bound 1}
	\sum_{\idxtarelsorted=1}^{\idximp-1}
	\adjustedvbar\ktranspose{\dicomat}_{\setminus\idxscreen} \dvopt\adjustedvbar_{\sortedentry{\idxtarelsorted}}
	\leq
	\lambda
	\sum_{\idxtarelsorted=1}^{\idximp-1}
	\slopeweight_{\idxtarelsorted}
	.
\end{equation}
We next show that if \(\dvopt\in\sphereregion(\spherec,\spherer)\), then
\begin{equation}
	\label{eq:proof:upper bound 2}
	\adjustedvbar\ktranspose{\atom}_{\column{\idxscreen}}\dvopt\adjustedvbar
	+ \sum_{\idxtarelsorted=\idximp}^{\idxps-1} \adjustedvbar\ktranspose{\dicomat}_{\backslash \idxscreen}\dvopt\adjustedvbar_{\sortedentry{\idxtarelsorted}} \leq
	\adjustedvbar{
	\ktranspose{\atom}_{\column{\idxscreen}}\spherec
	}\adjustedvbar
	+ \sum_{\idxtarelsorted=\idximp}^{\idxps-1} \adjustedvbar{
		\ktranspose{\dicomat}_{\backslash \idxscreen}\spherec
	}\adjustedvbar_{\sortedentry{\idxtarelsorted}}
	+
	(\idxps-\idximp+1)
	\spherer
	.
\end{equation}
We then obtain the result stated in the lemma by combining \eqref{eq:proof:upper bound 1}-\eqref{eq:proof:upper bound 2}.

Inequality \eqref{eq:proof:upper bound 2} can be shown as follows.
First,
\begin{equation}
	\forall \idxtarel\in\intervint{1}{\tarvecdim}: \max_{\dv\in\sphereregion(\spherec,\spherer)}|\ktranspose{\atom}_{\column{\idxtarel}}\dv| = |\ktranspose{\atom}_{\column{\idxtarel}}\spherec|+ \spherer.
\end{equation}
Hence,
\begin{equation}
	\kparen{\max_{\dv\in\sphereregion(\spherec,\spherer)}\adjustedvbar\ktranspose{\dicomat}_{\backslash \ell}\dv\adjustedvbar}_{\sortedentry{\idxtarelsorted}} = \adjustedvbar\ktranspose{\dicomat}_{\backslash \ell}\spherec\adjustedvbar_{\sortedentry{\idxtarelsorted}}+ \spherer
\end{equation}
where the maximum is taken component-wise in the left-hand side of the equation.
Applying \Cref{lemma:ordered_inequality}
with \(\bfg = |\ktranspose{\dicomat}_{\backslash \ell}\dv|\) and \(\bfh = \max_{\tilde{\dv}\in\sphereregion(\spherec,\spherer)}|\ktranspose{\dicomat}_{\backslash \ell}\tilde{\dv}|\), we have
\begin{equation}
	\label{eq:inequality max applying lemma}
	\forall \dv\in\sphereregion(\spherec,\spherer):\
	\adjustedvbar\ktranspose{\dicomat}_{\backslash \ell}\dv\adjustedvbar_{\sortedentry{\idxtarelsorted}}
	\leq
	\kparen{\max_{\tilde{\dv}\in\sphereregion(\spherec,\spherer)}\adjustedvbar\ktranspose{\dicomat}_{\backslash \ell}\tilde{\dv}\adjustedvbar}_{\sortedentry{\idxtarelsorted}}
\end{equation}
and therefore
\begin{equation}
	\max_{\dv\in\sphereregion(\spherec,\spherer)}\kparen{\adjustedvbar\ktranspose{\dicomat}_{\backslash \ell}\dv\adjustedvbar_{\sortedentry{\idxtarelsorted}}}
	\leq
	\kparen{\max_{\dv\in\sphereregion(\spherec,\spherer)}\adjustedvbar\ktranspose{\dicomat}_{\backslash \ell}\dv\adjustedvbar}_{\sortedentry{\idxtarelsorted}}.
\end{equation}
Combining these results leads to
\begin{equation*}
	\begin{split}
		\adjustedvbar\ktranspose{\atom}_{\column{\idxscreen}}\dvopt\adjustedvbar
		+ \sum_{\idxtarelsorted=\idximp}^{\idxps-1} \adjustedvbar\ktranspose{\dicomat}_{\backslash \idxscreen}\dvopt\adjustedvbar_{\sortedentry{\idxtarelsorted}}
		&\leq \max_{\dv\in\sphereregion(\spherec,\spherer)}
		\kparen{
		\adjustedvbar\ktranspose{\atom}_{\column{\idxscreen}}\dv\adjustedvbar
		+ \sum_{\idxtarelsorted=\idximp}^{\idxps-1} \adjustedvbar\ktranspose{\dicomat}_{\backslash \idxscreen}\dv\adjustedvbar_{\sortedentry{\idxtarelsorted}}
		}\\
		&\leq \max_{\dv\in\sphereregion(\spherec,\spherer)}
		\adjustedvbar\ktranspose{\atom}_{\column{\idxscreen}}\dv\adjustedvbar
		+ \sum_{\idxtarelsorted=\idximp}^{\idxps-1} \max_{\dv\in\sphereregion(\spherec,\spherer)}  \kparen{\adjustedvbar\ktranspose{\dicomat}_{\backslash \idxscreen}\dv\adjustedvbar_{\sortedentry{\idxtarelsorted}}
		}\\
		&\leq \max_{\dv\in\sphereregion(\spherec,\spherer)}
		\adjustedvbar\ktranspose{\atom}_{\column{\idxscreen}}\dv\adjustedvbar
		+ \sum_{\idxtarelsorted=\idximp}^{\idxps-1}  \kparen{\max_{\dv\in\sphereregion(\spherec,\spherer)}\adjustedvbar\ktranspose{\dicomat}_{\backslash \idxscreen}\dv\adjustedvbar}_{\sortedentry{\idxtarelsorted}
		}\\
		&\leq
		\adjustedvbar{
		\ktranspose{\atom}_{\column{\idxscreen}}\spherec
		}\adjustedvbar
		+ \sum_{\idxtarelsorted=\idximp}^{\idxps-1} \adjustedvbar{
			\ktranspose{\dicomat}_{\backslash \idxscreen}\spherec
		}\adjustedvbar_{\sortedentry{\idxtarelsorted}}
		+
		(\idxps-\idximp+1)
		\spherer.
	\end{split}
\end{equation*}

\subsection[Proof of Lemma~\ref{lemma:optimality p=0 for LASSO}]{Proof of \Cref{lemma:optimality p=0 for LASSO}}
\label{sec:Proof of lemma:optimality p=0 for LASSO}

We want to show that if test \eqref{eq: general safe screening for SLOPE} is passed for some \(\{\idximp_\idxps\}_{\idxps\in\intervint{1}{\tarvecdim}}\), then test \eqref{eq: safe screening for SLOPE p=0} is also passed when \(\slopeweight_\idxtarelsorted=1\) \(\forall\idxtarelsorted\in\intervint{1}{\tarvecdim}\).

Assume \eqref{eq: general safe screening for SLOPE} holds for some \(\{\idximp_\idxps\}_{\idxps\in\intervint{1}{\tarvecdim}}\), that is \(\forall\idxps\in\intervint{1}{\tarvecdim}\), \(\exists \idximp_\idxps\in\intervint{1}{\idxps}\) such that
\begin{equation}
	\label{eq:buf:inquality proof2 dirty}
	\adjustedvbar{
	\ktranspose{\atom}_{\column{\idxscreen}}\spherec
	}\adjustedvbar
	+ \sum_{\idxtarelsorted=\idximp_\idxps}^{\idxps-1} \adjustedvbar{
		\ktranspose{\dicomat}_{\backslash \idxscreen}\spherec
	}\adjustedvbar_{\sortedentry{\idxtarelsorted}}
	<
	\screenthres_{\idxps,\idximp_\idxps}
	,
\end{equation}
where
\(\screenthres_{\idxps,\idximp} \triangleq
\lambda
\kparen{\sum_{\idxtarelsorted=\idximp}^{\idxps} \slopeweight_\idxtarelsorted}
-
(\idxps-\idximp+1)\spherer
\).
Considering the case ``\(\idxps=1\)'', we have $\idximp_1=1$, $\screenthres_{1,1}=\lambda \slopeweight_1-\spherer$ and \eqref{eq:buf:inquality proof2 dirty} thus particularizes to
\begin{equation}
	\adjustedvbar{
	\ktranspose{\atom}_{\column{\idxscreen}}\spherec
	}\adjustedvbar
	<\lambda \slopeweight_1 - \spherer
	.
\end{equation}
Since \(\slopeweight_\idxtarelsorted=1\) \(\forall\idxtarelsorted\in\intervint{1}{\tarvecdim}\) by hypothesis, the latter inequality is equal to \eqref{eq: safe screening for SLOPE p=0} and the result is proved.\\

\subsection[Proof of Lemma~\ref{lemma:nesting test}]{Proof of \Cref{lemma:nesting test}}
\label{subsec:proof lemma nesting test}

\newcommand{\constantsymb}{C}
\newcommand{\innerprodsymb}{t}
\newcommand{\cumsumsymb}{\sigma}
We prove the result by showing that \(\forall\idxps\in\intervint{1}{\tarvecdim}\) the sequence \(\{\upperbound_{\idxps,\idxscreen}\}_{\idxscreen\in\intervint{1}{\tarvecdim}}\) is non-increasing.
To this end, we first rewrite \(\upperbound_{\idxps,\idxscreen}\) in a slightly different manner, easier to analyze.
Let
\begin{equation}
	\begin{array}{rll}
		\constantsymb_{\idxps,\idximp}
		\triangleq &
		(\idxps-\idximp+1)
		\spherer
		+ \lambda\kparen{\sum_{\idxtarelsorted=1}^{\idximp-1} \slopeweight_{\idxtarelsorted}}
		           & \forall\idxps\in\intervint{1}{\tarvecdim}, \forall\idximp\in\intervint{1}{\idxps}        \\
		\cumsumsymb_\idxps
		\triangleq
		           & \sum_{\idxtarelsorted=1}^\idxps |\ktranspose{\atom}_{\column{\idxtarelsorted}} \spherec|
		           & \forall\idxps\in\intervint{0}{\tarvecdim}
	\end{array}
\end{equation}
with the convention \(\cumsumsymb_0\triangleq0\).
Using these notations and hypothesis~\eqref{eq:WH}, \(\upperbound_{\idxps,\idxscreen}\) can be rewritten as
\begin{align}
	\upperbound_{\idxps,\idxscreen} - \constantsymb_{\idxps,\idximp}
	\;=\; &
	\adjustedvbar\ktranspose{\atom}_{\column{\idxscreen}} \spherec\adjustedvbar
	+ \sum_{\idxtarelsorted=1}^{\idxps-1} \adjustedvbar{
		\ktranspose{\dicomat}_{\setminus \idxscreen}\spherec
	}\adjustedvbar_{\entry{\idxtarelsorted}}
	- \sum_{\idxtarelsorted=1}^{\idximp-1} \adjustedvbar{
		\ktranspose{\dicomat}_{\setminus \idxscreen}\spherec
	}\adjustedvbar_{\entry{\idxtarelsorted}}
	\\
	\;=\; &
	\begin{cases}
		|\ktranspose{\atom}_{\column{\idxscreen}} \spherec| + \cumsumsymb_{\idxps-1} - \cumsumsymb_{\idximp-1} & \text{ if } \idxps < \idxscreen                  \\
		\cumsumsymb_{\idxps} - \cumsumsymb_{\idximp-1}                                                         & \text{ if } \idximp - 1 < \idxscreen \leq \idxps \\
		|\ktranspose{\atom}_{\column{\idxscreen}} \spherec| + \cumsumsymb_{\idxps} - \cumsumsymb_{\idximp}     & \text{ if } \idxscreen \leq \idximp - 1.
	\end{cases}
	\label{eq:sqlp}
\end{align}
We next show that \(\forall \idxps\in\intervint{1}{\tarvecdim}\) the sequence \(\{\upperbound_{\idxps,\idxscreen}\}_{\idxscreen\in\intervint{1}{\tarvecdim}}\) is non-increasing. We first notice that \(\constantsymb_{\idxps,\idximp}\) does not depend on \(\idxscreen\) and we can therefore focus on \eqref{eq:sqlp} to prove our claim.
Using the fact that \(|\ktranspose{\atom}_{\column{\idxscreen}} \spherec| \geq |\ktranspose{\atom}_{\column{\idxscreen+1}} \spherec|\) by hypothesis, we immediately obtain that \(\upperbound_{\idxps,\idxscreen} \geq \upperbound_{\idxps,\idxscreen+1}\) whenever \(\idxscreen\notin\{\idximp-1,\idxps\}\).
We conclude the proof by treating the cases ``\(\idxscreen=\idximp-1\)'' and ``\(\idxscreen=\idxps\)'' separately.
If \(\idxscreen=\idxps\) we have from~\eqref{eq:sqlp}:
\begin{equation}
	\upperbound_{\idxps,\idxscreen+1} - \upperbound_{\idxps,\idxscreen}
	=
	|\ktranspose{\atom}_{\column{\idxps+1}} \spherec| + \cumsumsymb_{\idxps-1} - \cumsumsymb_{\idxps}
	=
	|\ktranspose{\atom}_{\column{\idxps+1}} \spherec| - |\ktranspose{\atom}_{\column{\idxps}} \spherec| \leq 0
	,
\end{equation}
where the last inequality holds true by virtue of~\eqref{eq:WH}.

If \(\idxscreen=\idximp-1\) (and provided that \(\idximp\geq2\)) the same rationale leads to
\begin{equation}
	\upperbound_{\idxps,\idxscreen+1} - \upperbound_{\idxps,\idxscreen}
	=
	|\ktranspose{\atom}_{\column{\idximp}} \spherec|  - |\ktranspose{\atom}_{\column{\idximp-1}} \spherec| \leq 0
	.
\end{equation}
\vspace{0.05cm}

\subsection[Proof of Lemma~\ref{lemma: subset inequality}]{Proof of \Cref{lemma: subset inequality}}\label{proof:lemma: subset inequality}

The necessity of \eqref{eq: threshold-based screening test} can be shown as follows. Assume
\(\ |\ktranspose{\atom}_{\column{\tarvecdim}}\spherec|\geq\thresinequ\) for some \(\thresinequ\in\setinequ\) and let \(\idxps\in\intervint{1}{\tarvecdim}\) be such that \(\thresinequ=\thresinequ_{\idxps,\idximp^\star(\idxps)}\).
From \eqref{eq: meaning r*} we then have
\begin{equation}
	\forall\idximp\in\intervint{1}{\idxps}:\
	|{
	\ktranspose{\atom}_{\column{\tarvecdim}}\spherec
	}|\geq\thresinequ_{\idxps,\idximp}
\end{equation}
and test~\eqref{eq: screening test compressed form} therefore fails.

To prove the sufficiency of \eqref{eq: threshold-based screening test},
let us first notice that the definition of \(\thresinequ_{\idxps,\idximp}\) given in~\eqref{eq:def tau 2} can be naturally extended to any arbitrary couple of indices \(\idxps,\idximp\in\intervint{1}{\pvdim}\), \ie
\begin{equation} \label{eq:def tau 2 extended}
	\forall \idxps,\idximp\in\intervint{1}{\pvdim}:\quad
	\thresinequ_{\idxps,\idximp}
	= \screeningboundfunc(\idximp) - (\screeningboundfunc(\idxps) - \lambda \slopeweight_\idxps) -\spherer
	.
\end{equation}
On the other hand, the index \(\idxps^{(1)}\) has been defined as
\begin{equation} \label{eq:recall def x^(1)}
	\idxps^{(1)}
	\triangleq \idxps^\star(\pvdim)
	=
	\kargmax_{\idxps\in\intervint{1}{\pvdim}} \screeningboundfunc(\idxps) - \lambda \slopeweight_\idxps
	,
\end{equation}
see~\eqref{eq:def q^star(k)} and~\eqref{eq:def q set}.
Combining~\eqref{eq:def tau 2 extended} and~\eqref{eq:recall def x^(1)}, one obtains \(\forall \idximp\in \intervint{1}{\tarvecdim}\):
\begin{equation}
	\thresinequ_{\idxps^{(1)},\idximp}
	=
	\kargmin_{\idxps\in \intervint{1}{\pvdim}} \thresinequ_{\idxps,\idximp}
	.
\end{equation}
In particular, letting \(\idximp=\idximp^{(1)}\), we have
\begin{equation}
	\label{eq:inequality idximp^(1)}
	\forall \idxps\in\intervint{\idximp^{(1)}}{\tarvecdim}:\ \thresinequ_{\idxps^{(1)},\idximp^{(1)}}\leq\thresinequ_{\idxps,\idximp^{(1)}}
	.
\end{equation}
Hence,
\begin{equation}\label{eq: partial screening test}
	|{
	\ktranspose{\atom}_{\column{\tarvecdim}}\spherec
	}|
	<
	\thresinequ_{\idxps^{(1)},\idximp^{(1)}}
	\implies
	\forall \idxps\in\intervint{\idximp^{(1)}}{\tarvecdim}:\
	|{
	\ktranspose{\atom}_{\column{\tarvecdim}}\spherec
	}|
	<
	\thresinequ_{\idxps,\idximp^{(1)}}
	.
\end{equation}
In other words, satisfying the left-hand side of \eqref{eq: partial screening test} implies that test \eqref{eq: screening test compressed form} is verified for each \(\idxps\in\intervint{\idximp^{(1)}}{\tarvecdim}\).

We can apply the same reasoning iteratively to show that \(\forall \idxinequ\in\intervint{1}{\card[\setinequ]}\):
\begin{equation}
	|{
	\ktranspose{\atom}_{\column{\tarvecdim}}\spherec
	}|
	<
	\thresinequ_{\idxps^{(\idxinequ)},\idximp^{(\idxinequ)}}
	\implies
	\forall \idxps\in\intervint{\idximp^{(\idxinequ)}}{\idximp^{(\idxinequ-1)}-1}:\
	|{
	\ktranspose{\atom}_{\column{\tarvecdim}}\spherec
	}|
	<
	\thresinequ_{\idxps,\idximp^{(\idxinequ)}}
	.
\end{equation}
Since \(\idximp^{(\card[\setinequ])}=1\), we obtain that \eqref{eq: threshold-based screening test} implies that \eqref{eq: screening test compressed form} is verified \(\forall\idxps\in\intervint{1}{\tarvecdim}\).\\


\clearpage
\bibliographystyle{siamplain}
\bibliography{references}

\begin{thebibliography}{10}

\bibitem{bach2011book}
{\sc F.~Bach, R.~Jenatton, J.~Mairal, and G.~Obozinski}, {\em {Convex
  optimization with sparsity-inducing norms}}, in {Optimization for Machine
  Learning}, Neural information processing series, {MIT Press}, 2011,
  pp.~19--49, \url{https://doi.org/10.7551/mitpress/8996.003.0004}.

\bibitem{Bach:2012re}
{\sc F.~Bach, R.~Jenatton, J.~Mairal, and G.~Obozinski}, {\em Optimization with
  sparsity-inducing penalties}, Foundations and Trends{\textregistered}in
  Machine Learning, 4 (2012), pp.~1--106,
  \url{https://doi.org/10.1561/2200000015}.

\bibitem{Bao:2020dq}
{\sc R.~Bao, B.~Gu, and H.~Huang}, {\em Fast {OSCAR} and {OWL} with safe
  screening rules}, in Proceedings of the 37th International Conference on
  Machine Learning, 2020.

\bibitem{Bauschke2017}
{\sc H.~H. Bauschke and P.~L. Combettes}, {\em Convex Analysis and Monotone
  Operator Theory in Hilbert Spaces}, Springer International Publishing, 2017,
  \url{https://doi.org/10.1007/978-3-319-48311-5}.

\bibitem{Bogdan2015}
{\sc M.~Bogdan, E.~van~den Berg, C.~Sabatti, W.~Su, and E.~J. Cand{\`{e}}s},
  {\em {SLOPE}{\textemdash}adaptive variable selection via convex
  optimization}, The Annals of Applied Statistics, 9 (2015), pp.~1103--1140,
  \url{https://doi.org/10.1214/15-aoas842}.

\bibitem{Bogdan2013statistical}
{\sc M.~Bogdan, E.~van~den Berg, W.~Su, and E.~Candes}, {\em Statistical
  estimation and testing via the sorted l1 norm}, 2013,
  \url{https://arxiv.org/abs/1310.1969}.

\bibitem{Bondell2007}
{\sc H.~D. Bondell and B.~J. Reich}, {\em Simultaneous regression shrinkage,
  variable selection, and supervised clustering of predictors with {OSCAR}},
  Biometrics, 64 (2007), pp.~115--123,
  \url{https://doi.org/10.1111/j.1541-0420.2007.00843.x}.

\bibitem{Boyd2017}
{\sc N.~Boyd, G.~Schiebinger, and B.~Recht}, {\em The alternating descent
  conditional gradient method for sparse inverse problems}, SIAM Journal on
  Optimization, 27 (2017), pp.~616--639.

\bibitem{Brzyski2019dq}
{\sc D.~Brzyski, A.~Gossmann, W.~Su, and M.~Bogdan}, {\em Group {SLOPE}
  --adaptive selection of groups of predictors}, Journal of the American
  Statistical Association, 114 (2019), pp.~419--433,
  \url{https://doi.org/10.1080/01621459.2017.1411269}.

\bibitem{Bu2019Algo}
{\sc Z.~Bu, J.~Klusowski, C.~Rush, and W.~Su}, {\em Algorithmic analysis and
  statistical estimation of {SLOPE} via approximate message passing}, in
  Advances in Neural Information Processing Systems 32, Curran Associates,
  Inc., 2019, pp.~9366--9376.

\bibitem{Chen_siam99}
{\sc S.~Chen, D.~L. Donoho, and M.~A. Saunders}, {\em Atomic decomposition by
  {Basis} {Pursuit}}, SIAM J. Sci. Comp., 20 (1999), pp.~33--61.

\bibitem{Dai2012Ellipsoid}
{\sc L.~Dai and K.~Pelckmans}, {\em An ellipsoid based, two-stage screening
  test for {BPDN}}, in European Signal Processing Conference (EUSIPCO), IEEE,
  August 2012, pp.~654--658.

\bibitem{Dantas2021}
{\sc C.~F. Dantas, E.~Soubies, and C.~F{\'e}votte}, {\em {Expanding Boundaries
  of Gap Safe Screening}}, Journal of Machine Learning Research, 22 (2021),
  pp.~1--57, \url{http://jmlr.org/papers/v22/21-0179.html}.

\bibitem{Davis2015onlogn}
{\sc D.~Davis}, {\em An $o(n\log(n))$ algorithm for projecting onto the ordered
  weighted $\ell_1$ norm ball}.
\newblock arXiv-1505.00870, 2015, \url{https://arxiv.org/abs/1505.00870}.

\bibitem{Dolan2002}
{\sc E.~D. Dolan and J.~J. Mor{\'e}}, {\em Benchmarking optimization software
  with performance profiles}, Mathematical Programming, 91 (2002),
  pp.~201--213, \url{https://doi.org/10.1007/s101070100263}.

\bibitem{elgueddari:hal-02292372}
{\sc L.~El~Gueddari, E.~Chouzenoux, A.~Vignaud, and P.~Ciuciu}, {\em
  {Calibration-less parallel imaging compressed sensing reconstruction based on
  OSCAR regularization}}.
\newblock Preprint hal-02292372, Sept. 2019,
  \url{https://hal.inria.fr/hal-02292372}.

\bibitem{Elvira2020:Squeezing}
{\sc C.~{Elvira} and C.~{Herzet}}, {\em Safe squeezing for antisparse coding},
  {IEEE} Transactions on Signal Processing, 68 (2020), pp.~3252--3265,
  \url{https://doi.org/10.1109/TSP.2020.2995192}.

\bibitem{Elvira:2020rv}
{\sc C.~Elvira and C.~Herzet}, {\em Short and squeezed: Accelerating the
  computation of antisparse representations with safe squeezing}, in IEEE
  International Conference on Acoustics, Speech and Signal Processing (ICASSP),
  2020, pp.~5615--5619, \url{https://doi.org/10.1109/ICASSP40776.2020.9053156}.

\bibitem{Elvira2021techreport}
{\sc C.~Elvira and C.~Herzet}, {\em A response to ``fast {OSCAR} and {OWL}
  regression via safe screening rules'' by bao et al.}, tech. report, October
  2021,
  \url{https://c-elvira.github.io/pdf/tecreports/Elvira2021_responseSLOPE.pdf}.
\newblock Technical report.

\bibitem{fercoq2015}
{\sc O.~Fercoq, A.~Gramfort, and J.~Salmon}, {\em Mind the duality gap: safer
  rules for the {Lasso}}, in Proceedings of the 32nd International Conference
  on Machine Learning, vol.~37 of Proceedings of Machine Learning Research,
  Lille, France, 07--09 Jul 2015, PMLR, pp.~333--342.

\bibitem{Figueiredo2016Ordered}
{\sc M.~Figueiredo and R.~Nowak}, {\em Ordered weighted l1 regularized
  regression with strongly correlated covariates: Theoretical aspects}, in
  Proceedings of the International Conference on Artificial Intelligence and
  Statistics, vol.~51, Cadiz, Spain, May 2016, PMLR, pp.~930--938,
  \url{http://proceedings.mlr.press/v51/figueiredo16.html}.

\bibitem{Foucart2013Mathematical}
{\sc S.~Foucart and H.~Rauhut}, {\em A mathematical introduction to compressive
  sensing.}, Applied and Numerical Harmonic Analysis, Birkha\"{u}ser, 2013,
  \url{http://www.springer.com/birkhauser/mathematics/book/978-0-8176-4947-0}.

\bibitem{Frank_1956}
{\sc M.~Frank and P.~Wolfe}, {\em An algorithm for quadratic programming},
  Naval Research Logistics (NRL), 3 (1956), pp.~95--110.

\bibitem{Ghaoui2010}
{\sc L.~Ghaoui, V.~Viallon, and T.~Rabbani}, {\em Safe feature elimination in
  sparse supervised learning}, Pacific Journal of Optimization, 8 (2010).

\bibitem{Gossmann2018Sparse}
{\sc A.~{Gossmann}, S.~{Cao}, D.~{Brzyski}, L.~J. {Zhao}, H.~W. {Deng}, and
  Y.~P. {Wang}}, {\em A sparse regression method for group-wise feature
  selection with false discovery rate control}, IEEE/ACM Transactions on
  Computational Biology and Bioinformatics, 15 (2018), pp.~1066--1078,
  \url{https://doi.org/10.1109/TCBB.2017.2780106}.

\bibitem{Grossmann:2015Iden}
{\sc A.~Gossmann, S.~Cao, and Y.-P. Wang}, {\em Identification of significant
  genetic variants via {SLOPE}, and its extension to group {SLOPE}}, in
  Proceedings of the 6th ACM Conference on Bioinformatics, Computational
  Biology and Health Informatics, BCB '15, New York, USA, 2015, pp.~232--240,
  \url{https://doi.org/10.1145/2808719.2808743}.

\bibitem{guyard2021screen}
{\sc T.~Guyard, C.~Herzet, and C.~Elvira}, {\em Screen \& relax: Accelerating
  the resolution of {Elastic-Net} by safe identification of the solution
  support}.
\newblock arXiv 2110.07281, December 2021.

\bibitem{Herzet:2019fj}
{\sc C.~Herzet, C.~Dorffer, and A.~Dr{\'e}meau}, {\em Gather and conquer:
  Region-based strategies to accelerate safe screening tests}, IEEE
  Transactions on Signal Processing, 67 (2019), pp.~3300--3315,
  \url{https://doi.org/10.1109/TSP.2019.2914885}.

\bibitem{Herzet16Screening}
{\sc C.~Herzet and A.~Malti}, {\em Safe screening tests for {LASSO} based on
  firmly non-expansiveness}, in IEEE International Conference on Acoustics,
  Speech and Signal Processing (ICASSP), March 2016, pp.~4732--4736,
  \url{https://doi.org/10.1109/ICASSP.2016.7472575}.

\bibitem{Kremer2019fi}
{\sc P.~Kremer, D.~Brzyski, M.~Bogdan, and S.~Paterlini}, {\em Sparse index
  clones via the sorted l1-norm}.
\newblock en. SSRN Scholarly Paper ID 3412061. Rochester, NY: Social Science
  Research Network, June 2019.

\bibitem{Kremer2020hi}
{\sc P.~J. Kremer, S.~Lee, M.~Bogdan, and S.~Paterlini}, {\em Sparse portfolio
  selection via the sorted $\ell_1$-norm}, Journal of Banking \& Finance, 110
  (2020).

\bibitem{Larsson2020strong}
{\sc J.~Larsson, M.~Bogdan, and J.~Wallin}, {\em The strong screening rule for
  {SLOPE}}, 2020, \url{https://arxiv.org/abs/2005.03730}.

\bibitem{Lecue2017reg}
{\sc G.~Lecu{\'e} and S.~Mendelson}, {\em Regularization and the small-ball
  method {I}: sparse recovery}.
\newblock arXiv-1601.05584, 2017.

\bibitem{icml2014c2_liuc14}
{\sc J.~Liu, Z.~Zhao, J.~Wang, and J.~Ye}, {\em Safe screening with variational
  inequalities and its application to {Lasso}}, in Proceedings of the 31st
  International Conference on Machine Learning, JMLR Workshop and Conference
  Proceedings, 2014, pp.~289--297,
  \url{http://jmlr.org/proceedings/papers/v32/liuc14.pdf}.

\bibitem{Luo2019kr}
{\sc Z.~Luo, D.~Sun, K.~Chuan~Toh, and N.~Xiu}, {\em Solving the {OSCAR} and
  {SLOPE} models using a semismooth newton-based augmented lagrangian method},
  Journal of Machine Learning Research, 20 (2019), pp.~1--25.

\bibitem{Ndiaye2017}
{\sc E.~Ndiaye, O.~Fercoq, Alex, re~Gramfort, and J.~Salmon}, {\em Gap safe
  screening rules for sparsity enforcing penalties}, Journal of Machine
  Learning Research, 18 (2017), pp.~1--33,
  \url{http://jmlr.org/papers/v18/16-577.html}.

\bibitem{Ndiaye2021Converging}
{\sc {Ndiaye, Eugene and Fercoq, Olivier and Salmon, Joseph}}, {\em Screening
  rules and its complexity for active set identification}, 28,
  \url{https://www.heldermann.de/JCA/JCA28/jca28.htm#jca284}.

\bibitem{Oswal2018Scalable}
{\sc U.~{Oswal} and R.~{Nowak}}, {\em Scalable sparse subspace clustering via
  ordered weighted l1 regression}, in 2018 56th Annual Allerton Conference on
  Communication, Control, and Computing (Allerton), 2018, pp.~305--312,
  \url{https://doi.org/10.1109/ALLERTON.2018.8635965}.

\bibitem{schneider2020geometry}
{\sc U.~Schneider and P.~Tardivel}, {\em The geometry of uniqueness, sparsity
  and clustering in penalized estimation}, 2020,
  \url{https://arxiv.org/abs/2004.09106}.

\bibitem{Su2016jh}
{\sc W.~Su and E.~Candes}, {\em {SLOPE} is adaptive to unknown sparsity and
  asymptotically minimax}, Ann. Statist., 44 (2016), pp.~1038--1068,
  \url{https://doi.org/10.1214/15-AOS1397}.

\bibitem{RSSB:RSSB1004}
{\sc R.~Tibshirani, J.~Bien, J.~Friedman, T.~Hastie, N.~Simon, J.~Taylor, and
  R.~J. Tibshirani}, {\em Strong rules for discarding predictors in lasso-type
  problems}, Journal of the Royal Statistical Society: Series B (Statistical
  Methodology), 74 (2012), pp.~245--266,
  \url{https://doi.org/10.1111/j.1467-9868.2011.01004.x}.

\bibitem{Le2022HolderDomeTechreport}
{\sc T.-L. Tran, C.~Elvira, H.-P. Dang, and C.~Herzet}, {\em Beyond {GAP}
  screening for {Lasso} by exploiting new dual cutting half-spaces with
  supplementary material}.
\newblock arXiv 2203.00987, Feb. 2022.

\bibitem{wang2015lasso}
{\sc J.~Wang, P.~Wonka, and J.~Ye}, {\em Lasso screening rules via dual
  polytope projection}, Journal of Machine Learning Research,  (2015).

\bibitem{Xiang2012Fast}
{\sc Z.~J. Xiang and P.~J. Ramadge}, {\em Fast lasso screening tests based on
  correlations}, in IEEE International Conference on Acoustics, Speech and
  Signal Processing (ICASSP), IEEE, march 2012, pp.~2137--2140,
  \url{https://doi.org/10.1109/icassp.2012.6288334}.

\bibitem{Xiang:2017ty}
{\sc Z.~J. Xiang, Y.~Wang, and P.~J. Ramadge}, {\em Screening tests for {Lasso}
  problems}, IEEE Transactions on Pattern Analysis and Machine Intelligence, 39
  (2017), pp.~1008--1027, \url{https://doi.org/10.1109/TPAMI.2016.2568185}.

\bibitem{NIPS2011_0578}
{\sc Z.~J. Xiang, H.~Xu, and P.~J. Ramadge}, {\em Learning sparse
  representations of high dimensional data on large scale dictionaries}, in
  Advances in Neural Information Processing Systems, 2011, pp.~900--908,
  \url{http://books.nips.cc/papers/files/nips24/NIPS2011_0578.pdf}.

\bibitem{Xing2014lu}
{\sc X.~Xing, J.~Hu, and Y.~Yang}, {\em Robust minimum variance portfolio with
  l-infinity constraints}, Journal of Banking \& Finance, 46 (2014),
  pp.~107--117, \url{https://doi.org/10.1016/j.jbankfin.2014.05.004}.

\bibitem{Zeng:2014AtomicNorm}
{\sc X.~Zeng and M.~A.~T. Figueiredo}, {\em The atomic norm formulation of
  {OSCAR} regularization with application to the frank-wolfe algorithm}, in
  European Signal Processing Conference (EUSIPCO), 2014, pp.~780--784.

\bibitem{Zhang2018learning}
{\sc D.~Zhang, H.~Wang, M.~Figueiredo, and L.~Balzano}, {\em Learning to share:
  simultaneous parameter tying and sparsification in deep learning}, in
  International Conference on Learning Representations, 2018,
  \url{https://openreview.net/forum?id=rypT3fb0b}.

\bibitem{Zhong2011icml}
{\sc W.~Zhong and J.~Kwok}, {\em Efficient sparse modeling with automatic
  feature grouping}, in Proceedings of the 28th International Conference on
  Machine Learning, ICML '11, New York, USA, June 2011, ACM, pp.~9--16.

\end{thebibliography}
\end{document}